\documentclass{article}

\usepackage{microtype}
\usepackage{graphicx}
\usepackage{subfigure}
\usepackage{booktabs} 

\usepackage{hyperref}

\usepackage[accepted]{icml2026}

\usepackage{amsmath}
\usepackage{amssymb}
\usepackage{mathtools}
\usepackage{amsthm}
\usepackage{bm}
\usepackage{mathrsfs}
 
\usepackage[capitalize,noabbrev]{cleveref}

\theoremstyle{plain}
\newtheorem{theorem}{Theorem}[section]
\newtheorem{proposition}[theorem]{Proposition}
\newtheorem{lemma}[theorem]{Lemma}

\theoremstyle{definition}
\newtheorem{definition}[theorem]{Definition}
\newtheorem{assumption}[theorem]{Assumption}
\theoremstyle{remark}
\newtheorem{remark}[theorem]{Remark}

\usepackage{comment}
\usepackage{mathtools}
\usepackage{amsmath,amsfonts,amssymb,amsthm}
\usepackage{bbm}
\usepackage{enumitem}
\usepackage{cleveref}
\usepackage{tikz}
\usetikzlibrary{calc}
\makeatletter
\newtheorem*{rep@theorem}{\normalfont\bfseries\rep@title}
\newcommand{\newreptheorem}[2]{%
\newenvironment{rep#1}[1]{%
  \par\medskip
 \def\rep@title{#2~\ref{##1}}%
 \begin{rep@theorem}}%
 {\end{rep@theorem}}}
\makeatother
\newreptheorem{theorem}{Theorem}
\newreptheorem{lemma}{Lemma}
\newreptheorem{proposition}{Proposition}

\newcommand\mc{\mathcal}
\newcommand\mb{\mathbb}

\definecolor{noattack-green}{rgb}{0, 0.5, 0}
\definecolor{poisoning-blue}{rgb}{0, 0, 0.5}
\definecolor{byzantine-red}{rgb}{0.5, 0, 0}
\newcommand{\projSGD}{\mathrm{projected}\text{--}\mathrm{SGD}}

\allowdisplaybreaks%

\usepackage[disable,textsize=tiny]{todonotes}

\icmltitlerunning{Byzantine Failures Hurt Generalization More than Data Poisoning}

\begin{document}

\twocolumn[
\icmltitle{Tight Stability Bounds for Robust Distributed Learning: \\Byzantine Failures Hurt Generalization More than Data Poisoning}

\begin{icmlauthorlist}
\icmlauthor{Thomas Boudou}{yyy}
\icmlauthor{Batiste Le Bars}{comp}
\icmlauthor{Nirupam Gupta}{sch}
\icmlauthor{Aurélien Bellet}{yyy}
\end{icmlauthorlist}

\icmlaffiliation{yyy}{PreMeDICaL team, Inria, Idesp, Inserm, Université de Montpellier, Montpellier, France}
\icmlaffiliation{comp}{Univ. Lille, Inria, CNRS, Centrale Lille, UMR 9189, CRIStAL, F-59000 Lille, France}
\icmlaffiliation{sch}{Department of Computer Science, University of Copenhagen, Copenhagen, Danemark}

\icmlcorrespondingauthor{Thomas Boudou}{thomas.boudou@inria.fr}

\icmlkeywords{Machine Learning, ICML}

\vskip 0.3in
]

\printAffiliationsAndNotice{}  

\begin{abstract} 
  Robust distributed learning algorithms aim to maintain reliable performance despite the presence of misbehaving workers. 
  Such misbehaviors are commonly modeled as \textit{Byzantine failures}, allowing arbitrarily corrupted communication, 
  or as \textit{data poisoning}, a weaker form of corruption restricted to local training data.
  While prior work shows similar optimization guarantees for both models, an important question remains: \textit{How do these threat models impact generalization?} 
  We show, for the first time, a fundamental gap in generalization guarantees between the two threat models: Byzantine failures yield strictly worse rates than those achievable under data poisoning.
  Our findings are based upon a tight algorithmic stability analysis of robust distributed learning. 
  Specifically, with $f$ out of $n$ workers misbehaving, we prove that: 
  \textit{(i)} under data poisoning, the uniform algorithmic stability of a robust distributed learning algorithm 
  degrades by an additive factor $\varTheta \big( \frac{f}{n-f} \big)$; whereas 
  \textit{(ii)} under Byzantine failures, the degradation factor is $\Omega \big( \sqrt{ \frac{f}{n-2f}}\, \big)$.
\end{abstract}

\section{Introduction}\label{I-intro}
With the proliferation of large-scale data and pervasive connectivity, distributed learning has evolved into a fundamental paradigm in modern AI~\citep{MAL-083}.
From federated smartphones to geo-distributed datacenters, algorithms now orchestrate millions of workers toward shared objectives. 
Yet, unlike the idealized setting of perfectly cooperative workers, real-world systems face a range of adversarial misbehaviors, broadly categorized under the umbrella of Byzantine failures~\citep{gerraoui-byzantine-primer}.
Encompassing everything from intermittent data corruption and hardware failure to sophisticated attacks, these failures threaten the reliability of distributed learning algorithms.

Robust distributed learning aims to maintain strong learning guarantees despite the presence of such misbehaviors. 
The prevailing approach is to model a fraction of workers as corrupted, either Byzantine---able to send arbitrarily corrupted updates---or poisoned, where corruption is limited to local training data. 
While~\citet{farhadkhani2024relevance} surprisingly showed that both threat models yield similar empirical risk guarantees, experiments consistently observe significantly worse generalization---i.e., higher risk on unseen data---under Byzantine failures~\citep[cf.][and additional discussion in~\Cref{additional-discussions}]{pmlr-v206-allouah23a, karimireddy2022byzantinerobust}.
Whether this gap reflects an inherent difference in adversarial power or only the suboptimality of practical attacks remains unknown.
We address this open problem with the first theoretical analysis of the generalization gap between Byzantine failures and data poisoning.

To this end, we use algorithmic stability~\citep{Bousquet2002StabilityAG} to quantify generalization resilience against both forms of attacks. 
In particular, uniform stability---which measures an algorithm's sensitivity to the replacement of a single training example---offers a worst-case, distribution-free, and algorithm-centric perspective on generalization. This stands in contrast to approaches based on uniform convergence~\citep{vapnik1998statistical}, PAC-Bayes~\citep{mcallester1999pac}, or mutual information~\citep{russo2019much}, which provide less direct insight into algorithm-specific behavior.
This makes stability well-suited for addressing the open question under consideration, as it enables a focused analysis of the potential harm posed by different threat models on the generalization performance of robust learning algorithms. 

\paragraph{Summary of our contributions.} 
We formalize a unified framework to analyze generalization error of robust distributed learning under different threat models (\Cref{II-Background-motivation}).
Our results rigorously expose a significant gap in the generalization guarantees between Byzantine failures and data poisoning. 
Specifically, for convex objective we prove the following:
(1) When $f$ workers among $n$ are Byzantine, the uniform algorithmic stability of robust distributed ($\mathrm{S}$)$\mathrm{GD}$ with $\mathrm{SMEA}$---an optimal high-dimensional robust aggregation rule---degrades with an additive factor $\Omega \big( \sqrt{ \frac{f}{n-2f} } \big)$ for $f\geq\frac{n}{3}$ (\Cref{main-poisonous}).
(2) This factor improves to $\varTheta \big( \frac{f}{n-f} \big)$ when we restrict the misbehavior to poisoned data (\Cref{main-byzantine}). 
\Cref{table-comparaison} summarizes our results.
Lastly, we show that the above difference yields a fundamental generalization gap between the two threat models (\Cref{V-experimental}) by establishing a proportional relationship between stability and generalization (\Cref{th:stability-to-gen}).
This latter result may also be of independent interest, as it identifies a setting in which stability exactly characterizes generalization, enabling lower bounds on generalization via stability arguments.

\begin{table}[t]
\centering
\begin{tabular}{l|c|c}
    Regime & Byzantine failures & Data poisoning \\ \hline
    $f < \frac{n}{3}$ & $\mc{O}(\sqrt{\frac{f}{n-f}})$ & $\varTheta(\frac{f}{n-f})$ \\
    $\frac{n}{3} \leq f < \frac{n}{2+\nu}$ & $\varTheta(\sqrt{\frac{f}{n-f}})$ &  $\varTheta(\frac{f}{n-f})$ \\
    $f \sim \frac{n}{2}$ & $\Omega(\sqrt{\frac{f}{n-2f}})$ & $\varTheta(\frac{f}{n-f})$  \\
\end{tabular}
\caption{
    Uniform stability overhead in different threat models for convex loss functions.
    Here, $\mathcal{O}(\cdot)$ and $\Omega(\cdot)$ denote upper and lower bounds up to absolute constant factors.
}\label{table-comparaison}
\end{table}


\section{Preliminaries}\label{II-Background-motivation}\label{II-problem-formulation}

\paragraph{Problem setting.} 
We consider a distributed setup with a central server and $n$ workers, where up to $f$ (with $f < \frac{n}{2}$) may be subject to either \textit{Byzantine failures} or \textit{data poisoning}, as defined below. 
We refer to such workers as \emph{misbehaving}, unless we need to distinguish between the two attack types.
Their identities are unknown to the server.
While the actual number of misbehaving workers may be less than $f$, we assume the worst-case scenario where exactly $f$ workers are misbehaving. The remaining \emph{honest} workers form the set $\mc{H} \subseteq [n] \vcentcolon= \{1, \ldots, n\}$, with $|\mc{H}| = n-f$.
Each honest worker $i \in \mc{H}$ holds a local dataset $\mc{D}_i = \{z^{(i,1)}, \ldots, z^{(i,m)} \}$ composed of $m$ i.i.d.\ data points (or samples) from an input space $\mathcal{Z}$ drawn from a distribution $p_i$. We denote $\mc{S} = \cup_{i\in\mc{H}}\mc{D}_i$. 
Given a parameter vector $\theta \in \Theta \subset \mb{R}^d$ representing the model, a data point $z \in \mc{Z}$ incurs a differentiable loss defined by a real-valued function $\ell(\theta; z)$. The goal is to minimize the \emph{population} risk over the honest workers, 
\[
    R_{\mc{H}}(\theta) = \frac{1}{|\mc{H}|} \sum\limits_{i\in\mc{H}} \mb{E}_{z \sim p_i}[ \ell(\theta, z) ].
\]
As we only have access to a finite number of samples from each distribution, the above objective can only be solved approximately by minimizing the \emph{empirical} risk over the honest workers, with the added challenge that the honest subgroup is unknown,
\[
    \widehat{R}_{\mc{H}}(\theta) = \frac{1}{|\mc{H}|} \sum\limits_{i\in\mc{H}}\widehat{R}_{i}(\theta) = \frac{1}{|\mc{H}|} \sum\limits_{i\in\mc{H}} \frac{1}{m} \sum\limits_{z \in \mc{D}_i} \ell(\theta, z).
\]
The expected excess population risk of a distributed learning algorithm's output $\mc{A(\mc{S})}$ can be decomposed into the generalization and optimization errors~\citep[e.g.,][Section 5]{hardt2016train} as follows
\begin{multline}\label{gen-error-ineq}
    \textstyle \mb{E} \big[ R_{\mc{H}}(\mc{A(\mc{S})}) - \inf\limits_{\theta \in \Theta} R_{\mc{H}}(\theta) \big]
    \leq \mb{E} \big[ R_{\mc{H}}(\mc{A(\mc{S})}) \\ - \widehat{R}_{\mc{H}}(\mc{A(\mc{S})}) \big]
    + \mb{E} \big[ \widehat{R}_{\mc{H}}(\mc{A(\mc{S})}) - \inf\limits_{\theta \in \Theta} \widehat{R}_{\mc{H}}(\theta) \big]
\end{multline}
Prior work on robust distributed learning has focused primarily on the optimization error, i.e., the second term on the right-hand side. In contrast, our goal is to study the generalization error, corresponding to the first term.
Formally, we define the notion of \textit{robustness} as follows.
\begin{definition}
    An algorithm is said to be $(f, \rho,$ \text{statistical})-resilient if it outputs a parameter $\hat{\theta}$ such that 
    \[
        \mb{E}_{\mc{A}} \big[ R_{\mc{H}}(\hat{\theta}) - \inf_{\theta\in\Theta} R_{\mc{H}}(\theta) \big] \leq \rho.
    \] 
    We say $(f, \rho, \text{empirical})$-resilient when we only consider empirical risk.
\end{definition}
Note that the (standard) notion of $(f, \rho, \text{empirical})$-resilience is impossible in general for any $\rho$ when $f \geq n/2$~\citep{liu2021approximate}, justifying our assumption $f < n/2$.

\textbf{Threat models.} We analyze this resilience property under two standard threat models widely studied in the literature~\citep{farhadkhani2024brief,blanchard-ml-with-adversaries}.

\textit{Byzantine failures.} In this threat model, misbehaving workers can act arbitrarily: they may deviate from the prescribed algorithm, collude, and send adversarial updates to the server while having access to all information exchanged between honest workers and the server. This model follows the classical notion of Byzantine failures in distributed systems introduced in the seminal work of~\citet{lamport1982byzantine}.\looseness=-1
 
\textit{Data poisoning.} In this restricted threat model, misbehaving workers follow the prescribed algorithm but may corrupt their local training data \emph{before} execution.
Specifically, for each misbehaving worker $i \notin \mc{H}$, a local dataset $\mc{D}_i \in \mc{Z}^m$ is adversarially constructed with full knowledge of the honest workers' data distributions $\{ p_i, ~ i \in \mc{H} \}$.

\paragraph{Background on robust distributed optimization.} 
Robust distributed optimization algorithms aim to achieve $(f,\rho,\text{empirical})$-resilience. They are typically adaptations of standard first-order iterative optimization algorithms like gradient descent ($\mathrm{GD}$) or stochastic gradient descent ($\mathrm{SGD}$). 
These algorithms are made resilient by replacing the server-side averaging operator by a robust aggregation rule $F: (\mb{R}^d)^{n} \to \mb{R}^d$~\citep{gerraoui-byzantine-primer}.
Formally, given a learning rate $\gamma$, the parameter $\theta_t$ at iteration $t \in \{0, \ldots, T-1\}$ is updated as follows
\begin{equation}\label{eq:update}
    \theta_{t+1} = G^F_{\gamma}(\theta_t) := \theta_t - \gamma F(g^{(1)}_t, \ldots, g^{(n)}_t),
\end{equation}
where $g^{(i)}_t$ denotes the update sent by worker $i$ at iteration $t$. For an honest worker $i \in \mc{H}$, we have $g^{(i)}_t = \nabla \widehat{R}_i (\theta_t)$ under $\mathrm{GD}$, or $g^{(i)}_t = \nabla \ell (\theta_t; z^{(i)}_t)$ under $\mathrm{SGD}$, with $z^{(i)}_t$ uniformly sampled from the local dataset $\mc{D}_i$. 
For a misbehaving worker $i \notin \mc{H}$, $g^{(i)}_t$ is either an arbitrary vector in $\mathbb{R}^d$ (Byzantine failures) or a gradient computed on corrupted data (data poisoning).\looseness=-1

Recent advances in robust distributed optimization have identified key properties sufficient for a robust aggregation rule to ensure $(f,\rho, \text{empirical})$-resilience~\citep{pmlr-v206-allouah23a,pmlr-v202-allouah23a}. 
The following definition encompasses a wide range of aggregation rules and has been shown to yield tight resilience guarantees across a variety of practical distributed learning settings.
We denote by $\| \cdot \|_2$ the Euclidean norm.
\begin{definition}\label{robustness-definition}
    Let $n \geq 1$, $0 \leq f < n/2$ and $\kappa \geq 0$. An aggregation rule $F$ is said $(f, \kappa)$-robust if for any $g_1, \ldots, g_n \in \mb{R}^d$, 
    any set $S \subset [n]$ of size $n-f$, with $\overline{g}_S = \frac{1}{|S|} \sum_{i \in S} g_i$ and $\Sigma_{S} = \frac{1}{|S|} \sum_{i \in S} (g_i-\overline{g}_S) {(g_i-\overline{g}_S)}^{\intercal}$, we have
    \[
        \| F(g_1, \ldots, g_n) - \overline{g}_S \|_2^2 \leq \kappa \|\Sigma_S\|_{\text{sp}}.
    \]
    Here, $\|\cdot\|_{\text{sp}}$ denotes the spectral norm (i.e., the largest eigenvalue), $f$ and $\kappa$ are referred to as the \textit{robustness parameter} and \textit{robustness coefficient} of $F$, respectively.
\end{definition}
Any aggregation rule $F$ that is $(f,\mathcal{O}(f/n))$-robust achieves optimal $(f,\rho,\text{empirical})$-resilience.
Specifically, iterative methods employing such a rule attain an optimization error that matches the information-theoretic lower bound for first-order optimization in the presence of $f$ misbehaving workers~\citep{farhadkhani2024relevance}.
One such optimal rule is SMEA, introduced by \citet{pmlr-v202-allouah23a}.
\begin{definition}\label{smea-definition}
    Given $f, n \in \mb{N}$, $f < n/2$ and vectors $g_1, \ldots, g_n \in \mb{R}^d$,
    the smallest maximum eigenvalue averaging aggregation rule (SMEA) outputs the average of the $n-f$ gradients that are most directionally consistent, discarding outliers based on covariance,
    \[
        \overline{g}_{S^*} = \frac{1}{|S^*|} \sum_{i\in S^*} g_{i},
        \quad\text{with}\quad S^* \in \underset{\substack{{S \subseteq \{1, \ldots, n\},}\\{|S|=n-f}}}{\arg\!\min} \|\Sigma_S\|_{\text{sp}}.
    \]
\end{definition}
However, the above results focus on the optimization error, overlooking the impact of robust aggregation rules on the generalization error---the first term in the right-hand side of~\eqref{gen-error-ineq}.
To address this gap, we provide a rigorous analysis of the generalization guarantees of robust distributed learning, focusing on the $\mathrm{SMEA}$ aggregation rule as it yields optimal optimization guarantees.

\paragraph{Stability under misbehaving workers.} 
We leverage the algorithmic stability framework  to bound the generalization error of robust distributed learning algorithms. 
We do so via uniform stability, which measures how sensitive a learning algorithm is to changes in its training data. 
This approach was introduced by~\citet{vapnik74theory,RogersWagner78,DevroyeWagner79} and later popularized by~\citet{Bousquet2002StabilityAG, JMLR:v11:shalev-shwartz10a, hardt2016train}.
We revisit this framework to study generalization in robust distributed learning, where the change is only in the honest workers' data.

\begin{definition}\label{uniform-stability-def}
    Consider $n$ workers, with $f$ misbehaving and $n-f$ honest, each holding $m$ local samples. 
    A distributed algorithm $\mc{A}$ is said $\varepsilon$-uniformly stable if, for all honest datasets $\mc{S}, \mc{S}' \in \mc{Z}^{(n-f)m}$ that are \textit{neighboring}—i.e., differing in at most one sample—we have
    \[
        \sup_{z \in \mc{Z}} \mb{E}[\ell(\mc{A}(\mc{S}); z) - \ell(\mc{A}(\mc{S}'); z)] \leq \varepsilon,
    \]
    where the expectation is over the randomness of $\mc{A}$.
\end{definition}
This property leads to the following bound on the generalization error, proved in~\Cref{gen-and-Byzantine}.
\begin{proposition}\label{lemmagen1}
    If $\mc{A}$ is $\varepsilon$-\textit{uniformly stable}, then 
    \[
        | \mb{E}_{\mc{S}, \mc{A}}[ R_{\mc{H}}(\mc{A}(\mc{S})) - \widehat{R}_{\mc{H}}(\mc{A}(\mc{S})) ] | \leq \varepsilon.
    \]
\end{proposition}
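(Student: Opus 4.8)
The plan is to adapt the classical symmetrization argument of Bousquet--Elisseeff for uniform stability to the present setting, where the ``training set'' consists only of the honest workers' data $\mc{S}$, while the misbehaving workers' contributions are held fixed. First I would introduce a ``ghost sample'': an independent copy $\mc{S}' = \cup_{i \in \mc{H}} \mc{D}_i'$, where each $\mc{D}_i' = \{z'^{(i,1)}, \dots, z'^{(i,m)}\}$ is drawn i.i.d.\ from the same $p_i$ as $\mc{D}_i$, independently of $\mc{S}$ and of the randomness of $\mc{A}$. Then $\mb{E}_{\mc{S}, \mc{A}}[\hat R_{\mc{H}}(\mc{A}(\mc{S}))] = \frac{1}{(n-f)m}\sum_{i\in\mc{H}}\sum_{k=1}^{m}\mb{E}_{\mc{S},\mc{A}}[\ell(\mc{A}(\mc{S}); z^{(i,k)})]$, and since $z'^{(i,k)}$ has the same distribution as $z^{(i,k)}$ and is independent of $\mc{S}$, we have $\mb{E}_{\mc{S},\mc{S}',\mc{A}}[\ell(\mc{A}(\mc{S}); z'^{(i,k)})] = \mb{E}_{\mc{S},\mc{A}}[R_{\mc{H}}(\mc{A}(\mc{S}))]$ after also averaging the population risk over $i \in \mc{H}$ (using that $R_{\mc{H}}(\theta) = \frac{1}{n-f}\sum_{i\in\mc{H}} \mb{E}_{z\sim p_i}[\ell(\theta,z)]$). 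Hence the generalization gap equals
\[
\mb{E}[R_{\mc{H}}(\mc{A}(\mc{S})) - \hat R_{\mc{H}}(\mc{A}(\mc{S}))] = \frac{1}{(n-f)m}\sum_{i\in\mc{H}}\sum_{k=1}^m \mb{E}\big[\ell(\mc{A}(\mc{S}); z'^{(i,k)}) - \ell(\mc{A}(\mc{S}); z^{(i,k)})\big].
\]

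Next, for each fixed honest worker $i$ and index $k$, I would define the swapped dataset $\mc{S}^{(i,k)}$ obtained from $\mc{S}$ by replacing the single sample $z^{(i,k)}$ with $z'^{(i,k)}$; this is a neighboring dataset of $\mc{S}$ in the sense of \Cref{uniform-stability-def}, since $\mc{S}, \mc{S}^{(i,k)} \in \mc{Z}^{(n-f)m}$ differ in exactly one honest sample. By a renaming of the i.i.d.\ variables $z^{(i,k)} \leftrightarrow z'^{(i,k)}$ (which leaves the joint distribution invariant), $\mb{E}[\ell(\mc{A}(\mc{S}); z'^{(i,k)})] = \mb{E}[\ell(\mc{A}(\mc{S}^{(i,k)}); z^{(i,k)})]$. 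Therefore each summand becomes $\mb{E}[\ell(\mc{A}(\mc{S}^{(i,k)}); z^{(i,k)}) - \ell(\mc{A}(\mc{S}); z^{(i,k)})]$, which is upper bounded in absolute value by $\sup_{z\in\mc{Z}}\mb{E}_{\mc{A}}[\ell(\mc{A}(\mc{S}^{(i,k)}); z) - \ell(\mc{A}(\mc{S}); z)] \le \varepsilon$ by the definition of $\varepsilon$-uniform stability (applied conditionally on the realization of all the data, then taking the outer expectation). Averaging over the $(n-f)m$ terms and using the triangle inequality gives $|\mb{E}[R_{\mc{H}}(\mc{A}(\mc{S})) - \hat R_{\mc{H}}(\mc{A}(\mc{S}))]| \le \varepsilon$, which is the claim.

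The only mild subtlety — and the step I would be most careful about — is bookkeeping the role of the misbehaving workers and of $\mc{A}$'s internal randomness. Under data poisoning, the corrupted datasets $\mc{D}_j$, $j\notin\mc{H}$, are fixed (adversarially chosen as functions of the $p_i$, not of $\mc{S}$), so they play no part in the symmetrization and simply ride along inside $\mc{A}(\cdot)$; under Byzantine failures the adversarial messages may depend on the exchanged information, but \Cref{uniform-stability-def} already quantifies over \emph{all} neighboring honest datasets with whatever misbehavior is in force, so the bound $\varepsilon$ applies verbatim to each pair $(\mc{S}, \mc{S}^{(i,k)})$. The independence of the ghost sample from $\mc{A}$'s randomness ensures the renaming argument and the conditioning are legitimate; Fubini/Tonelli justifies interchanging the sum over $(i,k)$ with the expectations. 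No convexity or smoothness of $\ell$ is needed here — this proposition is purely a consequence of the stability definition and the i.i.d.\ structure of the honest data, mirroring Theorem 11 of \cite{Bousquet2002StabilityAG}.
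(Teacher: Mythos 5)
Your proposal is correct and follows essentially the same route as the paper's proof in \Cref{gen-and-Byzantine}: introduce an independent ghost sample $\mc{S}'$, use exchangeability of $z^{(i,k)}$ and $z'^{(i,k)}$ to rewrite the generalization gap as an average of differences $\mb{E}[\ell(\mc{A}(\mc{S}^{(i,k)});\cdot)-\ell(\mc{A}(\mc{S});\cdot)]$ over neighboring datasets, and bound each term by $\varepsilon$ via the stability definition. The only cosmetic difference is the direction of the renaming (you swap inside the population-risk term, the paper swaps inside the empirical-risk term), which changes nothing.
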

The link between uniform stability and generalization can be further strengthened to yield high-probability generalization bounds, as discussed in \Cref{IV-excessrisk}.
We can now formalize a relationship between empirical resilience and statistical resilience through stability: any algorithm that is $(f, \rho, \text{empirical})$-resilient and $\varepsilon$-uniformly stable is $(f, \rho + \varepsilon, \text{statistical})$-resilient. 
Here, $\varepsilon$ quantifies the effect of misbehaving workers on generalization, and is the central focus of this paper.

\section{(In)stability of Robust Distributed Learning}\label{III-main}

In this section, we first derive upper bounds on the stability of robust distributed $(\mathrm{S})\mathrm{GD}$ under general $(f, \kappa)$-robust aggregation rules. 
We next analyze $\mathrm{SMEA}$ to derive lower bounds under Byzantine failures, as well as tight upper and lower bounds for the special case of data poisoning.
Our results show a fundamental gap in stability between the two threat models. 
Definitions of function regularity are given in~\Cref{loss-regularities}.

\subsection{The Case of Byzantine Failures}\label{main-byzantine}

\textbf{Upper bounds.} We start by deriving uniform stability upper bounds for convex optimization. Proofs are deferred to~\Cref{stability-analysis-classic-byzantine}.
\begin{theorem}\label{byz-sgd-ub-convex}
    Consider the setting described in~\Cref{II-problem-formulation} under Byzantine failures.
    Let $\mc{A} \in \{\mathrm{GD}, \mathrm{SGD}\}$ with a $(f, \kappa)$-robust aggregation rule $F$.
    Suppose $\forall z \in \mc{Z}$, $\ell(\cdot;z)$ $C$-Lipschitz and $L$-smooth, and $\mc{A}$ is run for $T\in\mb{N}^*$ iterations with $\gamma \leq \frac{1}{L}$.
    \begin{enumerate}[label=\roman*, topsep=-3pt, itemsep=2pt, parsep=0pt, leftmargin = 23pt]
        \item[\textit{(i)}] If $\ell(\cdot;z)$ is convex $\forall z \in \mc{Z}$, then the uniform stability of $\mc{A}$ is upper bounded by
        \begin{equation}\label{smea-byz-cvx-eq}  
            \textstyle 2 \gamma C^2 T \left( \sqrt{\kappa} + \frac{1}{(n-f)m} \right).
        \end{equation}
        \item[\textit{(ii)}] If $\ell(\cdot;z)$ is $\mu$-strongly convex $\forall z \in \mc{Z}$, then the uniform stability of $\mc{A}$ is upper bounded by
        \begin{equation}\label{smea-byz-stgcvx-eq}  
            \textstyle \frac{2 C^2}{\mu} \left( \sqrt{\kappa} + \frac{1}{(n-f)m} \right).
        \end{equation}
    \end{enumerate}
\end{theorem}
\begin{proof}[Proof sketch.]\label{proof-sketch-byzantine}
Denote $\{{\theta_t\}}_{t \in [T]}$ and ${\{\theta'_t\}}_{t \in [T]}$ the coupled optimization trajectories resulting from two neighboring datasets $\mc{S}, \mc{S}'$.
In the context of stability analysis via parameter sensitivity, we track how the parameters diverge along these optimization trajectories. 
To do so, we decompose the analysis by introducing an intermediate comparison between the robust update $G^F_{\gamma}$ and the averaging over honest workers update $G^\mc{A}_{\gamma}$.
This enables us to leverage either the regularity of the loss function or the robustness property of the aggregation rule. 
This approach is motivated by the fact that Byzantine vectors cannot be assumed to exhibit regularities when comparing them.
By adding and subtracting $G^\mc{A}_{\gamma}(\theta_t)$ and $G^{\mc{A}\prime}_{\gamma}(\theta'_t)$, the triangle inequality yields
\begin{equation*}
    \textstyle 
    \mb{E}_{\mc{A}} \| G^F_{\gamma} (\theta_t) - G^{F\prime}_{\gamma} (\theta'_t) \|_2
    \leq A + B + B',
\end{equation*}
where $A = \mb{E}_{\mc{A}} \| G^{\mc{A}}_{\gamma} (\theta_t) - G^{\mc{A}\prime}_{\gamma} (\theta'_t) \|_2$,
$B = \mb{E} \| G^F_{\gamma} (\theta_t) - G^{\mc{A}}_{\gamma} (\theta_t) \|_2$,
and $B' = \mb{E} \| G^{F\prime}_{\gamma} (\theta'_t) - G^{\mc{A}\prime}_{\gamma} (\theta'_t) \|_2$.
We then bound $A$ using standard stability arguments~\citep{hardt2016train},
and $B, B'$ with the $(f, \kappa)$-robust property, the Jensen's inequality and boundedness of honest gradients,
\begin{multline*}
    B + B'
    \leq 2 \gamma \mb{E}_{\mc{A}} \sqrt{\kappa \| \Sigma_{\mc{H}, t} \|_{\operatorname{sp}}} 
    \leq 2 \gamma \sqrt{\kappa \mb{E}_{\mc{A}}\| \Sigma_{\mc{H}, t} \|_{\operatorname{sp}}} \\
    \leq 2 \gamma \sqrt{\kappa} C. \qedhere
\end{multline*}
\end{proof}

When there are no Byzantine workers ($f = 0$, hence $\kappa = 0$), both~\eqref{smea-byz-cvx-eq} and~\eqref{smea-byz-stgcvx-eq} reduce to the classical stability bounds established in~\citet{hardt2016train}, known to be tight~\citep{pmlr-v180-zhang22b}.
However, when $f > 0$, the analysis reveals a degradation by an additive term of order $\mathcal{O}\left(\sqrt{\kappa}\right)$.
In the convex setting, this degradation accumulates over the $T$ iterations, whereas in the strongly convex case, it appears as a one-time term independent of $T$.

\textbf{Lower bounds.} Since $\kappa \geq \frac{f}{n-2f}$~\citep[Proposition 6]{pmlr-v206-allouah23a}, the additive term in~\Cref{byz-sgd-ub-convex} is lower bounded by $\sqrt{\frac{f}{n-2f}}$. 
We show that this bound is tight by deriving the following lower bound, proved in~\Cref{lowerbound-byzantine}.
\begin{theorem}\label{lb-smea-gd-linear-byzantine}
    Consider the setting in~\Cref{II-problem-formulation} under Byzantine failures and assume $\frac{n}{3} \leq f < \frac{n}{2}$.
    Let $\mc{A} \in \{ \mathrm{GD}, \mathrm{SGD} \}$, with $\mathrm{SMEA}$. 
    Suppose $\mc{A}$ is run for $T \in \mb{N}^*$ iterations ($T \in \Omega(m)$ for $\mathrm{SGD}$) with learning rate $\gamma$.
    Then there exist $\ell \in {\mb{R}}^{\Theta \times \mc{Z}}$ such that $\forall z \in \mc{Z}, \ell(\cdot; z)$ is $C$-Lipschitz, $L$-smooth and convex,
    and neighboring datasets such that the uniform stability of $\mc{A}$ is lower bounded by\looseness=-1
    \begin{equation}\label{smea-byz-lb-eq}
        \textstyle \Omega \left( \gamma C^2 T \left( \sqrt{\frac{f}{n-2f}} + 1 + \frac{1}{(n-f)m} \right) \right).
    \end{equation}
\end{theorem}
\begin{proof}[Proof sketch.]
Let $C, L \in \mb{R}_+^*$ and let $\ell(\theta;z)=z\theta$, $\theta\in\mathbb{R}$, $z\in[-C,C]$, which is $C$-Lipschitz, $L$-smooth and convex. 
In this particular case, the $\mathrm{SMEA}$ rule amounts to picking the subset of size $n-f$ with smallest variance. 
Our proof relies on three key elements.

\textit{(i)} We maximize the variance among honest gradients by dividing them into two equal-sized subgroups positioned at the boundary of the Lipschitz constraint.
More precisely, $\frac{n-f}{2}$ honest workers have datasets with $m$ identical samples equal to $C$, the remaining $\frac{n-f}{2}$ have samples equal to $-C$.
Hence, this set of $n-f$ values has, at each iteration, a variance equal to $C^2$. 

\textit{(ii)} When $f \ge \frac{n}{3}$, Byzantine workers can craft updates that cause $\mathrm{SMEA}$ to entirely discard one honest subgroup while outweighing the influence of the other. 
Specifically, if Byzantine workers send $\beta = C\left( 1 + \frac{1}{2}\frac{n-f}{\sqrt{f(n-2f)}} \right)$ at each iteration, then the subgroup consisting of $f \geq \frac{n-f}{2}$ Byzantine workers together with $n-2f \leq \frac{n-f}{2}$ honest workers sending $C$ forms the subset with the lowest variance, equal to $C^2/4 < C^2$.
Consequently, under this configuration, the parameter at step $T$ becomes $\theta_T = - \gamma C T \left(1 + \frac{1}{2} \sqrt{\frac{f}{n-2f}}\right)$.

\textit{(iii)} Finally, we exploit the fact that Byzantine workers can observe all communications and adapt their behavior accordingly.
Specifically, we can define an event that triggers them to switch their communicated value $\beta$ to an arbitrarily large number.
For instance, if a single sample from a honest worker is changed from $C$ to $0$—defining a neighboring dataset—the server receives a different average $\frac{m-1}{m} C < C$ that triggers the Byzantine workers.
The choice of an arbitrary large value is a matter of convenience, ensuring that the perturbed run diverges in the positive direction at a rate of $\theta_T^\prime = \frac{\gamma C T}{(n-f)m}$. 
As, in this setting, stability is proportional to $\left|\theta_T - \theta_T^\prime \right|$, this concludes the proof sketch.
\end{proof}

Note that, for $\mathrm{SMEA}$, $\sqrt{\kappa} \in \mathcal{O}\Big(\sqrt{\frac{f}{n-f}}\big(1 + \frac{f}{n-2f}\big)\Big)$~\citep[Proposition 5.1]{pmlr-v202-allouah23a}.
Plugging this in our upper-bound and considering the regime $n/3 \leq f \leq n/(2+\nu)$ for some constant $\nu>0$, our upper and lower bounds become tight at a rate of $\varTheta\Big(\sqrt{\frac{f}{n-f}}\Big)$, providing a precise characterization of stability under Byzantine failures.
For $f < n/3$, we lack a strong lower bound; nonetheless, we investigate this regime numerically in~\Cref{V-experimental} (and discuss it in~\Cref{additional-discussions}), yielding valuable insights into the transition from few to many misbehaving workers.

Crucially, the lower bound above cannot be replicated under data poisoning, due to points \textit{(ii)} and \textit{(iii)} in the proof sketch.
Unlike Byzantine workers, who can send arbitrary and adaptive updates, poisoned workers are constrained by the loss function's regularity and by being committed to a fixed dataset before the algorithm begins. 
The next section formally proves a stability gap between the two threat models.
\looseness=-1

\subsection{The Case of Data Poisoning}\label{main-poisonous}

Unlike our previous bounds, which account for the arbitrariness of Byzantine vectors, we now leverage the regularity of poisoned gradients in data poisoning to derive tighter stability bounds for robust distributed learning with $\mathrm{SMEA}$.
The proof sketches below  further elucidate the key distinctions between the threat models.
Formal proofs are in~\Cref{stability-poisonous,lb-smea-gd,lb-smea-sgd}.

\textbf{Upper bounds.} We start by proving a tighter upper bound under data poisoning.

\begin{theorem}\label{th-poisoning-convex-smea-main-text}
    Consider the setting described in~\Cref{II-problem-formulation} under data poisoning.
    Let $\mc{A} \in \{\mathrm{GD}, \mathrm{SGD}\}$, with $\mathrm{SMEA}$. 
    Suppose $\ell(\cdot;z)$ $C$-Lipschitz and $L$-smooth $\forall z \in \mc{Z}$, and $\mc{A}$ is run for $T\in\mb{N}^*$ iterations with $\gamma \leq \frac{1}{L}$.
    \begin{enumerate}[label=\roman*, topsep=-3pt, itemsep=2pt, parsep=0pt, leftmargin = 23pt]
        \item[\textit{(i)}] If $\ell(\cdot;z)$ is convex $\forall z \in \mc{Z}$, then the uniform stability of $\mc{A}$ is upper bounded by
        \begin{equation}\label{smea-poisoning-cvx-eq}
            2 \gamma C^2 T \left( \frac{f}{n-f} + \frac{1}{(n-f)m} \right).
        \end{equation}
        \item[\textit{(ii)}] If $\ell(\cdot;z)$ is $\mu$-strongly convex $\forall z \in \mc{Z}$, then the uniform stability of $\mc{A}$ is upper bounded by
        \begin{equation}\label{smea-poisoning-strgcvx-eq}
            \frac{2 C^2}{\mu} \left( \frac{f}{n-2f} + \frac{1}{(n-2f)m} \right).
        \end{equation}
    \end{enumerate}
\end{theorem}
\begin{proof}[Proof sketch.]
    We consider two runs of~\eqref{eq:update}, with $\mathrm{SMEA}$ and $\mathrm{GD}$, on neighboring datasets, initialized with $\theta_0 = \theta_0'$. 
    Denote $\delta_t = \|\theta_t-\theta_t'\|_2$.
    At each step $t$, $\mathrm{SMEA}$ selects subsets of workers $S^*_t$ and $S'^*_t$ of size $n-f$. 
    The key insight is to decompose the update rule based on the intersection $I_t = S^*_t \cap S'^*_t$, where $n-2f \leq |I_t| \leq n-f$,
    and to compare the updates as a gradient step on the intersection $I_t$, $G_{I_t}(\theta_t) = \theta_t - \frac{\gamma}{n-f} \sum_{i \in I_t} \nabla \widehat{R}_i (\theta_t)$, plus a perturbation term from the symmetric difference $\Delta_t = (S^*_t \setminus S'^*_t) \cup (S'^*_t \setminus S^*_t)$, $|\Delta_t| \leq 2f$.
    The proof then relies on two key elements, where we bound the update~\eqref{eq:update} difference by
    \begin{multline}\label{eq:ub-sketch-inter}
        \textstyle \delta_{t+1} \leq \| G_{I_t}(\theta_t) - G_{I_t}(\theta'_t) \|_2 \\
        + \frac{\gamma}{n-f} \| \sum_{i \in S^*_t \setminus S'^*_t} \nabla \widehat{R}_i (\theta_t) - \sum_{i \in S'^*_t \setminus S^*_t} \nabla \widehat{R}'_i (\theta_t) \|_2 \\
        + \frac{\gamma}{(n-f)m} \| \nabla \ell (\theta_t, z^{(a,b)}) - \nabla \ell (\theta_t', z_t^{\prime (a, b)}) \|_2,
    \end{multline}
    where $a, b \in \mc{H} \times \{1, \ldots, m\}$ index the differing samples.

    \textit{(i)} 
    Since $I_t$ represents a fraction $\frac{n-2f}{n-f} \leq \rho_t \vcentcolon= \frac{|I_t|}{n-f} \leq 1$ of the gradient update, its expansivity (cf.~\Cref{lemmaexpansivity}) $\eta_{I_t}$ relates to the full update expansivity $\eta$ via
    \[
        \eta_{I_t} \leq 1 + \rho_t (\eta - 1).
    \]
    If $\ell$ is convex ($\eta=1$), then $\eta_{I_t} \leq 1$.
    If $\ell$ is $\mu$-strongly convex ($\eta = 1 - \gamma\mu$), then $\eta_{I_t} \leq 1 - \gamma\mu \frac{n-2f}{n-f}$.

    \textit{(ii)} We bound the divergence terms in~\eqref{eq:ub-sketch-inter}---arising from the non-shared indices $\Delta_t$ and the differing samples---using the Lipschitz continuity of $\ell$,
    $\| \sum_{i \in S^*_t \setminus S'^*_t} \nabla \widehat{R}_i (\theta_t) - \sum_{i \in S'^*_t \setminus S^*_t} \nabla \widehat{R}'_i (\theta_t) \|_2 \leq 2fC$,
    and $\| \nabla \ell (\theta_t, z^{(a,b)}) - \nabla \ell (\theta_t', z_t^{\prime (a, b)}) \|_2 \leq 2C$.
    Combining~\eqref{eq:ub-sketch-inter}, \textit{(i)} and \textit{(ii)} yields the following recursion
    \begin{equation}\label{eq:sketch-recursion}
        \textstyle \delta_{t+1} \leq \eta_{I_t} \delta_t + 2\gamma C \left( \frac{f}{n-f} + \frac{1}{m(n-f)} \right).
    \end{equation}
    We conclude the proof by solving the recursion and invoking the Lipschitz continuity of $\ell$.
\end{proof}
Crucially, the upper bounds derived above cannot be replicated for Byzantine failures, due to points \textit{(i)} and \textit{(ii)} in the proof sketch.
The fundamental distinction is that poisoned workers---unlike Byzantine workers---remain constrained by the loss function's regularity. 
Furthermore, notice that the proof technique generalizes beyond $\mathrm{SMEA}$, yielding the same upper bounds for any rule that averages $n-f$ gradients, independent of the selection mechanism.

\textbf{Lower bounds.} Before comparing the above bounds to the Byzantine case, we establish their tightness by deriving matching lower bounds. 
These lower bounds are established for $\projSGD$, which applies a Euclidean projection onto the positive half parameter space after each iteration. 
While this simplifies the lower bounds proofs (see \Cref{lb-smea-sgd}), it does not affect the tightness of our result since the upper bounds on uniform stability remain valid for $\projSGD$ as the projection does not increase the distance between projected points.
\begin{theorem}\label{lower-bound-gd-poisonous-merged}
    Consider the setting of~\Cref{II-problem-formulation} under data poisoning.
    Let $\mc{A} \in \{ \mathrm{GD}, \projSGD \}$, with $\mathrm{SMEA}$, run for $T\in\mb{N}^*$ iterations with $\gamma \leq \frac{1}{L}$.
    \begin{enumerate}[label=\roman*, topsep=-3pt, itemsep=2pt, parsep=0pt, leftmargin = 20pt]
        \item[\textit{(i)}] 
        There exists $\ell$ such that $\forall z \in \mc{Z}, \ell(\cdot; z)$ is $C$-Lipschitz, $L$-smooth and convex,
        and neighboring datasets such that the uniform stability of $\mc{A}$ is lower bounded by\looseness=-1
        \begin{equation}\label{smea-poisoning-cvx-lb-eq}
            \Omega \left( \gamma C^2 T \left(\frac{f}{n-f} + \frac{1}{(n-f)m} \right) \right).
        \end{equation}
        For $\projSGD$, the above result assumes the existence of a constant $\tau \geq c > 0$ (for an arbitrary constant $c$) such that $T \geq \tau m$.
        
        \item[\textit{(ii)}] For $\mc{A} = \mathrm{GD}$, if $T \geq \frac{\ln(1-c)}{\ln(1-\gamma\mu)}$ with $0 < c < 1$, 
        there exist $\ell$ such that $\forall z \in \mc{Z}, \ell(\cdot; z)$ is additionally $\mu$-strongly convex,
        and neighboring datasets such that the uniform stability of $\mc{A}$ is lower bounded by
        \begin{equation}\label{smea-poisoning-strgcvx-lb-eq}
            \Omega \left( \frac{C^2}{\mu} \left( \frac{f}{n-f} + \frac{1}{(n-f)m} \right) \right).
        \end{equation}
    \end{enumerate}
\end{theorem}
\begin{proof}[Proof sketch.]\label{poisonous-proof-sketch} 
For formal proofs, see~\Cref{lb-smea-gd,lb-smea-sgd}.
We specifically sketch below our construction for $\mathcal{A}=\mathrm{GD}$, which will be used extensively in \Cref{V-experimental}. 
In dimension one, the $\mathrm{SMEA}$ rule amounts to picking the subset with the smallest gradient variance. 
For $n=3$ and $f=1$, this reduces to minimizing in $i,j$, $\|g_i - g_j\|_2^2/4$, so $\mathrm{SMEA}$ averages the closest pair of worker gradients.
Let $\ell(\theta;z)=z\theta$, $\theta\in\mathbb{R}$, $z\in[-C,C]$, and set local datasets to be homogeneous, equal to $0$ (worker 1), $-C$ (worker 2), and $C-\delta$ (worker 3, $\delta>0$). Then $\nabla \widehat{R}_1(\theta)=0$, $\nabla \widehat{R}_2(\theta)=-C$, $\nabla \widehat{R}_3(\theta)=C-\delta$, so $\mathrm{SMEA}$ favors $\{1,3\}$, updating the parameter in the negative direction.
Since $C-\delta$ can be made arbitrarily close to $C$, a perturbation of worker 1’s dataset (e.g., changing a sample from $0$ to $-C$, giving $\nabla \widehat{R}'_1(\theta)=-\frac{C}{m}$) makes $\mathrm{SMEA}$ favor $\{1,2\}$, updating the parameter in the positive direction. Hence, parameter divergence is proportional to $\gamma C T$.
This reasoning generalizes to any $f$ and $n$ (see \Cref{lb-smea-gd-convex}), replacing a single sample can cause $\mathrm{SMEA}$ to swap $f$ workers among the $n-f$ originally subsampled, yielding stability
proportional to $\gamma C^2 T \Big( \frac{f}{n-f} + \frac{1}{(n-f)m} \Big)$, where the first term reflects worker swapping and the second the perturbed sample.
\qedhere
\end{proof}

This result establishes an unavoidable additional instability relative to the simple averaging rule.
Interestingly, if $\mathrm{SMEA}$ were co-coercive (cf.\ Equation~\ref{co-coercivity} in~\Cref{loss-regularities}), these lower bounds would not hold: one could then directly apply the techniques of~\citet{hardt2016train} to obtain a tighter upper bound, effectively removing the second term in~\eqref{eq:ub-sketch-inter}.
This suggests that the loss of co-coercivity is the primary driver of the instability observed in~\eqref{smea-poisoning-cvx-lb-eq}-\eqref{smea-poisoning-strgcvx-lb-eq}.
To our knowledge, no robust aggregation rule preserves this inequality. 
Identifying such a rule---or proving that none exists---remains an important open question for future work to further improve stability guarantees under data poisoning.

\paragraph{Fundamental gap with Byzantine bounds.} 
We now compare the bounds obtained for the two threat models.
Crucially, the additional instability term in the Byzantine upper bounds~\eqref{smea-byz-cvx-eq} and~\eqref{smea-byz-stgcvx-eq} decreases from $\mathcal{O}(\sqrt{\kappa})$---which for $\mathrm{SMEA}$ evaluates to $\mathcal{O}\big(\sqrt{\frac{f}{n-f}}(1 + \frac{f}{n-2f})\big)$~\citep[Proposition 5.1]{pmlr-v202-allouah23a}---down to $\mathcal{O}(\frac{f}{n-f})$ in the data poisoning case~\eqref{smea-poisoning-cvx-eq} and~\eqref{smea-poisoning-strgcvx-eq}. 
The upper bound~\eqref{smea-poisoning-cvx-eq} is thus an order of magnitude smaller than the lower bound $\Omega(\sqrt{\frac{f}{n-2f}})$~\eqref{smea-byz-lb-eq} obtained for $f \geq \frac{n}{3}$, establishing a fundamental stability gap.
As $f$ approaches its maximal value $\tfrac{n}{2}$, the gap widens significantly, highlighting the stability contrast between the two threat models. 
\Cref{table-comparaison} summarizes our results.

We present further results for smooth \emph{nonconvex} learning in \Cref{app:nonconvex}, along with a discussion on relaxing other regularity assumptions in \Cref{additional-discussions}.
Taken together, these findings support the view that the stability gap between Byzantine failures and data poisoning is not limited to our baseline assumptions.

In the next section, we show that this stability gap drives a corresponding gap in generalization error, revealing a fundamental difference in the ability of the two threat models to degrade performance on unseen data.

\section{Generalization Gap}\label{V-experimental}

While we proved tight stability bounds for robust distributed learning in~\Cref{III-main}, it remains unclear how the uncovered stability gap translates into generalization error under the two threat models.
To address this, we focus on the smooth convex setting with $\mathrm{SMEA}$ and construct a data distribution where the generalization error is not merely upper bounded by uniform stability but is in fact \emph{proportional} to it, independently of the threat model.

\begin{lemma}\label{th:stability-to-gen}
    Consider the setting described in~\Cref{II-problem-formulation}, with $m=1$, regardless of the assumed threat model.
    There exist $\ell \in {\mb{R}}^{\Theta \times \mc{Z}}$ such that $\forall z \in \mc{Z}, \ell(\cdot; z)$ is $C$-Lipschitz, $L$-smooth and convex,
    data distributions ${\{p_i\}}_{i\in\mc{H}}$ over $\mc{Z}$, 
    such that for any distributed algorithm $\mc{A}$, we have
    \begin{multline}\label{stability-to-gen-bis}
        \left| \mb{E}_{\mc{A}, \mc{S} \sim \otimes_{i\in\mc{H}} \left(p_i^{\otimes m}\right) } \big[ R_{\mc{H}}(\mc{A(\mc{S})}) - \widehat{R}_{\mc{H}}(\mc{A(\mc{S})}) \big] \right| \\
        = \frac{1}{4(n-f)}\sup_{z \in \mc{Z}} \mb{E}_{\mc{A}} \left[ \ell(\mc{A}(S); z) - \ell(\mc{A}(S'); z) \right],
    \end{multline}
    where the dependence on the threat model arises solely through the stability term.
\end{lemma}
Lemma~\labelcref{th:stability-to-gen} establishes a direct equality between the generalization gap and algorithmic stability, confirming that the threat model's impact is captured by the latter---a result that may be of independent interest for deriving lower bounds on generalization using stability tools.
Notably, This construction unifies the lower-bound settings (i.e., honest datasets and loss function) shared by both worst-case data poisoning and Byzantine attack scenarios.
Therefore, our findings reveal a fundamental separation: there exist data distributions and Byzantine attacks whose generalization error is strictly larger (in terms of convergence rate) than that of the worst-case data poisoning attack, as demonstrated by the following theorem, proved in~\Cref{gen-error-linear}. 

\begin{theorem}\label{th:generalization-gap}
    Consider the setting of~\Cref{th:stability-to-gen}, where $\frac{n}{3} \leq f < \frac{n}{2}$, $\mc{A} \in \{ \mathrm{GD}, \mathrm{SGD} \}$ with $\mathrm{SMEA}$. 
    Then, there exist $\ell \in {\mb{R}}^{\Theta \times \mc{Z}}$ such that $\forall z \in \mc{Z}, \ell(\cdot; z)$ is $C$-Lipschitz, $L$-smooth and convex,
    data distributions ${\{p_i\}}_{i\in\mc{H}}$ over $\mc{Z}$, and a Byzantine attack such that, for any data poisoning attack, we have
    \begin{equation}\label{quotient-gap}
        \frac{\mathcal{E}^{\mathrm{byz}}_{gen}}{\mathcal{E}^{\mathrm{pois}}_{gen}} \in \Omega\left( \frac{n-f}{\sqrt{f(n-2f)}} \right),
    \end{equation}
    where $\mathcal{E}^{\mathrm{byz}}_{gen}$ and $\mathcal{E}^{\mathrm{pois}}_{gen}$ denote generalization errors under the Byzantine and data poisoning attacks, respectively.
\end{theorem}
Theorem~\labelcref{th:generalization-gap} quantifies the separation between the threat models, proving that the generalization error under Byzantine attacks asymptotically dominates that of data poisoning.
Specifically, the factor $\frac{n-f}{\sqrt{f(n-2f)}}$ grows from a minimum of $2$, when $f=\frac{n}{3}$, to exceed $\frac{\sqrt{n}}{2}$ as $f$ approaches the breakdown point ($f=\frac{n}{2}-1$, or $\frac{n-1}{2}$).
Below, we complement our theoretical results
with numerical experiments that validate the analysis and provide further insights. 
Then, we discuss the broader intuition underlying the emergence of this generalization gap between the two threat models.

\textbf{Numerical validation.}
We numerically instantiate the construction used in proving the stability lower bound under data poisoning, which coincides with the Byzantine case when $f\geq\frac{n}{3}$ (see~\Cref{lb-smea-gd-convex,lowerbound-byzantine} for further details), with $m=1$.
Under this configuration, we can use~\Cref{th:stability-to-gen} as demonstrated in~\Cref{th:generalization-gap}.
For our experiments, we fix the learning rate and the Lipschitz coefficient $\gamma = C = 1$, the number of epochs $T=5$, the number of total workers $n=15$ and vary the number of misbehaving workers $f$ from $1$ to $7$.
As expected, we observe that under data poisoning, our theoretical upper bound (scaled by the factor $\frac{1}{4(n-f)}$ from Equation~\ref{stability-to-gen-bis}) closely matches the empirically measured generalization error (\Cref{fig:numerical}), confirming the practical tightness of our analysis in this setting, including the constant factors.

Next, we evaluate the system's performance under the Byzantine attack described in the proof of the stability lower bound in~\Cref{lb-smea-gd-linear-byzantine}.
In particular, for each $f$, we fix $\mc{H}$ to maximize honest gradients' variance. 
The Byzantine workers then accelerate the divergence of the parameters, either in the positive or negative direction, triggered by a specific event that depends on the observed communication.
Numerically, this is achieved by adaptively crafting updates that remain within the aggregation rule’s selection range (see~\Cref{appD-numerical-experiment} for details). 
We observe that this Byzantine attack results in a significantly higher generalization error that grows with the number of Byzantine workers. 
Importantly, when $f\geq\frac{n}{3}$, the attack produces an error greater than the lower bound reported in~\Cref{lb-smea-gd-linear-byzantine}.
This was expected as our data distribution produces datasets matching the worst-case one crafted for the Byzantine lower bound. 
The gap between the two curves illustrates that our attack optimizes the numerical constant in the lower bound.

Interestingly, even when $f<\frac{n}{3}$, Byzantine failures continue to induce substantially greater generalization error than worst-case data poisoning, emphasizing that the gap persists even with a moderate fraction of misbehaving workers.
In fact, as $f<\frac{n}{3}$ decreases, it becomes increasingly challenging (but not impossible) to simultaneously maintain maximal honest gradients' variance and amplify the influence of Byzantine workers.
In the absence of an order-of-magnitude larger lower bound under Byzantine failures in the regime $f < n/3$, it is natural to ask whether the generalization gap also extends to this low-to-moderate fraction of misbehaving workers. 
While our numerical experiments support this intuition, the open question remains whether the gap is merely a constant factor or truly an order of magnitude.
\looseness=-1

In summary, our numerical experiment demonstrates that Byzantine failures can have a significantly greater generalization error than worst-case data poisoning across all regimes, from low to high fractions of misbehaving workers.
Remarkably, this also confirms the fundamental generalization gap between Byzantine and data poisoning threat models, as highlighted above in~\Cref{th:generalization-gap}.

\begin{figure*}[ht!]
    \begin{center}
        \includegraphics[width=0.63\linewidth]{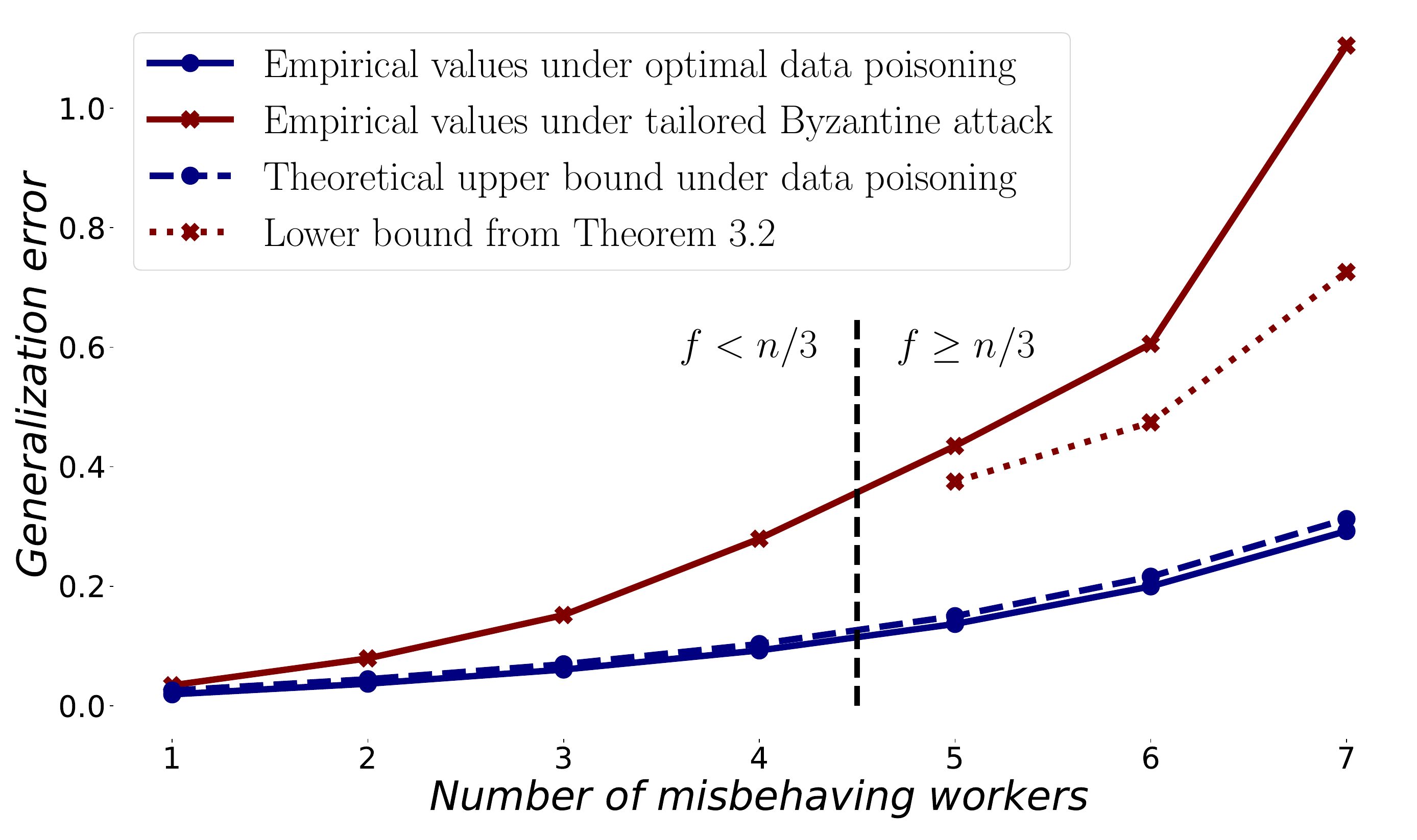}
        \caption{Generalization Error Under Optimal Poisoning And Tailored Byzantine Attacks.}\label{fig:numerical}
    \end{center}
\end{figure*}

\textbf{Why threat model differences affect generalization, but not optimization?}
In contrast to~\citet{farhadkhani2024relevance}, who show similar optimization error guarantees for both threat models, our work reveals a gap in generalization error.
The key intuition is as follows. 
Optimization analysis focuses on how accurately the descent direction is estimated at each step, whereas generalization analysis examines the algorithm's sensitivity to individual training samples, independent of how informative the estimated direction is for optimization. 
In the worst case, both corruptions similarly impair optimization direction estimation. 
However, as discussed in Section~\ref{III-main}, stability analysis benefits from the additional regularity and non-adaptive nature of data poisoning.
\looseness=-1

Specifically, optimization error analysis relies on the smoothness of the empirical loss over honest workers \citep[e.g., Theorem 1 in][]{pmlr-v206-allouah23a} to control its decrease along the optimization trajectory, typically via terms like $\widehat{R}_{\mathcal{H}}(\theta_{t+1}) - \widehat{R}_{\mathcal{H}}(\theta_t)$ or $\|\nabla \widehat{R}_{\mathcal{H}} (\theta_t)\|^2_2$. 
This smoothness-based control applies equally to both threat models, as the regularity of corrupted gradients is not invoked. 
In contrast, stability analysis aims to bound the difference $\ell(\theta_t, z) - \ell(\theta'_t, z)$ between the losses along two optimization trajectories $\theta_t$ and $\theta'_t$ produced by neighboring datasets. 
Here, regularities can be exploited under data poisoning---where corrupted updates remain gradients of a smooth loss function (cf.\ proof of \Cref{th-poisoning-convex-smea-main-text})---but not under Byzantine failures, which require an intermediate comparison step (cf.\ proof of \Cref{byz-sgd-ub-convex}).

Furthermore, prior work~\citep{pmlr-v206-allouah23a} shows that for the optimization error, $(f,\kappa)$-robustness with $\kappa \in \mathcal{O}(f/n)$ suffices to achieve an upper bound under Byzantine failures that matches the lower bound for poisoned data~\citep[Theorem 1 \& Proposition 1]{pmlr-v206-allouah23a}.
Hence, this bound cannot be improved using the regularity of corrupted gradients.

\section{Related Work}\label{related-work}

Minimizing empirical risk over honest workers in the presence of Byzantine workers has been extensively studied~\citep{guerraoui2018hidden, karimireddy2021learning, karimireddy2022byzantinerobust, pmlr-v206-allouah23a, gorbunov2023variance, pmlr-v202-allouah23a}, leading to robust aggregation schemes enabling distributed $\mathrm{SGD}$ to attain optimal error even under data heterogeneity. 
A key contribution of this line of work is the formal analysis of aggregation rules previously studied under i.i.d.~assumptions.
However, these analyses focus on optimization error and offer limited insight into the generalization performance.

Empirical studies~\citep{baruch2019ALIE, xie2020FOE, shejwalkar2021manipulating, allen2020byzantineLF-SF, pmlr-v206-allouah23a} have shown that the generalization error under Byzantine attacks is often worse than under data poisoning, but offer no formal explanation for this gap. 
These findings stand in contrast to theoretical analyses proving comparable statistical error rates across the two threat models~\citep{alistarh2018byzantine, zhu2023byzantinerobustfederatedlearningoptimal, yin2018byzantine}. 
However, these theoretical results assume an i.i.d.~data setting, where all honest workers sample from the same distribution, and do not account for data heterogeneity common in distributed learning. 
We bridge this gap between empirical observations and theoretical results by developing the first theoretical framework to analyze the generalization error of robust first-order iterative methods under heterogeneous data and the two considered threat models. 
Our findings apply to a much larger class of loss functions, and indeed show that Byzantine attacks can be more harmful to generalization than data poisoning. 

A prior work from~\citet{farhadkhani2022equivalence}, which claims an equivalence between the two threat models in the context of PAC learning, considers only the classic non-robust distributed $\mathrm{SGD}$ algorithm. 
Their analysis does not apply to robust distributed $\mathrm{GD}$ or $\mathrm{SGD}$, as classic averaging has no optimization guarantees in presence of misbehaving workers. 
Another work from~\citet{farhadkhani2024relevance}, which analyzes the relationship between data poisoning and Byzantine failures for robust distributed $\mathrm{SGD}$, considers a streaming setting in which honest workers acquire i.i.d.~training samples from their respective data-generating distributions at each iteration. 
While they show that, in that setting, the two threat models yield comparable population risk (asymptotically), their results and proof techniques do not apply to our more pragmatic setting wherein the training datasets across the honest workers are sampled \textit{a priori}.

\citet{ye10834510} study uniform stability of robust decentralized $\mathrm{SGD}$ under Byzantine failures. 
However, their analysis uses a diameter-based robustness definition for aggregation rules.
This definition leads to robustness parameters that exhibit suboptimal dependence on the problem dimension or the number of honest agents, as highlighted in~\citet[Table II]{ye10447047} and further discussed in~\citet[Section 8.3]{pmlr-v206-allouah23a}. 
Consequently, this robustness notion yields suboptimal empirical resilience~\citep[cf.][]{pmlr-v162-farhadkhani22a, pmlr-v206-allouah23a}, in contrast to covariance-based definitions, such as {$(f, \kappa)$-robustness}, that we consider.
Moreover,~\citet{ye10834510} does not provide any insights into the data poisoning threat model. 

 \section{Conclusion}\label{VI-conclusion}

We have rigorously proved, for the first time, a fundamental gap in the generalizability of robust distributed learning under two central threat models: Byzantine failures and data poisoning.
Specifically, we have shown that Byzantine failures can cause strictly worse generalization than data poisoning---even when both attacks are optimally designed. 
Our results explain prior empirical observations overlooked by optimization analyses, revealing that guarantees differ across the two threat models. 
This insight opens new avenues to better understand robust distributed learning and adapt robustness properties to generalization.
In particular, incorporating stability into the design of robust aggregation rules could help achieve not only strong optimization guarantees but also lower population risk.

\paragraph{Practical implications.}
Our results highlight the relevance of cryptographic tools---particularly \textit{zero-knowledge proofs} (ZKPs)~\citep{goldwasser1989knowledge,zkpThaler22}---for distributed learning.
ZKPs can prevent Byzantine failures by verifying that each worker's input data lies within an appropriate range and that their model updates remain consistent with their initially committed dataset, all without revealing the data itself~\citep{Sabater2022a,DBLP:conf/ccs/AbbaszadehPK024,Shamsabadi2024a}. 
However, they do not guard against data poisoning, which stems from malicious or corrupted local data. Therefore, cryptographic safeguards must be paired with robust learning algorithms that can tolerate compromised training data to ensure end-to-end resilience.\looseness=-1

\paragraph{Future work.}
Promising directions include:
\textit{(i)} Find a robust aggregation rule preserving the loss function's co-coercivity, or prove this is impossible (cf.\ discussion after~\Cref{lower-bound-gd-poisonous-merged});
\textit{(ii)} Go beyond worst-case analysis via data-dependent refinements, such as on-average stability~\citep{lei2020fine,sun2024understanding,ye10889327};
\textit{(iii)} Extend results to the \textit{locally-poisonous} setting~\citep{farhadkhani2024relevance}; 
\textit{(iv)} Investigate whether the trilemma between robustness, privacy and optimization error~\citep{pmlr-v202-allouah23a} also holds for generalization error;
\textit{(v)} Quantify the stability of other first-order algorithms, including momentum and communication-efficient methods (cf.\ \Cref{additional-discussions}).

\section*{Impact Statement}
This paper presents work whose goal is to advance the field of machine learning. There are many potential societal consequences of our work, none of which we feel must be specifically highlighted here.

\section*{Acknowledgements}
The work of Thomas Boudou and Aurélien Bellet is supported by grant ANR 22-PECY-0002 IPOP (Interdisciplinary Project on Privacy) project of the
Cybersecurity PEPR.\

\bibliography{references}
\bibliographystyle{icml2026}

\newpage
\appendix
\onecolumn

\section*{Organization of the appendix}

The supplementary materials are organized as follows.

\begin{itemize}
    \item \Cref{gen-and-Byzantine} contains deferred proofs for~\Cref{II-Background-motivation}, focusing on the link between uniform stability and generalization bounds under Byzantine failures and data poisoning.
    \item \Cref{loss-regularities} summarizes the regularity assumptions and includes key expansivity results.
    \item \Cref{deferred-sec-3-1} provides detailed proofs of~\Cref{byz-sgd-ub-convex,lb-smea-gd-linear-byzantine} presented in~\Cref{main-byzantine}.
    \item \Cref{deferred-sec-3-2} provides detailed proofs of~\Cref{th-poisoning-convex-smea-main-text,lower-bound-gd-poisonous-merged} presented in~\Cref{main-poisonous}.
    \item \Cref{app:nonconvex} provides additional bounds for nonconvex smooth optimization.
    \item \Cref{deferred-section-4} specifies details supporting~\Cref{V-experimental}, including generalization error computations~(\Cref{gen-error-linear}) and numerical experiments~(\Cref{appD-numerical-experiment}).
    \item \Cref{refined-heterogeneity} gives example of results with refined bounded heterogeneity and bounded variance assumptions instead of the more general bounded gradients assumption.
    \item \Cref{IV-excessrisk} states high-probability excess risk bounds for robust distributed learning under smooth and strongly convex loss functions.
    \item \Cref{additional-discussions} discusses additional important points not included in the main text.
\end{itemize}

\section{Stability Implies Generalization Under Byzantine Failures or Data Poisoning}\label{gen-and-Byzantine}
In this section, we focus on the formal link between uniform stability and generalization under Byzantine failures and data poisoning. 
The proof follows classic arguments from~\citet{Bousquet2002StabilityAG}.
\begin{repproposition}{lemmagen1}
    Consider the setting described in~\Cref{II-problem-formulation}, under either Byzantine failures or data poisoning. 
    let $\mc{A}$ an $\varepsilon$-\textit{uniformly stable} (randomized) distributed algorithm. Then, 
    \[
        | \mb{E}_{\mc{S}, \mc{A}}[ R_{\mc{H}}(\mc{A}(\mc{S})) - \widehat{R}_{\mc{H}}(\mc{A}(\mc{S})) ] | \leq \varepsilon.  
    \]
\end{repproposition}
\begin{proof}
    Recall that $\mc{S}$ denotes the collective dataset of honest workers. Let \( \mc{S}' = \{ z'^{(i,j)}, i \in \mc{H}, j \in \{1, \ldots, m \} \} \) be another independently sampled dataset for honest workers where for all $i \in \mc{H}$, $\mc{D}'_i = \{z'^{(i,1)}, \ldots, z'^{(i,m)} \}$ is composed of $m$ i.i.d.\ data points (or samples) drawn from distribution $p_i$. Let $\mc{S}^{(i,j)}$ be a neighboring dataset of $\mc{S}$ with only a single differing data sample $z'^{(i,j)}$ at index $(i,j) \in \mc{H} \times \{1, \ldots, m\}$. Then, 
    \begin{align*}
        \mb{E}_{\mc{S}, \mc{A}}[\widehat{R}_{\mc{H}}(\mc{A}(\mc{S}))]
        & = \mb{E}_{\mc{S}, \mc{S}', \mc{A}} \left[ 
            \frac{1}{|\mc{H}|} \sum_{i\in\mc{H}} \frac{1}{m} \sum_{j = 1}^m \ell(\mc{A}(\mc{S}^{(i,j)}), z'^{(i,j)}) 
        \right] \\
        & = \mb{E}_{\mc{S}, \mc{S}', \mc{A}} \left[ 
            \frac{1}{|\mc{H}|} \sum_{i\in\mc{H}} \frac{1}{m} \sum_{j = 1}^m \ell(\mc{A}(\mc{S}), z'^{(i,j)}) 
        \right] + \delta \\
        & = \mb{E}_{\mc{S}, \mc{A}}[ R_{\mc{H}}(\mc{A}(\mc{S})) ] + \delta,
    \end{align*}
    with
    \begin{multline*}
    \delta 
    = \mb{E}_{\mc{S}, \mc{S}'} \left[ \frac{1}{|\mc{H}|} \sum_{i\in\mc{H}} \frac{1}{m} \sum_{j = 1}^m \mb{E}_{\mc{A}} \left[ \ell(\mc{A}(\mc{S}^{(i,j)}), z'^{(i,j)}) - \ell(\mc{A}(\mc{S}), z'^{(i,j)}) \right] \right] \\ 
    \leq \mb{E}_{\mc{S}}\mb{E}_{\mc{S}'} \left[ \frac{1}{|\mc{H}|} \sum_{i\in\mc{H}} \frac{1}{m} \sum_{j = 1}^m \sup_{z \in \mathcal{Z}}\mb{E}_{\mc{A}} \left[ \ell(\mc{A}(\mc{S}^{(i,j)}), z) - \ell(\mc{A}(\mc{S}), z) \right] \right] 
    \leq \varepsilon 
    \end{multline*}
    Substituting from the above equation proves the lemma.
\end{proof}

\section{Regularity Assumptions and Expansivity Results}\label{loss-regularities}

To analyze the stability of iterative optimization algorithms, we typically rely on regularity assumptions on the loss function.
Below, we state standard regularity assumptions used throughout the paper, along with classical results that follow from them.
\looseness=-1
\begin{definition}[Regularity assumptions]
    Let $\ell: \Theta \to \mb{R}$ differentiable, $\Theta \subset \mb{R}^d$. 
    \begin{itemize}       
        \item[\textit{(i)}] $\ell$ is $C$-Lipschitz continuous (or $C$-Lipschitz) if there exists $C>0$ such that,
        \[
            \forall u, v \in \Theta,\quad |\ell(u)-\ell(v)| \leq C \|u-v\|_2.
        \] 
        This property is equivalent to the norm of the gradient of $\ell$ being uniformly bounded by $C$.
        \item[\textit{(ii)}] $\ell$ is $L$-Lipschitz smooth (or $L$-smooth) if its gradient is $L$-Lipschitz. 
        This is equivalent to the following smoothness inequality
        \begin{equation}\label{smoothness-ineq}
            \forall u, v \in \Theta,\quad \ell(u) \leq \ell(v) + \langle \nabla \ell(v), u-v \rangle + \frac{L}{2} \| u-v \|_2^2.
        \end{equation}
        \item[\textit{(iii)}] $\ell$ is convex if and only if, $\forall u, v \in \Theta$,
        $
            \langle \nabla \ell (u) - \nabla \ell (v), u-v\rangle \geq 0.
        $
        Moreover it is $\mu$-strongly convex if there exists $\mu > 0$ such that,
        \begin{equation*}
            \forall u, v \in \Theta,\quad \ell(u) \geq \ell(v) + \langle \nabla \ell(v), u-v \rangle + \frac{\mu}{2} \| u-v \|_2^2.
        \end{equation*}
        We note that for a strongly convex function to have bounded gradients, it must be defined on a convex compact set, which then implies boundedness of both the loss and the gradient. 
        To ensure this, we can either penalize the problem or restrict the parameter domain and apply an Euclidean projection at every step. Throughout the paper, we will tacitly assume this when referring to strongly convex functions, 
        which does not limit the applicability of our analysis since the projection does not increase the distance between projected points.
    \end{itemize}
\end{definition}
Notably, convex and $L$-smooth functions satisfy an important property known as co-coercivity~\citep[Proposition 5.4]{bach-ltfp}. 
\begin{lemma}
    Let $\ell: \Theta \to \mb{R}$ differentiable, $\Theta \subset \mb{R}^d$. 
    If $\ell$ is a convex and $L$-smooth function, then it satisfies the co-coercivity inequality,
    \begin{equation}\label{co-coercivity}
        \forall u, v \in \Theta,\quad \langle \nabla \ell (u) - \nabla \ell (v), u-v\rangle \geq \frac{1}{L} \|\nabla \ell (u) - \nabla \ell (v)\|_2^2.
    \end{equation}
\end{lemma}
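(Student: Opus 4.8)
The plan is to reduce \eqref{co-coercivity} to the well-known fact that an $L$-smooth convex function lies above its ``gradient-step'' quadratic model at any point, and then symmetrize in $u$ and $v$. Concretely, fix $v \in \Theta$ and introduce the auxiliary function $g_v(u) = \ell(u) - \langle \nabla\ell(v), u\rangle$. First I would observe that $g_v$ inherits convexity (it differs from $\ell$ by a linear term) and $L$-smoothness (its gradient $\nabla g_v(u) = \nabla\ell(u) - \nabla\ell(v)$ is still $L$-Lipschitz), and that $\nabla g_v(v) = 0$, so $v$ is a global minimizer of $g_v$ by convexity.

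Next I would apply the smoothness inequality \eqref{smoothness-ineq} to $g_v$ at the point $u$ with the displacement toward $u - \tfrac{1}{L}\nabla g_v(u)$, which after collecting terms gives $g_v\big(u - \tfrac{1}{L}\nabla g_v(u)\big) \le g_v(u) - \tfrac{1}{2L}\|\nabla g_v(u)\|_2^2$. Since $v$ minimizes $g_v$, the left-hand side is at least $g_v(v)$, so $g_v(v) \le g_v(u) - \tfrac{1}{2L}\|\nabla\ell(u) - \nabla\ell(v)\|_2^2$. Unwinding the definition of $g_v$ turns this into $\ell(v) \le \ell(u) + \langle \nabla\ell(v),\, v - u\rangle - \tfrac{1}{2L}\|\nabla\ell(u)-\nabla\ell(v)\|_2^2$. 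I would then write the same inequality with the roles of $u$ and $v$ exchanged, add the two; the $\ell(u)+\ell(v)$ terms cancel and the two cross terms combine into $\langle \nabla\ell(u)-\nabla\ell(v),\, u-v\rangle$, and a trivial rearrangement yields exactly \eqref{co-coercivity}.

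This is a textbook computation (cf.\ Proposition 5.4 in \cite{bach-ltfp}), so I do not anticipate a genuine obstacle; the only points requiring a little care are the sign bookkeeping when symmetrizing, and the fact that \eqref{smoothness-ineq} must be legitimately applicable at the gradient-step point $u - \tfrac{1}{L}\nabla g_v(u)$. The latter is automatic when $\Theta = \mb{R}^d$, and in the general case I would either assume (as is standard in this line of work) that $\ell$ extends to an $L$-smooth convex function on all of $\mb{R}^d$, or simply restrict to that setting, which costs no generality for our subsequent use of the inequality.
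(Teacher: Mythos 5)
Your proof is correct: the auxiliary-function argument (minimizing $g_v(u)=\ell(u)-\langle\nabla\ell(v),u\rangle$ via one gradient step, then symmetrizing in $u$ and $v$) is the standard derivation of co-coercivity, and your sign bookkeeping checks out. The paper does not supply its own proof---it simply cites Proposition 5.4 of \cite{bach-ltfp}, which is exactly this textbook computation---so your argument coincides with the one the paper defers to; your remark about needing $\ell$ to be $L$-smooth and convex on all of $\mb{R}^d$ (or an extension thereof) so that the gradient-step point is admissible is a legitimate care point that the paper also glosses over.
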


We focus on robust variants of distributed gradient descent and stochastic gradient descent.
Below we recall properties of the one-step update. Let $\mc{A} \in \{\mathrm{GD}, \mathrm{SGD}\}$, $\ell: \Theta \rightarrow \mb{R}$ differentiable, $\gamma > 0$, and we denote
\[
    G^{\mc{A}}_{\gamma}(\theta) := \theta - \gamma \frac{1}{|\mc{H}|} \sum_{i \in \mc{H}} g^{(i)}.
\]
Here, for each $i \in \mc{H}$, $g^{(i)}$ denotes the exact gradient $\nabla \ell(\theta)$ in the case of $\mathrm{GD}$, or an unbiased estimate thereof—effectively computed with a sample from the local dataset—in the case of $\mathrm{SGD}$.
We recall the expansiveness definition for an update rule from~\citet[Definition 2.3]{hardt2016train}.
\begin{definition}
    An update rule $G: \Theta \rightarrow \Theta$ is said to be $\eta$-expansive if for all $\theta, \omega \in \Theta$,
    \[
        \| G(\theta) - G(\omega) \|_2 \leq \eta \| \theta - \omega \|_2.
    \]
\end{definition}
Additionally, we rely on a direct adaptation of \citet[Lemma 3.6]{hardt2016train} for the distributed context.
\begin{lemma}~\label{lemmaexpansivity}
    Let $\mc{A} \in \{\mathrm{GD}, \mathrm{SGD}\}$ and $\gamma > 0$.
    Assume $\forall z \in \mc{Z}, \ell(\cdot, z): \Theta \to \mb{R}$ is L-smooth,  
    Then $G^{\mc{A}}_{\gamma}$ is $\eta_{G^{\mc{A}}_\gamma}$-expansive, with the following expansive coefficient,
    \begin{itemize}
        \item[\textit{(i)}] $\eta_{G^{\mc{A}}_\gamma} = (1+\gamma L)$.
        \item[\textit{(ii)}] Assume in addition that $\ell$ is convex. Then, for any $\gamma \leq \frac{2}{L}$, $\eta_{G^{\mc{A}}_\gamma} = 1$.
        \item[\textit{(iii)}] Assume in addition that $\ell$ is $\mu$-strongly convex. Then, for any $\gamma \leq \frac{2}{\mu + L}$, $\eta_{G^{\mc{A}}_\gamma} = (1-\gamma\frac{L\mu}{\mu+L})$. 
        For $\gamma \leq \frac{1}{L}$, since we have $L \geq \mu$, we can simplify the contractive constant to $\eta_{G^{\mc{A}}_\gamma} = (1-\gamma\mu)$.
    \end{itemize}
\end{lemma}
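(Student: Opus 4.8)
The plan is to observe that, for both $\mc{A}=\mathrm{GD}$ and $\mc{A}=\mathrm{SGD}$, the distributed update $G^{\mc{A}}_\gamma$ is nothing but a \emph{single} gradient step $\theta \mapsto \theta - \gamma \nabla\Phi(\theta)$ applied to a suitably defined aggregate objective $\Phi$, and then to invoke the one-function expansivity estimates of \cite[Lemma 3.6]{hardt2016train}. For $\mathrm{GD}$ I set $\Phi = \frac{1}{|\mc{H}|}\sum_{i\in\mc{H}}\widehat{R}_i = \widehat{R}_{\mc{H}}$. For $\mathrm{SGD}$ the map is random, so I would first \emph{couple} the two trajectories by drawing the same sample index in both runs; conditionally on that realization, $G^{\mc{A}}_\gamma(\theta) = \theta - \gamma\nabla\Phi(\theta)$ with $\Phi = \frac{1}{|\mc{H}|}\sum_{i\in\mc{H}}\ell(\cdot\,; z^{(i)})$, again a fixed deterministic function. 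The crucial point is that the regularity assumptions transfer from the per-worker losses to $\Phi$ by averaging: since $\nabla\Phi = \frac{1}{|\mc{H}|}\sum_i\nabla\phi_i$, the triangle inequality gives $\|\nabla\Phi(\theta)-\nabla\Phi(\omega)\|_2\leq L\|\theta-\omega\|_2$, so $\Phi$ is $L$-smooth; likewise a convex combination of convex (resp.\ $\mu$-strongly convex) functions is convex (resp.\ $\mu$-strongly convex). Hence $\Phi$ inherits exactly the hypotheses of each case, and every bound below holds for each fixed realization, hence in expectation.

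\textbf{The three expansivity bounds.} With $d := \nabla\Phi(\theta)-\nabla\Phi(\omega)$, all three cases start from the identity
\[
    \| G^{\mc{A}}_\gamma(\theta) - G^{\mc{A}}_\gamma(\omega) \|_2^2
    = \| \theta - \omega \|_2^2 - 2\gamma \langle \theta-\omega, d\rangle + \gamma^2 \|d\|_2^2 .
\]
For \textit{(i)} I would not even expand the square: the triangle inequality together with $L$-smoothness of $\Phi$ gives $\|G^{\mc{A}}_\gamma(\theta)-G^{\mc{A}}_\gamma(\omega)\|_2\leq\|\theta-\omega\|_2+\gamma\|d\|_2\leq(1+\gamma L)\|\theta-\omega\|_2$. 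For \textit{(ii)}, convexity and smoothness of $\Phi$ let me substitute the co-coercivity inequality~\cref{co-coercivity}, $\langle\theta-\omega,d\rangle\geq\frac{1}{L}\|d\|_2^2$, into the identity; this leaves a residual $\gamma(\gamma-\tfrac{2}{L})\|d\|_2^2\leq0$ whenever $\gamma\leq\tfrac2L$, giving non-expansiveness ($\eta=1$). For \textit{(iii)} I would use the strongly-convex/smooth interpolation inequality $\langle\theta-\omega,d\rangle\geq\frac{\mu L}{\mu+L}\|\theta-\omega\|_2^2+\frac{1}{\mu+L}\|d\|_2^2$; substituting and discarding the $\|d\|_2^2$ term (its coefficient $\gamma(\gamma-\tfrac{2}{\mu+L})$ being nonpositive for $\gamma\leq\tfrac{2}{\mu+L}$) yields $\|G^{\mc{A}}_\gamma(\theta)-G^{\mc{A}}_\gamma(\omega)\|_2^2\leq(1-\tfrac{2\gamma\mu L}{\mu+L})\|\theta-\omega\|_2^2\leq(1-\tfrac{\gamma\mu L}{\mu+L})^2\|\theta-\omega\|_2^2$, where the last step uses $(1-a)^2\geq1-2a$.

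\textbf{The simplified strongly convex constant, and the main obstacle.} To obtain the cleaner contraction $(1-\gamma\mu)$ valid for $\gamma\leq\frac1L$, a slightly different argument is cleanest (the tighter coefficient does not follow merely from $L\geq\mu$): write $\Phi=\frac{\mu}{2}\|\cdot\|_2^2+h$, so that $h$ is convex and $(L-\mu)$-smooth and the update becomes $G^{\mc{A}}_\gamma(\theta)-G^{\mc{A}}_\gamma(\omega)=(1-\gamma\mu)(\theta-\omega)-\gamma\,(\nabla h(\theta)-\nabla h(\omega))$. Applying co-coercivity to $h$ and using $\gamma\leq\frac1L\leq\frac{2}{\mu+L}$ shows the cross term absorbs the quadratic one, leaving $(1-\gamma\mu)^2\|\theta-\omega\|_2^2$. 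I expect the main difficulty to be conceptual rather than computational: it lies in the first step, namely justifying that the \emph{stochastic} update is, realization by realization, a genuine gradient step on an aggregate function that still satisfies the regularity hypotheses. This coupling-and-averaging argument is precisely what makes the ``straightforward extension'' of the single-machine lemma legitimate in the distributed and stochastic setting; once it is in place, the three bounds reduce to the classical computations of \cite{hardt2016train}.
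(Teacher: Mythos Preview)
Your proposal is correct and matches the paper's approach exactly: the paper does not give a proof at all, merely declaring the lemma ``a straightforward extension of \cite[Lemma 3.6]{hardt2016train} to the distributed setting,'' and your argument makes precise what this extension entails (averaging preserves $L$-smoothness, convexity, and $\mu$-strong convexity, so the single-function expansivity estimates apply to $\Phi$ realization by realization). Your extra care on the simplified strongly convex constant is warranted: since $\tfrac{L\mu}{\mu+L}<\mu$, the bound $(1-\gamma\mu)$ is genuinely \emph{stronger} than $(1-\gamma\tfrac{L\mu}{\mu+L})$ and does not follow from it by the observation ``$L\geq\mu$'' alone; your decomposition $\Phi=\tfrac{\mu}{2}\|\cdot\|_2^2+h$ with $h$ convex and $(L-\mu)$-smooth is the clean way to obtain it, and the paper's phrasing glosses over this point.
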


\section{\texorpdfstring{Deferred Proofs from Section~\ref{main-byzantine}}{Deferred Proofs from Section 3.1}}\label{deferred-sec-3-1}

We first establish supporting results in~\Cref{empirical-cov-expectation-1}, then prove~\Cref{byz-sgd-ub-convex}.

\subsection{Expectation Bounds for Honest Gradients' Empirical Spectral Norm}\label{empirical-cov-expectation-1}
We derive bounds on the expected spectral norm of the empirical covariance matrix of the honest workers’ gradients under the setting of bounded gradient assumption (i.e., Lipschitz-continuity of the loss function). 

\begin{lemma}\label{lemmaconcentration}
    Consider the setting described in~\Cref{II-problem-formulation} under Byzantine failures. Assume the loss function $C$-Lipschitz.
    Let $\mc{A} \in \{ \mathrm{GD}, \mathrm{SGD} \}$, with a $(f, \kappa)$-robust aggregation rule, for $T \in \mb{N}^*$ iterations. We have, for every $t \in \{1, \ldots, T-1\}$,
    \[ 
        \mb{E}_{\mc{A}} \| \Sigma_{\mc{H}, t} \|_{\operatorname{sp}}
        = \mb{E}_{\mc{A}}[ \lambda_{\max}(\frac{1}{|\mc{H}|} \sum_{i \in \mc{H}} (g^{(i)}_t-\overline{g}_t) {(g^{(i)}_t-\overline{g}_t)}^\intercal) ] 
        \leq C^2.
    \]
\end{lemma}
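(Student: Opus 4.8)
The plan is to bound $\|\Sigma_{\mc{H},t}\|_{sp}$ \emph{deterministically} — i.e.\ for every realization of the algorithm's randomness — by $C^2$, after which the stated bound on the expectation is immediate. Since $\Sigma_{\mc{H},t}$ is real symmetric and positive semi-definite, its spectral norm equals its largest eigenvalue and admits the variational characterization
\[
    \|\Sigma_{\mc{H},t}\|_{sp} = \lambda_{\max}(\Sigma_{\mc{H},t}) = \sup_{v\in\mb{R}^d,\,\|v\|_2=1} v^\intercal \Sigma_{\mc{H},t}\, v = \sup_{\|v\|_2=1}\ \frac{1}{|\mc{H}|}\sum_{i\in\mc{H}}\langle g^{(i)}_t - \overline{g}_t,\, v\rangle^2 .
\]
Thus it suffices to bound $\frac{1}{|\mc{H}|}\sum_{i\in\mc{H}}\langle g^{(i)}_t - \overline{g}_t,\, v\rangle^2$ uniformly over unit vectors $v$.

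The next step is the standard ``parallel axis'' identity: for fixed $v$, expanding the square and using $\overline{g}_t = \frac{1}{|\mc{H}|}\sum_{i\in\mc{H}} g^{(i)}_t$ gives
\[
    \frac{1}{|\mc{H}|}\sum_{i\in\mc{H}}\langle g^{(i)}_t - \overline{g}_t,\, v\rangle^2 = \frac{1}{|\mc{H}|}\sum_{i\in\mc{H}}\langle g^{(i)}_t,\, v\rangle^2 - \langle \overline{g}_t,\, v\rangle^2 \leq \frac{1}{|\mc{H}|}\sum_{i\in\mc{H}}\langle g^{(i)}_t,\, v\rangle^2 ,
\]
since $\langle \overline{g}_t, v\rangle^2 \geq 0$. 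Cauchy--Schwarz then yields $\langle g^{(i)}_t, v\rangle^2 \leq \|g^{(i)}_t\|_2^2\|v\|_2^2 = \|g^{(i)}_t\|_2^2$ for $\|v\|_2=1$, and the $C$-Lipschitz assumption gives $\|g^{(i)}_t\|_2\leq C$ for every honest $i$: in the $\mathrm{SGD}$ case $g^{(i)}_t = \nabla\ell(\theta_t; z^{(i)}_t)$ with $\|\nabla\ell(\cdot;z)\|_2\leq C$, and in the $\mathrm{GD}$ case $g^{(i)}_t = \nabla\widehat{R}_i(\theta_t) = \frac{1}{m}\sum_{z\in\mc{D}_i}\nabla\ell(\theta_t;z)$ is an average of vectors of norm at most $C$, hence itself of norm at most $C$ by the triangle inequality. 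Combining the three displays, $\frac{1}{|\mc{H}|}\sum_{i\in\mc{H}}\langle g^{(i)}_t - \overline{g}_t, v\rangle^2 \leq C^2$ for every unit $v$, so $\|\Sigma_{\mc{H},t}\|_{sp}\leq C^2$ holds pointwise; taking $\mb{E}_{\mc{A}}$ preserves the inequality.

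There is no genuinely hard step here: the argument is the spectral-norm analogue of the trace bound of Lemma~\ref{lemmaconcentration-trace}, and one could alternatively just invoke $\|\Sigma_{\mc{H},t}\|_{sp}\leq \Tr(\Sigma_{\mc{H},t})$ for p.s.d.\ matrices together with that lemma. The only points requiring mild care are the uniformity over $v$ in the variational formula and the separate treatment of the $\mathrm{GD}$ and $\mathrm{SGD}$ updates when invoking gradient boundedness. I would present the direct variational argument rather than the reduction to the trace, since it is the version that carries over to the refined bounded-variance / bounded-heterogeneity assumptions of~\Cref{refined-heterogeneity}, where the spectral and trace bounds genuinely differ.
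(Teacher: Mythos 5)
Your proof is correct and follows essentially the same route as the paper: the variational characterization of $\lambda_{\max}$, the drop of the $\langle \overline{g}_t, v\rangle^2$ term, Cauchy--Schwarz, and the $C$-Lipschitz gradient bound. The extra detail you give (the explicit parallel-axis identity and the separate GD/SGD justification of $\|g^{(i)}_t\|_2\leq C$) is a harmless elaboration of what the paper leaves implicit.
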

\begin{proof}
    For the first inequality, we directly use the boundedness of gradients as follows,
    \begin{multline*}
        \lambda_{\max}(\frac{1}{|\mc{H}|} \sum_{i \in \mc{H}} (g^{(i)}_t-\overline{g}_t) {(g^{(i)}_t-\overline{g}_t)}^\intercal) 
        = \sup_{\|v\|_2 \leq 1} \frac{1}{|\mc{H}|} \sum_{i \in \mc{H}} \langle v , g^{(i)}_t-\overline{g}_t \rangle ^2 \\
        \leq \sup_{\|v\|_2 \leq 1} \frac{1}{|\mc{H}|} \sum_{i \in \mc{H}} \langle v , g^{(i)}_t \rangle ^2 
        \leq \frac{1}{|\mc{H}|} \sum_{i \in \mc{H}} \| g^{(i)}_t \|^2_2 
        \leq C^2.\qedhere
    \end{multline*}
\end{proof}
\begin{remark}
    In~\Cref{refined-heterogeneity}, we present example results derived under refined assumptions—namely, bounded heterogeneity and bounded variance—in place of the more general bounded gradients condition (i.e., Lipschitz-continuity of the loss function).
    These assumptions are more commonly used in the optimization literature (as opposed to the generalization literature), and while they often yield tighter bounds, they do not provide additional conceptual insights within the scope of our discussion.
\end{remark}

\subsection{\texorpdfstring{Proof of~\Cref{byz-sgd-ub-convex}}{Proofs of Theorems 3.1}}\label{stability-analysis-classic-byzantine}

We begin with a unified \textit{proof sketch} that outlines the key ideas to bound stability under Byzantine failures. 
We then prove~\Cref{byz-sgd-ub-convex}.

\paragraph{Proof sketch.}\label{proof-sketch-byzantine}
Let denote $\{{\theta_t\}}_{t \in \{0, \ldots, T-1\}}$ and ${\{\theta'_t\}}_{t \in \{0, \ldots, T-1\}}$ the optimization trajectories resulting from two neighboring datasets $\mc{S}, \mc{S}'$, for $T$ iterations of $\mc{A} \in \{ \mathrm{GD}, \mathrm{SGD} \}$ with $F$ a $(f, \kappa)$-robust aggregation rule, and learning rate schedule ${\{ \gamma_t \}}_{t \in \{0, \ldots, T-1\}}$.
In the context of the uniform algorithmic stability analysis via parameter sensitivity, we track how the parameters diverge along these optimization trajectories. 
To do so, we decompose the analysis by introducing an intermediate comparison between the robust update $G^F_{\gamma_t}$ and the averaging over honest workers update $G^\mc{A}_{\gamma_t}$.
This enables us to leverage either the regularity of the loss function or the robustness property of the aggregation rule. 
This approach is primarily motivated by the fact that Byzantine vectors cannot be assumed to exhibit regularities when comparing them.
\begin{align*}
    \mb{E}_{\mc{A}} \left[ \| G^F_{\gamma_t} (\theta_t) - G^{F\prime}_{\gamma_t} (\theta'_t) \|_2 \right] 
    & = \mb{E}_{\mc{A}} [ \| G^F_{\gamma_t} (\theta_t) - G^\mc{A}_{\gamma_t} (\theta_t) + G^\mc{A}_{\gamma_t} (\theta_t) - G^{\mc{A}\prime}_{\gamma_t} (\theta'_t)
    + G^{\mc{A}\prime}_{\gamma_t} (\theta'_t) - G^{F\prime}_{\gamma_t} (\theta'_t) \|_2 ] \\
    & \leq \underbrace{\mb{E}_{\mc{A}} \| G^{\mc{A}}_{\gamma_t} (\theta_t) - G^{\mc{A}\prime}_{\gamma_t} (\theta'_t) \|_2}_{A}
    + \underbrace{ \mb{E}_{\mc{A}} \left[ \| G^F_{\gamma_t} (\theta_t) - G^{\mc{A}}_{\gamma_t} (\theta_t) \|_2 + \| G^{F\prime}_{\gamma_t} (\theta'_t) - G^{\mc{A}\prime}_{\gamma_t} (\theta'_t) \|_2 \right]}_{B}.
\end{align*}
We then bound the term $A$ with classical stability tools from~\citet{hardt2016train},
and the term $B$ with the $(f, \kappa)$-robust property, the Jensen inequality and the~\Cref{lemmaconcentration},
\[
    B 
    \leq 2 \gamma_t \mb{E}_{\mc{A}} \sqrt{\kappa \| \Sigma_{\mc{H}, t} \|_{\operatorname{sp}}} 
    \leq 2 \gamma_t \sqrt{\kappa \mb{E}_{\mc{A}}\| \Sigma_{\mc{H}, t} \|_{\operatorname{sp}}}
    \leq 2 \gamma_t \sqrt{\kappa} C.
\]
\begin{reptheorem}{byz-sgd-ub-convex}
    Consider the setting described in~\Cref{II-problem-formulation} under Byzantine failures.
    Let $\mc{A} \in \{ \mathrm{GD}, \mathrm{SGD} \}$, with $F$ a $(f, \kappa)$-robust aggregation rule, for $T\in\mb{N}^*$ iterations. 
    Suppose $\forall z \in \mc{Z}$, $\ell(\cdot;z)$ $C$-Lipschitz and $L$-smooth.
    \begin{itemize}
        \item[\textit{(i)}] Assume $\forall z \in \mc{Z}$, $\ell(\cdot;z)$ convex, with constant learning rate $\gamma \leq \frac{2}{L}$. Then, for any neighboring datasets $\mc{S}, \mc{S'}$, we have
        \[
            \sup_{z \in \mathcal{Z}} \mb{E}_{\mc{A}}[ | \ell(\mc{A(S)};z) - \ell(\mc{A(S')};z) | ]
            \leq 2 \gamma C^2 T \left( \frac{1}{(n-f)m} + \sqrt{\kappa} \right).
        \]
        \item[\textit{(ii)}] Assume $\forall z \in \mc{Z}$, $\ell(\cdot;z)$ $\mu$-strongly convex, with $\gamma \leq \frac{1}{L}$. Then, for any neighboring datasets $\mc{S}, \mc{S'}$, we have
        \[
            \sup_{z \in \mathcal{Z}} \mb{E}_{\mc{A}}[ | \ell(\mc{A(S)};z) - \ell(\mc{A(S')};z) | ] 
            \leq \frac{2 C^2}{\mu} \left( \frac{1}{(n-f)m} + \sqrt{\kappa} \right).
        \]
    \end{itemize}
\end{reptheorem}
\begin{proof}
Let first reason about $\mc{A} = \mathrm{GD}$.
For each $t \in \{0, \ldots, T-1\}$, following the proof sketch in~\labelcref{proof-sketch-byzantine}, we bound the term $\| G^{\mathrm{GD}}_{\gamma_t} (\theta_t) - G^{\mathrm{GD}\prime}_{\gamma_t} (\theta'_t) \|_2$ where $G^{\mathrm{GD}}_{\gamma_t}(\theta_t) = \theta_t - \gamma_t \nabla \widehat{R}_\mc{H}(\theta_t)$.
Let $a, b \in \mc{H} \times \{1, \ldots, m\}$ denote the indices of the differing samples. For $t \in \{0, \ldots, T-1\}$, we have,
\begin{align*} 
    \| G^{\mathrm{GD}}_{\gamma_t}(\theta_t) - G^{\mathrm{GD}\prime}_{\gamma_t}(\theta'_t)  \|_2 & 
    = \| G^{\mathrm{GD}}_{\gamma_t}(\theta_t) - G^{\mathrm{GD}}_{\gamma_t}(\theta'_t)  \|_2 
    + \frac{\gamma_t}{|\mc{H}|m} \| \nabla \ell (\theta_t'; z'^{(a,b)}) - \nabla \ell (\theta_t'; z^{(a,b)})  \|_2 \\
    & \leq \eta_{G^{\mathrm{GD}}_{\gamma_t}} \| \theta_t - \theta'_t  \|_2 + \frac{2 \gamma_t C}{(n-f)m},
\end{align*}
where $\eta^{\mathrm{GD}}_{G{\gamma_t}}$ denotes the expansive coefficient, which depends on the regularity properties of the loss function (cf.~\Cref{lemmaexpansivity}).
Let define $\sigma^F_{\mathrm{GD}} = 2 C \left( \frac{1}{(n-f)m} + \sqrt{\kappa} \right)$, we then resolve the recursion by incorporating the impact of the Byzantine failures as demonstrated in the previous proof sketch.
\begin{itemize}
    \item[\textit{(i)}] If the loss function is Lipschitz, smooth, and convex, then for all $t \in \{0, \ldots, T-1\}$,
    \[
        \| \theta_{t+1} - \theta_{t+1}' \|_2
        \leq \| \theta_t - \theta_t' \|_2 + \gamma \sigma^F_{\mathrm{GD}}.
    \]
    To conclude the proof, we assume a constant learning rate $\gamma \leq \frac{1}{L}$ and sum the inequality over $t \in {0, \ldots, T-1}$, noting that $\theta_0 = \theta_0'$. We then invoke the Lipschitz continuity assumption.
    \item[\textit{(ii)}] If the loss function is Lipschitz, smooth, and strongly convex, then for all $t \in \{0, \ldots, T-1\}$,
    \[
        \| \theta_{t+1} - \theta_{t+1}' \|_2
        \leq (1 - \gamma \mu) \| \theta_t - \theta_t' \|_2 + \gamma \sigma^F_{\mathrm{GD}}.
    \]
    Considering $\gamma \leq \frac{1}{L}$ and summing over $t \in \{0, \ldots, T-1\}$, with $\theta_0 = \theta_0'$, yields
    \begin{align*}
        \| \theta_{T} - \theta_{T}' \|_2 
        & \leq \gamma \sigma^F_{\mathrm{GD}} \sum_{t=0}^{T-1} {\left( 1 - \gamma \mu \right)}^t
        = \frac{\sigma^F_{\mathrm{GD}}}{\mu} \left[ 1 - {\left( 1 - \gamma \mu \right)}^T \right] 
        \leq \frac{\sigma^F_{\mathrm{GD}}}{\mu}.
    \end{align*}
    We conclude the derivation of the bound by invoking the Lipschitz continuity assumption.
\end{itemize}
For $\mc{A} = \mathrm{SGD}$, the same recursion holds for the expected distance $\mb{E}_{\mc{A}} \| \theta_t - \theta'_t \|_2$ for $t \in \{1, \ldots, T\}$, 
where $G^{\mathrm{SGD}}_{\gamma_t}(\theta_t) = \theta_t - \gamma_t \frac{1}{|\mc{H}|} \sum_{i\in\mc{H}} g^{(i)}_t$, 
and for each $i \in \mc{H}$, $g^{(i)}_t = \nabla \ell ( \theta_t ; z^{(i, J^{(i)}_t)}_t )$ with $J^{(i)}_t$ sampled uniformly from $\{1, \ldots, m\}$.
In fact, at each iteration, a single sample is selected uniformly at random from each honest worker. Consequently, with probability $(1-\frac{1}{m})$, the differing samples are not selected,
\[
    \| G^{\mathrm{SGD}}_{\gamma_t}(\theta_t) - G^{\mathrm{SGD}\prime}_{\gamma_t}(\theta'_t)  \|_2 
    = \| G^{\mathrm{SGD}}_{\gamma_t}(\theta_t) - G^{\mathrm{SGD}}_{\gamma_t}(\theta'_t)  \|_2 \leq \eta_{G^{\mathrm{SGD}}_{\gamma_t}} \| \theta_t - \theta'_t  \|_2,
\]
and with probability $\frac{1}{m}$, the differing samples are selected,
\begin{align*} 
    \| G^{\mathrm{SGD}}_{\gamma_t}(\theta_t) - G^{\mathrm{SGD}\prime}_{\gamma_t}(\theta'_t)  \|_2 & 
    = \| G^{\mathrm{SGD}}_{\gamma_t}(\theta_t) - G^{\mathrm{SGD}}_{\gamma_t}(\theta'_t)  \|_2 \\
    & \qquad + \frac{\gamma_t}{|\mc{H}|} \| \nabla \ell (\theta_t'; z'^{(a,b)}) - \nabla \ell (\theta_t'; z^{(a,b)})  \|_2 \\
    & \leq \eta_{G^{\mathrm{SGD}}_{\gamma_t}} \| \theta_t - \theta'_t  \|_2 + \frac{2 \gamma_t C}{|\mc{H}|}.
\end{align*}
We thus obtain the following bound
\begin{equation*}
    \mb{E}_{\mc{A}}[ \| G^{\mathrm{SGD}}_{\gamma_t}(\theta_t) - G^{\mathrm{SGD}\prime}_{\gamma_t}(\theta'_t)  \|_2 ] 
    \leq \eta_{G^{\mathrm{SGD}}_{\gamma_t}} \mb{E}_{\mc{A}}[\| \theta_t - \theta'_t  \|_2] + \frac{2 \gamma_t C}{(n-f)m}.
\end{equation*} 
Let define $\sigma^F_{\mathrm{SGD}} = 2 C \left( \frac{1}{(n-f)m} + \sqrt{\kappa} \right)$.
We then resolve the recursion analogously as in the $\mathrm{GD}$ case above, by incorporating the impact of the Byzantine failures as demonstrated in the previous proof sketch. This yields
\begin{equation}\label{recursion-byzantine}
    \mb{E}_{\mc{A}}[ \| \theta_{t+1} - \theta'_{t+1}  \|_2 ] 
    \leq \eta_{G^{\mathrm{SGD}}_{\gamma_t}} \mb{E}_{\mc{A}}[\| \theta_t - \theta'_t  \|_2] + \gamma_t \sigma^F_{\mathrm{SGD}},
\end{equation}
which conclude the proof.
\end{proof}

\subsection{\texorpdfstring{Proof of~\Cref{lb-smea-gd-linear-byzantine}}{Proofs of Theorems 3.2}}\label{lowerbound-byzantine}

\begin{reptheorem}{lb-smea-gd-linear-byzantine}
    Consider the setting described in~\Cref{II-problem-formulation} under Byzantine failures.
    Let $\mc{A} = \mathrm{GD}$, with $\mathrm{SMEA}$. Suppose $\mc{A}$ is run for $T\in\mb{N}^*$ iterations with the learning rate $\gamma$.
    Then for both regime stated below, 
    there exist $\ell \in {\mb{R}}^{\Theta \times \mc{Z}}$ such that $\forall z \in \mc{Z}, \ell(\cdot; z)$ is $C$-Lipschitz, $L$-smooth and convex,
    and a pair of neighboring datasets such that the uniform stability is lower bounded by
    \begin{itemize}
        \item[\textit{(i)}] in the regime of moderate to high fractions of misbehaving workers $\frac{n}{3} \leq f < \frac{n}{2}$,
        \[
            \Omega \left( \gamma C^2 T \left( \sqrt{\frac{f}{n-2f}} + 1 + \frac{1}{(n-f)m} \right) \right).
        \]
        \item[\textit{(ii)}] in the regime of low to moderate fractions of misbehaving workers $f < \frac{n}{3}$,
        \[
            \Omega \left( \gamma C^2 T \frac{f}{n-2f} \right).
        \]
        \item[\textit{(iii)}] instead, if we use $\mc{A} = \mathrm{SGD}$, and assume there exists $k>0$ such that $\mc{A}$ is run for $T \geq k m$ iterations with the learning rate $\gamma$ and  $\frac{n}{3} \leq f < \frac{n}{2}$.
        Then, 
        \[
            \Omega \left( \gamma C^2 T \left( \sqrt{\frac{f}{n-2f}} + 1 + \frac{1}{(n-f)m} \right) \right).
        \]
    \end{itemize}
\end{reptheorem}
\begin{proof}
    The lower bound construction relies on two key elements.
    First, we maximize the variance among the honest gradients by dividing them into two equal subgroups positioned at the boundary of the Lipschitz constraint.
    Second, in the regime with a high proportion of Byzantine workers, this setup enables the adversary to completely discard one of the honest subgroups and even outweigh the influence of the other. The extent to which the adversary can bias the aggregate away from the true direction (which is zero by construction) depends on the robustness of the aggregation rule.
    
    In what follows, we provide the proof for the case $d=1$. The result extends naturally to higher dimensions by embedding the one-dimensional construction into $\mb{R}^d$. 
    We arbitrarily refer to one trajectory as \textit{unperturbed}, with parameter $\theta_t$, and the other as \textit{perturbed}, with $\theta'_t$. 
    Let $C, L \in \mb{R}^*_+$, we define the following loss function
    \begin{align*}
        \text{For } \theta \in \mb{R},\text{ } z \in [-C, C],\quad \ell(\theta; z) = z \theta,\quad \ell'(\theta; z) = z,\quad \ell''(\theta; z) = 0.
    \end{align*}
    By construction, the loss function is convex, $C$-Lipschitz and $L$-smooth. 
    Let $m, n, f \in \mb{N}^*$ such that $\frac{n}{3} \leq f < \frac{n}{2}$.
    Let $\gamma$ the learning rate and assume $\theta_0 = \theta_0' = 0$.

    \paragraph{\textit{(i.A)}If $n-f$ is even and $f\geq\frac{n}{3}$.}
    We define the following pair of neighboring datasets.
    \begin{align*}
        \begin{split}
            S = \{D_1, \ldots, D_{n-f} \}, 
            & \quad S' = S \setminus \{D_1\} \cup \{D_1 \setminus \{z^{(1,1)}\} \cup \{z^{\prime(1,1)}\} \} \\
            \text{ with } \forall i \in \{1, \ldots, n-f \}, 
            & \quad D_i = \{ z^{(i,1)}, \ldots, z^{(i,m)} \} \\
            \text{ and } \forall j \in \{1, \ldots, m \},
            & \quad i \in \mc{H}_+ = \{1, \ldots, \frac{n-f}{2}\}, |\mc{H}_+| = \frac{n-f}{2}\!\!: z^{(i,j)} = C \text{ and } z^{\prime(1,1)} = 0. \\ 
            & \quad i \in \mc{H}_- = \{\frac{n-f}{2}+1, \ldots, n-f\}, |\mc{H}_-| = \frac{n-f}{2}\!\!: z^{(i,j)} = -C. 
        \end{split}
    \end{align*} 
    For $t \in \{0, \ldots, T-1\}$, we have for $i \in \mc{H}_-$, $g^{(i)}_t = g^{(i)\prime}_t = - C$;
    for $i \in \mc{H}_+ \setminus \{1\}$, $g^{(i)}_t = g^{(i)\prime}_t = C$; 
    and for $i=1$, $g_t^{(1)} = C$ and $g_t^{(1)\prime} = \frac{m-1}{m} C < C$.
    We consider $f$ Byzantine workers, and let denote $\mc{B} = \{n-f+1,\ldots,n\}$. Since they can observe all communications, they may adapt their strategy based on the message sent by worker $1$.
    We define their behavior in~\Cref{alg:algo1}.

    {\centering
    \begin{minipage}{.52\linewidth}
        \begin{algorithm}[H]
        \caption{Byzantine Worker Behavior}\label{alg:algo1}
        \begin{algorithmic}
            \IF{$g_1 < C$}
            \STATE \textbf{return} an arbitrarily large value (e.g., $n\beta$)
            \ELSE
            \STATE \textbf{return} $\beta$ (to be specified below)
            \ENDIF
        \end{algorithmic}
        \end{algorithm}
    \end{minipage}
    \par
    }\

    In the first case, the subset selected by the $\mathrm{SMEA}$ aggregation rule consists entirely of honest workers. 
    Consequently, we have $\theta_T' = \frac{\gamma C T}{m(n-f)}$.
    In the second case, we choose 
    \[
        \beta = \left( 1 + \frac{1}{2}\frac{n-f}{\sqrt{f(n-2f)}} \right) C,
    \]
    such that the $\mathrm{SMEA}$ aggregation rule selects the subset $\text{Select}_{n-2f}(\mc{H}_+) \cup \mc{B}$, where $\text{Select}_k(E)$ denote any subset of size $k$ from a set $E$, i.e., all the Byzantine workers together with $n - 2f$ workers from $\mathcal{H}_+$. 
    This is made possible because:
    \begin{center}
    \begin{tikzpicture}
        \draw[->, thick, red] (0,0) -- (5,0) node[right, below] {$f$ times \textcolor{red}{$g^{(byz)}$}};
        \draw[-, black] (5,-0.1) -- (5,0.1);
        \node[above, xshift=2pt, yshift=3.5pt] at (5,0) {$\beta$};
        \draw[-, black] (-2,0) -- (2,0);
        \draw[-, black] (2,-0.1) -- (2,0.1);
        \draw[-, black] (-2,-0.1) -- (-2,0.1);
        \node[above, xshift=2pt, yshift=3.5pt] at (2,0) {$C$};
        \node[above, xshift=-2pt, yshift=3.5pt] at (-2,0) {$-C$};
        \draw[->, thick, brown!50] (0,0) -- (-2,0) node[left, below] {$\frac{n-f}{2}$ times \textcolor{brown!50}{$g^{(\mc{H}_-)}$}};
        \draw[->, thick, blue] (0,0) -- (2,0) node[right, below] {$\frac{n-f}{2}$ times \textcolor{blue}{$g^{(\mc{H}_+)}$}};
        \filldraw[black] (0,0) circle (1pt);
    \end{tikzpicture}
    \end{center}
    \begin{itemize}
    \item $\text{Var}({\{g^{(i)}_t\}}_{i\in\mc{H}}) = C^2$, and any subset $\text{Select}_{a}(\mc{H}_-) \cup \text{Select}_{b}(\mc{H}_+) \cup \text{Select}_{n-f-a-b}(\mc{B})$, with $0 < a < \frac{n-f}{2}$, $0 \leq b < \frac{n-f}{2}$ yields higher variance.
    \item As $f \geq n/3$, $f \geq \frac{n-f}{2} \geq n-2f$. 
    Hence, either $f=\frac{n-f}{2}$ and 
    \[
        \mc{H}_+ \cup \text{Select}_{\frac{n-3f}{2}}(\mc{B}) = \text{Select}_{n-2f}(\mc{H}_+) \cup \mc{B} = \mc{H}_+ \cup \mc{B},
    \]
    or $f > \frac{n-f}{2}$ and $\text{Var}({\{g^{(i)}_t\}}_{i\in\mc{H}_+ \cup \text{Select}_{\frac{n-3f}{2}}(\mc{B})}) > \text{Var}({\{g^{(i)}_t\}}_{i\in\text{Select}_{n-2f}(\mc{H}_+) \cup \mc{B}})$.
    Hence, the aggregation rule will always pick a larger subgroup over a smaller one (it minimizes the variance magnitude).
    \item $\text{Var}({\{g^{(i)}_t\}}_{i\in\text{Select}_{n-2f}(\mc{H}_+) \cup \mc{B}}) = \frac{C^2}{4} < C^2 = \text{Var}({\{g^{(i)}_t\}}_{i\in\mc{H}})$. In fact,
    \begin{align*}
        \text{Var}({\{g^{(i)}_t\}}_{i\in\text{Select}_{n-2f}(\mc{H}_+) \cup \mc{B}}) 
        & = \frac{f}{n-f} {\left[ \beta - \frac{1}{n-f} \left( f\beta + (n-2f)C\right) \right]}^2 \\
        & \qquad + \frac{n-2f}{n-f} {\left[ C - \frac{1}{n-f} \left( f\beta + (n-2f)C\right) \right]}^2 \\
        & = \frac{f(n-2f)}{{(n-f)}^2} {\left(\beta-C\right)}^2 \\
        & = \frac{f(n-2f)}{{(n-f)}^2} C^2 {\left( 1 + \frac{1}{2}\frac{n-f}{\sqrt{f(n-2f)}} - 1 \right)}^2 \\
        & = \frac{C^2}{4}.
    \end{align*}
    \end{itemize} 
    This reasoning applies to all iterations because the gradients, by construction, stay constant during the entire optimization process.
    As a result, 
    \begin{align*}
        \theta_T
        & = \theta_0 - \gamma T \frac{1}{n-f} \sum_{i\in\text{Select}_{n-2f}(\mc{H}_+) \cup \mc{B}} g^{(i)}_0 \\
        & = -\frac{\gamma T}{n-f} \left( f C \left(1 + \frac{1}{2}\frac{n-f}{\sqrt{f(n-2f)}}\right) + (n-2f)C \right) \\
        & = -\gamma T C \left(1 + \frac{1}{2}\sqrt{\frac{f}{n-2f}}\right).
    \end{align*}
    Hence,
    \begin{align*}
        \sup_{z \in [-C, C]} | \ell(\theta_T; z) - \ell(\theta'_T; z) |
        & = \sup_{z \in [-C, C]} z \left(| \theta_T - \theta'_T | \right) \\
        & = \gamma C^2 T \left(1 + \frac{1}{2}\sqrt{\frac{f}{n-2f}} + \frac{1}{(n-f)m}\right)
        \in \Omega\left( \gamma C^2 T \left( \sqrt{\frac{f}{n-2f}} + \frac{1}{(n-f)m} \right) \right).
    \end{align*}

    \paragraph{\textit{(i.B)} If $n-f$ is odd and $f\geq\frac{n}{3}$.} The same construction works with a slight adaptation.
    That is, we define the following pair of neighboring datasets.
    \begin{align*}
        \begin{split}
            S = \{D_1, \ldots, D_{n-f} \}, 
            & \quad S' = S \setminus \{D_1\} \cup \{D_1 \setminus \{z^{(1,1)}\} \cup \{z^{\prime(1,1)}\} \} \\
            \text{ with } \forall i \in \{1, \ldots, n-f \}, 
            & \quad D_i = \{ z^{(i,1)}, \ldots, z^{(i,m)} \} \\
            \text{ and } \forall j \in \{1, \ldots, m \},
            & \quad i \in \mc{H}_+ = \{1, \ldots, \frac{n-f-1}{2}\}, |\mc{H}_+| = \frac{n-f-1}{2}\!\!: z^{(i,j)} = C \text{ and } z^{\prime(1,1)} = 0. \\ 
            & \quad i \in \mc{H}_- = \{\frac{n-f-1}{2}+1, \ldots, n-f-1\}, |\mc{H}_-| = \frac{n-f-1}{2}\!\!: z^{(i,j)} = -C. \\
            & \quad z^{(n-f,j)} = 0. 
        \end{split}
    \end{align*}  
    Then, 
    \[
        \text{Var}({\{g^{(i)}_t\}}_{i\in\mc{H}}) = (1-\frac{1}{n-f})C^2 > \frac{C^2}{3} \text{ as } n-f>\frac{n}{2} \geq \frac{3}{2}.
    \]
    The remainder of the proof when $n-f$ is odd proceeds identically to the previous case when $n-f$ is even.
    
    \paragraph{\textit{(ii)} In the regime $f < n/3$.}
    For values restricted to $z \in [-C,C]$, variance is maximized by placing all points at the endpoints $\pm C$, with counts as balanced as possible.  
    Thus we take
    \[
    k = \left\lceil \frac{n-f}{2} \right\rceil \quad \text{points at } +C,
    \qquad \left\lfloor\frac{n-f}{2}\right\rfloor = n-f-k \quad \text{points at } -C.
    \]
    We have two cases, if $n-f$ is even then the variance equals $C^2$, else it equals $C^2\left(1-\frac{1}{{(n-f)}^2}\right)$.
    Because $f<n/3$ we have $n-2f > (n-f)/2$. Hence, we would like to consider the subset $S$ composed by $f$ times $\beta$, $k$ times $C$ and $n-2f-k$ times $-C$.
    We compute the competing set variance by the identity $\mathrm{Var}(X) = \mb{E}[X^{2}] - \mb{E}[X]^{2}$,
    \[
        \mathrm{Var}(S) 
        = \frac{(n-2f)C^{2}+f\beta^{2}}{n-f}
        -\frac{\big((2k-n+2f)C + f\beta\big)^{2}}{(n-f)^{2}}.
    \]
    Requiring $\mathrm{Var}(S) \leq \mathrm{Var}(\mc{H}) = C^{2}\!\left(1-\frac{(2k-n+f)^{2}}{(n-f)^{2}}\right)$ yields the quadratic inequality in $\beta$
    \[
        (n-2f) \beta^{2} + 2(n-2f-2k)C \beta - (4k-n+2f) C^{2} \le 0.
    \]
    The discriminant simplifies exactly to $16 C^{2} k^{2}$, so the two roots are
    \[
        \beta_- = -C,\qquad
        \beta_+ = \frac{4k-(n-2f)}{n-2f} C = (\frac{4k}{n-2f} - 1) C.
    \]
    Since $(n-2f)>0$, the admissible $\beta$ lie between the two roots. 
    Moreover, since $\frac{2k}{n-2f} = \frac{2\left\lceil \frac{n-f}{2} \right\rceil}{n-2f} \geq \frac{n-f}{n-2f}$, 
    we have $\frac{4k}{n-2f} - 1 \geq 2\frac{n-f}{n-2f} - 1 = \frac{2f}{n-2f} + 1 > 1$.
    Next, we have
    \[
        \theta_T = -\gamma C T \frac{2k - n + 2f + \frac{4fk}{n-2f} - f}{n-f} = -\gamma C T \left( \frac{4fk}{(n-f)(n-2f)} + \frac{2k - (n-f)}{n-f} \right),
    \]
    with $2k \geq n-f$. Hence, similarly to the previous cases, we have $\theta_T' > 0$, resulting in
    \begin{equation*}
        \sup_{z \in [-C, C]} | \ell(\theta_T; z) - \ell(\theta'_T; z) |
        = \sup_{z \in [-C, C]} z \left(| \theta_T - \theta'_T | \right)
        \geq \gamma C^2 T \frac{2f}{n-2f}.
    \end{equation*}

    \paragraph{\textit{(iii)} For $\operatorname{SGD}$.}
    We prove a similar lower bound as for $\operatorname{GD}$. 
    Because $T \geq km$, this ensures that the differing sample has non-negligeable probability to be drawn at least once. 
    As the differing sample might not be drawn immediately, we have to adapt~\Cref{alg:algo-sgd} so that the attacker's trigger depends on all previously observed gradients.
    The proof follows from Step (i.A) and (i.B), adapting the Byzantine behavior as follows. 

    {\centering
    \begin{minipage}{.52\linewidth}
        \begin{algorithm}[H]
        \caption{Byzantine Worker Behavior}\label{alg:algo-sgd}
        \begin{algorithmic}
            \IF{$\forall s \in [t], g^{(1)}_s < C$}
            \STATE \textbf{return} an arbitrarily large value (e.g., $n\beta$)
            \ELSE
            \STATE \textbf{return} $\beta$
            \ENDIF
        \end{algorithmic}
        \end{algorithm}
    \end{minipage}
    \par
    }\

    Under the condition $T \geq k m$, the probability of sampling the differing sample is lower bounded by a constant ($1-e^{-k}$).
    This allows us to recover the same divergence as in Step (i.A) over a constant fraction of the $T$ iterations, yielding the desired lower bound.
    Indeed, let the even $E = \{ \text{The differing sample is selected at least once during the first } T/2 \text{ iterations} \}$. 
    Then, 
    \begin{align*}
        \sup_{z \in [-C, C]} \mathbb{E}_{\mathcal{A}} | \ell(\theta_T; z) - \ell(\theta'_T; z) |
        & \geq \sup_{z \in [-C, C]} \mathbb{E}_{\mathcal{A}} \left[ | \ell(\theta_T; z) - \ell(\theta'_T; z) | \mid E \right] \mathbb{P}(E) \\
        & \geq (1-e^{\frac{k}{2}}) \gamma C^2 (T-\frac{T}{2}) \left(1 + \frac{1}{2}\sqrt{\frac{f}{n-2f}} + \frac{1}{(n-f)m}\right) \\
        & \in \Omega\left( \gamma C^2 T \left( \sqrt{\frac{f}{n-2f}} + \frac{1}{(n-f)m} \right) \right).
    \end{align*}
\end{proof}

\section{\texorpdfstring{Deferred Proofs from Section~\ref{main-poisonous}}{Deferred Proofs from Section 3.2}}\label{deferred-sec-3-2}

In the following section, we establish uniform algorithmic upper bounds and lower bounds under data poisoning, leveraging the structure of the $\mathrm{SMEA}$ robust aggregation rule.

\subsection{\texorpdfstring{Proof of~\Cref{th-poisoning-convex-smea-main-text}}{Proofs of Theorem 3.2}}\label{stability-poisonous}

\begin{reptheorem}{th-poisoning-convex-smea-main-text}\label{th-poisoning-convex-smea-gd}
    Consider the setting described in~\Cref{II-problem-formulation} under data poisoning.
    Let $\mc{A} \in \{ \mathrm{GD}, \mathrm{SGD} \}$, with $\mathrm{SMEA}$, for $T\in\mb{N}^*$ iterations. Suppose $\forall z \in \mc{Z}$, $\ell(\cdot;z)$ $C$-Lipschitz and $L$-smooth.
    \begin{itemize}
        \item[\textit{(i)}] Assume $\forall z \in \mc{Z}$, $\ell(\cdot;z)$ convex, with constant learning rate $\gamma \leq \frac{2}{L}$. Then, for any neighboring datasets $\mc{S}, \mc{S'}$, we have
        \[
            \sup_{z \in \mathcal{Z}} \mb{E}_{\mc{A}}[ | \ell(\mc{A(S)};z) - \ell(\mc{A(S')};z) | ] 
            \leq 2 \gamma C^2 T \left( \frac{f}{n-f} + \frac{1}{(n-f)m} \right).
        \]
        \item[\textit{(ii)}] Assume $\forall z \in \mc{Z}$, $\ell(\cdot;z)$ $\mu$-strongly convex, with $\gamma \leq \frac{1}{L}$. Then, for any neighboring datasets $\mc{S}, \mc{S'}$, we have
        \[
            \sup_{z \in \mathcal{Z}} \mb{E}_{\mc{A}}[ | \ell(\mc{A(S)};z) - \ell(\mc{A(S')};z) | ] 
            \leq \frac{2C^2}{\mu} \left( \frac{f}{n-2f} + \frac{1}{(n-2f)m} \right).
        \]
    \end{itemize}
\end{reptheorem}
\begin{proof}
Let first reason about $\mc{A} = \mathrm{GD}$.
Let denote $\{{\theta_t\}}_{t \in \{0, \ldots, T-1\}}$ and ${\{\theta'_t\}}_{t \in \{0, \ldots, T-1\}}$ the optimization trajectories resulting from two neighboring datasets $\mc{S}, \mc{S}'$, for $T$ iterations of $\mathrm{GD}$ with aggregation rule $\mathrm{SMEA}$ and learning rate schedule ${\{\gamma_t\}}_{t \in \{0, \ldots, T-1\}}$.
The proof proceeds, by analyzing our robust aggregation as a classical gradient descent, albeit with a random data-dependent subset selection at each iteration. Notably, the poisoned vectors are treated as true gradients, although computed on arbitrary data.
For $t \in \{0, \ldots, T-1\}$, let denote,
\[
    S^*_t \in \underset{S \subseteq \{1, \ldots, n\}, |S|=n-f}{\text{argmin}} \lambda_{\max} \left( \frac{1}{|S|} \sum_{i \in S} \left(g^{(i)}_t-\overline{g}_S \right) {\left(g^{(i)}_t - \overline{g}_S \right)}^\intercal \right)
\]
with $\overline{g}_S = \frac{1}{|S|} \sum_{i \in S} g^{(i)}_t$, where $g^{(i)}_t = \nabla \widehat{R}_i (\theta_t)$ 
and the prime notation ${(\cdot)}^\prime$ denotes the corresponding quantities for $\mc{S}'$. 
At each step $t \in \{0, \ldots, T-1\}$, the intersection and differences of the two subsets $S^*_t$ and $S^{*\prime}_t$ verifies the following properties
\begin{equation}\label{eq:subset-size-smea}
    n-2f \leq | S^*_t \cap S'^*_t | \leq n-f \quad\text{and}\quad | S^*_t \setminus S'^*_t| + |S'^*_t \setminus S^*_t | \leq 2f.
\end{equation}
Let denote
\begin{equation}\label{eq:partial-update-expansivity}
    G^{\mathrm{GD}}_{\gamma_t |_{S^*_t \cap S'^*_t}}(\theta_t) = \theta_t - \frac{\gamma_t}{n-f} \sum_{i \in S^*_t \cap S'^*_t} \nabla \widehat{R}_i (\theta_t),
\end{equation}
and 
\[
    G^{\mathrm{GD}}_{\gamma_t |_{S^*_t \cap S'^*_t}}(\theta_t') = \theta'_t - \frac{\gamma_t}{n-f} \sum_{i \in S^*_t \cap S'^*_t} \nabla \widehat{R}_i (\theta_t').
\]
Let $i \notin \mc{H}$, the misbehaving vectors are gradients computed on the true loss function. This enables us to exploit their regularities if $i \in S^*_t \cap S'^*_t$ or their boundedness if $i \in S^*_t \setminus S'^*_t \cup S'^*_t \setminus S^*_t$.
This is to be contrasted with the previously studied Byzantine failures---where an intermediate comparison was introduced---as we adopt a more direct bounding technique when subject to data poisoning.
Let $a, b \in \mc{H} \times \{1, \ldots, m\}$ the indices of the differing samples. Using these relations, we derive the following bound on $\delta_{t+1} = \| \theta_{t+1} - \theta'_{t+1} \|_2$
\begin{align*}
    \delta_{t+1}
    \leq \quad & \| G^{\mathrm{GD}}_{\gamma_t |_{S^*_t \cap S'^*_t}}(\theta_t) - G^{\mathrm{GD}\prime}_{\gamma_t |_{S^*_t \cap S'^*_t}}(\theta_t') \|_2 
    + \frac{\gamma_t}{n-f} \| \sum_{i \in S^*_t \setminus S'^*_t} \nabla \widehat{R}_i(\theta_t) - \sum_{i \in S'^*_t \setminus S^*_t} \nabla \widehat{R}_i (\theta_t') \|_2 \\
    \leq \quad & \| G^{\mathrm{GD}}_{\gamma_t |_{S^*_t \cap S'^*_t}}(\theta_t) - G^{\mathrm{GD}}_{\gamma_t |_{S^*_t \cap S'^*_t}}(\theta_t') \|_2 
    + \frac{\gamma_t}{(n-f)m} \| \nabla \ell (\theta_t, z^{(a,b)}) - \nabla \ell (\theta_t', z_t^{\prime (a, b)}) \|_2 \\
    & \quad + \frac{\gamma_t}{n-f} \| \sum_{i \in S^*_t \setminus S'^*_t} \nabla \ell (\theta_t, z_t^{(i)}) - \sum_{i \in S'^*_t \setminus S^*_t} \nabla \ell (\theta_t', z_t^{\prime(i)}) \|_2 \\
    \leq \quad & \| G^{\mathrm{GD}}_{\gamma_t |_{S^*_t \cap S'^*_t}}(\theta_t) - G^{\mathrm{GD}}_{\gamma_t |_{S^*_t \cap S'^*_t}}(\theta_t') \|_2 + 2 \gamma_t C \frac{f+\frac{1}{m}}{n-f}.
\end{align*}
Hence, by~\Cref{lemmaexpansivity}, we obtain the following recursion
\begin{equation}\label{eq:recursion-stab-pois}
    \delta_{t+1} \leq \eta_{G^{\mathrm{GD}}_{\gamma_t |_{S^*_t \cap S'^*_t}}} \delta_{t} + 2 \gamma_t \frac{f+\frac{1}{m}}{n-f} C,
\end{equation}
Moreover, denote $\rho = \frac{|S^*_t \cap S'^*_t|}{n-f}$, where $0 < \frac{n-2f}{n-f} \leq \rho \leq 1$ by~\eqref{eq:subset-size-smea}. 
We aim to relate the expansivity of $G^{\mathrm{GD}}_{\gamma_t |_{S^*_t \cap S'^*_t}}$, denoted $\eta_{G^{\mathrm{GD}}_{\gamma_t |_{S^*_t \cap S'^*_t}}}$, to the expansivity of $G^{\mathrm{GD}}_{\gamma_t}$, denoted $\eta_{G^{\mathrm{GD}}_{\gamma_t}}$.
Fist, note that we we can write~\eqref{eq:partial-update-expansivity} as follows
\begin{equation}\label{eq:expansivity-pois-intermediate}
    G^{\mathrm{GD}}_{\gamma_t |_{S^*_t \cap S'^*_t}}(\theta) 
    = (1-\rho) \theta + \rho \left(\theta - \frac{\gamma_t}{|S^*_t \cap S'^*_t|} \sum_{i \in S^*_t \cap S'^*_t} \nabla \widehat{R}_i (\theta) \right).
\end{equation}
The term in the parenthesis of~\eqref{eq:expansivity-pois-intermediate} corresponds to the standard gradient descent step averaged over the set $S^*_t \cap S'^*_t$. 
Hence, by~\Cref{lemmaexpansivity}, its expansive coefficient is $\eta_{G^{\mathrm{GD}}_{\gamma_t}}$. 
By the triangular inequality, we then have
\begin{equation*}
    \| G^{\mathrm{GD}}_{\gamma_t |_{S^*_t \cap S'^*_t}}(\theta) - G^{\mathrm{GD}}_{\gamma_t |_{S^*_t \cap S'^*_t}}(\theta') \|_2
    \leq (1-\rho) \| \theta - \theta' \|_2 + \rho \eta_{G^{\mathrm{GD}}_{\gamma_t}} \| \theta - \theta' \|_2
    = \big(1 + \rho(\eta_{G^{\mathrm{GD}}_{\gamma_t}} - 1) \big) \| \theta - \theta' \|_2.
\end{equation*}
That is,
\begin{equation}\label{eq:expansivity-subset-intermediate}
    \eta_{G^{\mathrm{GD}}_{\gamma_t |_{S^*_t \cap S'^*_t}}}
    \leq \big(1 + \rho(\eta_{G^{\mathrm{GD}}_{\gamma_t}} - 1) \big).
\end{equation}
Now, assume there exists $\alpha \geq 0$ such that $\eta_{G^{\mathrm{GD}}_{\gamma_t}} = 1 + \alpha$.
then, by~\eqref{eq:expansivity-subset-intermediate} and~\eqref{eq:subset-size-smea}, we have $\eta_{G^{\mathrm{GD}}_{\gamma_t |_{S^*_t \cap S'^*_t}}} \leq 1 + \rho \alpha \leq 1 + \alpha = \eta_{G^{\mathrm{GD}}_{\gamma_t}}$.
Finally, assume there exists $0 < \alpha < 1$ such that $\eta_{G^{\mathrm{GD}}_{\gamma_t}} = 1 - \alpha$,
then, by~\eqref{eq:expansivity-subset-intermediate} and~\eqref{eq:subset-size-smea}, we have $\eta_{G^{\mathrm{GD}}_{\gamma_t |_{S^*_t \cap S'^*_t}}} \leq 1 - \alpha \rho \leq 1 - \alpha \rho \leq 1 - \alpha \frac{n-2f}{n-f}$.
The precise value of the expansivity coefficient depends on the regularity assumption on the loss function. 
We then solve the recursions.
\begin{itemize}
    \item[\textit{(i)}] Using~\eqref{eq:recursion-stab-pois} and~\eqref{eq:expansivity-subset-intermediate} in the smooth and convex case, we have, for $t \in \{0, \ldots, T-1\}$,
    \[
        \delta_{t+1}
        \leq \delta_{t} + 2 \gamma C \frac{f+\frac{1}{m}}{n-f}.
    \]
    We finish the proof by considering a constant learning rate $\gamma \leq \frac{1}{L}$ and by summing over $t \in \{0, \ldots, T-1\}$ with $\theta_0 = \theta_0'$. 
    We then invoke the Lipschitz continuity assumption.
    \item[\textit{(ii)}] Using~\eqref{eq:recursion-stab-pois} and~\eqref{eq:expansivity-subset-intermediate} in the smooth and strongly convex case, we have  for $t \in \{0, \ldots, T-1\}$,
    \[
        \delta_{t+1}
        \leq (1 - \gamma \mu \frac{n-2f}{n-f}) \delta_{t} + 2 \gamma C \frac{f+\frac{1}{m}}{n-f}.
    \]
    Considering $\gamma \leq \frac{1}{L}$ and summing over $t \in \{0, \ldots, T-1\}$, with $\theta_0 = \theta_0'$, yields
    \begin{align*}
        \delta_{T} 
        \leq 2 \gamma C \frac{f+\frac{1}{m}}{n-f} \sum_{t=0}^{T-1} {\left( 1 - \gamma \mu \frac{n-2f}{n-f} \right)}^t 
        = 2 \gamma C \frac{f+\frac{1}{m}}{n-f} \frac{1}{\gamma\mu\frac{n-2f}{n-f}} \left[ 1 - {\left( 1 - \gamma \mu \frac{n-2f}{n-f} \right)}^T \right] 
        \leq \frac{2 C}{\mu} \frac{f+\frac{1}{m}}{n-2f}.
    \end{align*}
    We conclude the derivation of the bound by invoking the Lipschitz continuity assumption.
\end{itemize}

For $\mc{A} = \mathrm{SGD}$, the same recursion holds for the expected distance $\delta_t = \mb{E}_{\mc{A}} \| \theta_t - \theta'_t \|_2$ for $t \in \{1, \ldots, T\}$. 
We use similar notation as above, where, for $t \in \{1, \ldots, T\}$, $G^{\mathrm{SGD}}_{\gamma_t}(\theta_t) = \theta_t - \gamma_t \frac{1}{|\mc{H}|} \sum_{i\in\mc{H}} g^{(i)}_t$, 
for each $i \in \mc{H}$, $g^{(i)}_t = \nabla \ell ( \theta_t ; z^{(i, J^{(i)}_t)}_t )$ with $J^{(i)}_t$ sampled uniformly from $\{1, \ldots, m\}$, and for each $i \notin \mc{H}$, $g^{(i)}_t$ is a gradient computed on poisoned data.
In fact, at each iteration, a single sample is selected uniformly at random from each worker. 
Consequently, with probability $1-\frac{1}{m}$,
    \begin{align*}
         \delta_{t+1} & \leq \| G^{\mathrm{SGD}}_{\gamma_t |_{S^*_t \cap S'^*_t}}(\theta_t) - G^{\mathrm{SGD}}_{\gamma_t |_{S^*_t \cap S'^*_t}}(\theta_t') \|_2 
         + \frac{\gamma_t}{n-f} \| \sum_{i \in S^*_t \setminus S'^*_t} g^{(i)}_t - \sum_{i \in S'^*_t \setminus S^*_t} g^{\prime(i)}_t \|_2 \\
         & \leq \| G^{\mathrm{SGD}}_{\gamma_t |_{S^*_t \cap S'^*_t}}(\theta_t) - G^{\mathrm{SGD}}_{\gamma_t |_{S^*_t \cap S'^*_t}}(\theta_t') \|_2
            + 2 \gamma_t C \frac{f}{n-f},
    \end{align*}
    and with probability $\frac{1}{m}$,
    \begin{align*}
         \delta_{t+1} \leq \quad & \| G^{\mathrm{SGD}}_{\gamma_t |_{S^*_t \cap S'^*_t}}(\theta_t) - G^{\mathrm{SGD}\prime}_{\gamma_t |_{S^*_t \cap S'^*_t}}(\theta_t') \|_2
         + \frac{\gamma_t}{n-f} \| \sum_{i \in S^*_t \setminus S'^*_t} g^{(i)}_t - \sum_{i \in S'^*_t \setminus S^*_t} g^{\prime(i)}_t \|_2 \\
         \leq \quad & \| G^{\mathrm{SGD}}_{\gamma_t |_{S^*_t \cap S'^*_t}}(\theta_t) - G^{\mathrm{SGD}}_{\gamma_t |_{S^*_t \cap S'^*_t}}(\theta_t') \|_2 
         + \frac{\gamma_t}{n-f} \| \nabla \ell (\theta_t, z^{(a,b)}) - \nabla \ell (\theta_t', z_t^{\prime (a, b)}) \|_2 \\
         & \quad + \frac{\gamma_t}{n-f} \| \sum_{i \in S^*_t \setminus S'^*_t} g^{(i)}_t - \sum_{i \in S'^*_t \setminus S^*_t} g^{\prime(i)}_t \|_2 \\
         \leq \quad & \| G^{\mathrm{SGD}}_{\gamma_t |_{S^*_t \cap S'^*_t}}(\theta_t) - G^{\mathrm{SGD}}_{\gamma_t |_{S^*_t \cap S'^*_t}}(\theta_t') \|_2 + 2 \gamma_t C \frac{f+1}{n-f},
    \end{align*}
    where the second inequality corresponds to the worst-case scenario, namely when the index of the worker with the differing data samples lies in the intersection of the two selected subsets of workers, i.e., $a \in S^*_t \cap S^{\prime*}_t$.
    Hence, with similar derivation as in the $\mathrm{GD}$ case, we obtain the following recursion
    \begin{equation}\label{recursion-poisoning}
        \mb{E}_{\mc{A}} \delta_{t+1} \leq \eta_{G^{\mathrm{SGD}}_{\gamma_t} |_{S^*_t \cap S'^*_t}} \mb{E}_{\mc{A}} \delta_{t} + 2 \gamma_t \frac{f+\frac{1}{m}}{n-f} C,
    \end{equation}
    where $\eta_{G^{\mathrm{SGD}}_{\gamma_t} |_{S^*_t \cap S'^*_t}}$ is explicited in~\eqref{eq:expansivity-subset-intermediate}, 
    that is $\eta_{G^{\mathrm{SGD}}_{\gamma_t} |_{S^*_t \cap S'^*_t}} \leq \eta_{G^{\mathrm{SGD}}_{\gamma_t}}$ if $\eta_{G^{\mathrm{SGD}}_{\gamma_t}} \geq 1$; 
    or $\eta_{G^{\mathrm{SGD}}_{\gamma_t} |_{S^*_t \cap S'^*_t}} \leq 1 + \frac{n-2f}{n-f}(\eta_{G^{\mathrm{SGD}}_{\gamma_t}}-1)$ if $\eta_{G^{\mathrm{SGD}}_{\gamma_t}} < 1$.
    We finish the proof as above, the depending on the regularity of the underlying loss function.
\end{proof}
    
\begin{remark}
    Our bounds does not depend on the robustness coefficient $\kappa$ of the $\mathrm{SMEA}$ robust aggregation, but solely on its structural characteristic—namely, the sub-sampling of worker vectors. 
    Additionally, our proof does not exploit the randomness in the selection of workers at each step, as this would require assumptions regarding the local distributions of the misbehaving workers 
    to evaluate the probability of selecting the differing data samples' gradients. 
    Interestingly, in practice, one could observe improved stability compared to the setting without misbehaving workers, possibly due to an extended ``burn-in period''. 
    That is, if the differing samples are not selected by the aggregation rule when first drawn. This phenomenon could be further explored through numerical experiments.
\end{remark}

\begin{remark}
    We can actually achieve the same theoretical bound as the non-misbehaving workers framework without the robust aggregation rule. Specifically, by reverting to the classical non-robust mean aggregation, 
    we are able to handle terms with the same proof techniques from~\citet{hardt2016train}, since data heterogeneity does not manifest in the worst-case uniform algorithmic stability analysis. 
    However, it is important to note that while this approach simplifies the analysis, the population risk may still remain unbounded due to the inherently unbounded optimization error 
    that can arise from mean aggregation when facing an arbitrarily poisoned dataset.
\end{remark}

\subsection{\texorpdfstring{Proof of~\Cref{lower-bound-gd-poisonous-merged} for GD}{Proof of Theorems 3.3 for GD}}\label{lb-smea-gd}
The following lemma plays a key role in understanding the $\mathrm{SMEA}$ aggregation rule, which is essential for the subsequent lower-bound constructions.
\begin{lemma}\label{smea-pick-gen}
    Let $n, f \in \mb{N}^*, f < \frac{n}{2}$ and $g_A, g_B, g_C$ three collinear vectors in $\mb{R}^d, d \geq 1$. 
    Assume we observe a pool of $n$ vectors, consisting of $\alpha$ copies of $g_A$, $\beta$ copies of $g_B$ and $\gamma$ copies of $g_C$, where $\alpha, \beta, \gamma \in \{0, \ldots, n\}$, $\alpha+\beta+\gamma = n$.
    We abstract each subset $S \subseteq \{1, \ldots, n\}$, composed of $|S|=n-f$ vectors from our pool of vectors as $S_{a , b, c}$ 
    with $a \in \{0, \ldots, \alpha \}, b \in \{0, \ldots, \beta \}, c \in \{0, \ldots, \gamma \}$, such that $a+b+c=n-f$ and $S$ is composed by $a$ copies of $g_A$, $b$ copies of $g_B$ and c copies of $g_C$. 
    With this notation, we have
    \begin{align*}    
        S^* 
        & \in \underset{\substack{S \subset \{1, \ldots, n\}, \\ |S|=n-f}}{\text{argmin}} \lambda_{\max}(\frac{1}{|S|} \sum_{i \in S} (g_i - \overline{g}_S) {(g_i - \overline{g}_S)}^\intercal) \\
        & = \frac{1}{{(n-f)}^2} \underset{\substack{S_{a , b, c} \\ a \in \{0, \ldots, \alpha \}, b \in \{0, \ldots, \beta \}, \\c \in \{0, \ldots, \gamma \}, \text{ }a+b+c=n-f}}{\text{argmin}} \{ ab \| g_A - g_B \|_2^2 + b c \| g_B - g_C \|_2^2 + a c \| g_A - g_C \|_2^2 \}.
    \end{align*}
    Let $u$ be a unit vector indicating the line the vectors are colinear to. Since all computations are effectively confined to this one-dimensional subspace, we could equivalently emphasize that
    \[
        \lambda_{\max}(\frac{1}{|S|} \sum_{i \in S} (g_i - \overline{g}_S) {(g_i - \overline{g}_S)}^\intercal)
        = \frac{1}{|S|} \sum_{i \in S} {(g_i^\intercal u - \overline{g}_S^\intercal u)}^2 \lambda_{\max}(u u^\intercal)
        = \text{Var} \left( {\{g_i^\intercal u\}}_{i \in S}  \right).
    \]
\end{lemma}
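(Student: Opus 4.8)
The plan is to reduce the spectral quantity to a one–dimensional variance, then to evaluate that variance combinatorially via the classical pairwise–differences identity, and finally to observe that the reduction is an exact reformulation of the optimization problem.

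First I would parametrize the common line: since $g_A,g_B,g_C$ are collinear, fix the unit direction $u$ and write $g_X = q + t_X u$ for $X\in\{A,B,C\}$, where $q$ is a fixed point of the line and $t_A,t_B,t_C\in\mathbb{R}$. For any subset $S\subseteq\{1,\dots,n\}$ with $|S|=n-f$, every summand $g_i-\overline g_S$ is then a scalar multiple of $u$, so the empirical covariance collapses to a rank–one (or zero) matrix:
\[
\frac{1}{|S|}\sum_{i\in S}(g_i-\overline g_S)(g_i-\overline g_S)^{\intercal}
= \Big(\tfrac{1}{|S|}\sum_{i\in S}(t_i-\overline t_S)^2\Big)\,u u^{\intercal}
= \operatorname{Var}\!\big(\{g_i^{\intercal}u\}_{i\in S}\big)\,u u^{\intercal},
\]
using $g_i^{\intercal}u = q^{\intercal}u + t_i$, so that recentering removes $q^{\intercal}u$. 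Because $u u^{\intercal}$ is positive semidefinite with $\lambda_{\max}(u u^{\intercal})=\|u\|_2^2=1$ and the scalar coefficient is nonnegative, $\lambda_{\max}$ of the whole matrix equals $\operatorname{Var}(\{g_i^{\intercal}u\}_{i\in S})$. This already yields the final displayed identity of the lemma and, crucially, shows that $\lambda_{\max}$ depends on $S$ only through the multiset $\{t_i\}_{i\in S}$.

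Next I would evaluate this variance for $S=S_{a,b,c}$. Applying the identity $\operatorname{Var}(\{x_i\}_{i=1}^N)=\frac{1}{N^2}\sum_{1\le i<j\le N}(x_i-x_j)^2$ with $N=n-f$, and grouping the $\binom{n-f}{2}$ pairs by type — the $\binom a2+\binom b2+\binom c2$ intra–group pairs contribute $0$, while there are $ab$, $bc$, $ac$ cross pairs of types $A$–$B$, $B$–$C$, $A$–$C$ — gives
\[
\operatorname{Var}\!\big(\{t_i\}_{i\in S_{a,b,c}}\big)
= \frac{1}{(n-f)^2}\Big(ab\,(t_A-t_B)^2 + bc\,(t_B-t_C)^2 + ac\,(t_A-t_C)^2\Big).
\]
Since $g_X-g_Y=(t_X-t_Y)u$ and $\|u\|_2=1$, each $(t_X-t_Y)^2$ equals $\|g_X-g_Y\|_2^2$, so the bracket is exactly $ab\|g_A-g_B\|_2^2+bc\|g_B-g_C\|_2^2+ac\|g_A-g_C\|_2^2$. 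Combined with the previous paragraph, $\lambda_{\max}$ of the covariance of $S_{a,b,c}$ is $(n-f)^{-2}$ times this quadratic form.

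Finally I would close the argument by noting that every size-$(n-f)$ sub-collection of the pool equals $S_{a,b,c}$ for some admissible triple ($0\le a\le\alpha$, $0\le b\le\beta$, $0\le c\le\gamma$, $a+b+c=n-f$) and conversely, and that the value of $\lambda_{\max}$ is the same for all subsets realizing a given triple; since $(n-f)^{-2}>0$ is a global constant, minimizing $\lambda_{\max}$ over size-$(n-f)$ subsets is the same optimization as minimizing the displayed quadratic form over admissible triples, which is the asserted equality. I do not expect a genuine obstacle here — the lemma is in essence a bookkeeping computation — so the only points needing care are (i) the rank-one collapse together with the normalization $\lambda_{\max}(u u^{\intercal})=1$, and (ii) the pair-counting in the variance identity, including the degenerate cases in which one or more of $\alpha,\beta,\gamma$ (or of $a,b,c$) vanishes, or some of $g_A,g_B,g_C$ coincide, in which case the corresponding cross terms simply drop and the formula still holds.
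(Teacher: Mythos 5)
Your proposal is correct, and it reaches the same identity as the paper but via a genuinely different central computation. The paper works directly at the matrix level: it expands $\sum_{i\in S}(g_i-\overline g_S)(g_i-\overline g_S)^{\intercal}$ term by term using the explicit expressions $g_A-\overline g_S = \frac{b}{n-f}(g_A-g_B)+\frac{c}{n-f}(g_A-g_C)$ (and analogues), exploits that the outer products commute because the vectors are collinear, and after a page of rearrangement arrives at $(n-f)\bigl[ab\,(g_A-g_B)(g_A-g_B)^{\intercal}+ac\,(g_A-g_C)(g_A-g_C)^{\intercal}+bc\,(g_B-g_C)(g_B-g_C)^{\intercal}\bigr]$, only invoking the one-dimensional reduction and $\lambda_{\max}(uu^{\intercal})=1$ at the very end. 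You instead perform the rank-one collapse \emph{first}, so that the whole problem becomes a scalar variance, and then obtain the coefficients $ab$, $bc$, $ac$ immediately from the classical identity $\operatorname{Var}(\{x_i\}_{i=1}^N)=\frac{1}{N^2}\sum_{i<j}(x_i-x_j)^2$ by counting cross-group pairs. Your route is shorter, less error-prone, and makes the combinatorial structure of the formula transparent; the paper's route is self-contained (it does not presuppose the pairwise-differences identity) and verifies the matrix-level statement before specializing to one dimension. Both handle the degenerate cases ($a$, $b$, or $c$ equal to zero, or coinciding vectors) correctly, since the corresponding cross terms simply vanish.
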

\begin{proof}
    Let $S_{a , b, c}$ with $a \in \{0, \ldots, \alpha \}, b \in \{0, \ldots, \beta \}, c \in \{0, \ldots, \gamma \}$,
    such that $a+b+c=n-f$ and $S$ is composed by $a$ copies of $g_A$, $b$ copies of $g_B$ and c copies of $g_C$. 
    Then $\overline{g}_S = \frac{1}{n-f} (a g_A + b g_B + c g_C)$, $g_A - \overline{g}_S = \frac{b}{n-f} (g_A - g_B) + \frac{c}{n-f} (g_A - g_C)$,
    $g_B - \overline{g}_S = \frac{a}{n-f} (g_B - g_A) + \frac{c}{n-f} (g_B - g_C)$ and $g_C - \overline{g}_S = \frac{a}{n-f} (g_C - g_A) + \frac{b}{n-f} (g_C - g_B)$.
    We already note that the outer product of vectors $g_A$, $g_B$ and $g_C$ commute because they are collinear. 
    We denote $(*) = {(n-f)}^2 \sum_{i \in S_{a, b , c}} (g_i - \overline{g}_S) {(g_i - \overline{g}_S)}^\intercal$, by expending and re-arranging the terms, we obtain
    \begin{align*}
        (*)
        & = a \times [b^2 (g_A - g_B) {(g_A - g_B)}^\intercal 
            + c^2 (g_A - g_C) {(g_A - g_C)}^\intercal
            + 2 bc (g_A - g_B) {(g_A - g_C)}^\intercal] \\
        &\quad + b \times [a^2 (g_B - g_A) {(g_B - g_A)}^\intercal
            + c^2 (g_B - g_C) {(g_B - g_C)}^\intercal
            + 2 ac (g_B - g_A) {(g_B - g_C)}^\intercal] \\
        &\quad + c \times [a^2 (g_C - g_A) {(g_C - g_A)}^\intercal
            + b^2 (g_C - g_B) {(g_C - g_B)}^\intercal
            + 2 ab (g_C - g_A) {(g_C - g_B)}^\intercal] \\
        & = (n-f) ab (g_A - g_B) {(g_A - g_B)}^\intercal - abc (g_A - g_B) {(g_A - g_B)}^\intercal \\
        &\quad + (n-f) ac (g_A - g_C) {(g_A - g_C)}^\intercal - abc (g_A - g_C) {(g_A - g_C)}^\intercal \\
        &\quad + (n-f) bc (g_B - g_C) {(g_B - g_C)}^\intercal - abc (g_B - g_C) {(g_B - g_C)}^\intercal \\
        &\quad + 2abc \left[ (g_A - g_B) {(g_A - g_C)}^\intercal + (g_B - g_A) {(g_B - g_C)}^\intercal + (g_C - g_A) {(g_C - g_B)}^\intercal \right].
    \end{align*}
    We then re-arrange the terms and simplify the result,
    \begin{align*}
        (*)
        & = (n-f) ab (g_A - g_B) {(g_A - g_B)}^\intercal - abc (g_A - g_B) {(g_A - g_B)}^\intercal \\
        &\quad + (n-f) ac (g_A - g_C) {(g_A - g_C)}^\intercal - abc (g_A - g_C) {(g_A - g_C)}^\intercal \\
        &\quad + (n-f) bc (g_B - g_C) {(g_B - g_C)}^\intercal - abc (g_B - g_C) {(g_B - g_C)}^\intercal \\
        &\quad + abc \left[ (g_A - g_B) {(g_A - g_C)}^\intercal - (g_A - g_B) {(g_B - g_C)}^\intercal \right] \\
        &\quad + abc \left[ (g_A - g_B) {(g_A - g_C)}^\intercal - (g_A - g_C) {(g_C - g_B)}^\intercal \right] \\
        &\quad + abc \left[ (g_B - g_A) {(g_B - g_C)}^\intercal - (g_C - g_A) {(g_B - g_C)}^\intercal \right] \\
        &= (n-f) \left[ ab (g_A - g_B) {(g_A - g_B)}^\intercal + ac (g_A - g_C) {(g_A - g_C)}^\intercal + bc (g_B - g_C) {(g_B - g_C)}^\intercal \right].
    \end{align*}
    To conclude the proof, we leverage the fact that $g_A, g_B, g_C$ are collinear vectors. Let $u$ be a unit vector representing the line the vectors are colinear to.
    Since each $g \in \{ g_A, g_B, g_C \}$ lies along the line spanned by $u$, we can factor out $u$ and use the fact that $\lambda_{\max} (uu^\intercal) = 1$,    
    \begin{multline*}
        \lambda_{\max}(\frac{1}{n-f} \sum_{i \in S_{a, b , c}} (g_i - \overline{g}_S) {(g_i - \overline{g}_S)}^\intercal) 
        = \frac{1}{{(n-f)}^2} \bigg( ab \| g_A - g_B \|_2^2 + b c \| g_B - g_C \|_2^2 + a c \| g_A - g_C \|_2^2 \bigg).
    \end{multline*}
\end{proof}

\subsubsection{Convex case}\label{lb-smea-gd-convex}

\begin{theorem}\label{lb-smea-gd-linear}
    Consider the setting described in~\Cref{II-problem-formulation} under data poisoning.
    Let $\mc{A} = \mathrm{GD}$, with the $\mathrm{SMEA}$. Suppose $\mc{A}$ is run for $T\in\mb{N}^*$ iterations with $\gamma \leq \frac{2}{L}$.
    Then there exist $\ell \in {\mb{R}}^{\Theta \times \mc{Z}}$ such that $\forall z \in \mc{Z}, \ell(\cdot; z)$ is $C$-Lipschitz, $L$-smooth and convex,
    and a pair of neighboring datasets such that the uniform stability is lower bounded by
    \[
        \Omega \left( \gamma C^2 T \left( \frac{f}{n-f} + \frac{1}{(n-f)m} \right) \right).
    \]
\end{theorem}
\begin{proof}
    In what follows, we provide the proof for the case $d=1$. The result extends naturally to higher dimensions by embedding the one-dimensional construction into $\mb{R}^d$. 
    An explicit construction for $d=2$ is presented in the lower-bound proof for $\mathrm{SGD}$, with $\mathrm{SMEA}$, and convex loss function.
    We arbitrarily refer to one trajectory as \textit{unperturbed}, with parameter $\theta_t$, and the other as \textit{perturbed}, with $\theta'_t$. 
    The proof is structured in the following three parts.
    \begin{itemize}
        \item[\textit{(i)}] First, we specify the loss function and the pair of neighboring datasets used. We then describe the structure of our problem, along with its regularity. 
        Broadly speaking, the setup involves three distinct subgroups of workers. The first subgroup acts as a neutral or pivot subgroup that contains the differing samples, 
        while the other two subgroups compete to be selected by the aggregation rule alongside the neutral subgroup.
        \item[\textit{(ii)}] We then show that, within our setup, it is possible to favor one subgroup through initialization and the other through perturbation introduced by the differing sample.
        This property holds inductively throughout the optimization process, ensuring that the unperturbed and perturbed parameters converge to distinct minimizers.
        As a result, the two trajectories remain divergent at each iteration, introducing an additional expansive term relative to the behavior observed under standard averaging rule.
        \item[\textit{(iii)}] Finally, we leverage this construction to derive a lower bound on uniform stability, which matches the upper bound established in~\Cref{th-poisoning-convex-smea-gd}.
    \end{itemize} 

    \paragraph{\textit{(i)}}
    Let $C, L \in \mb{R}^*_+$, we define the following loss function,
    \begin{align*}
        \text{For } \theta \in \mb{R},\text{ } z \in [-C, C],\quad \ell(\theta; z) = z \theta,\quad \ell'(\theta; z) = z,\quad \ell''(\theta; z) = 0.
    \end{align*}
    By construction, the loss function is convex, $C$-Lipschitz and $L$-smooth. 
    Let $m, n, f \in \mb{N}^*$ such that $f < \frac{n}{2}$. 
    We define
    \[
        0 < \psi := \sqrt{\frac{n-2f-\frac{2}{m}}{n-2f+\frac{2}{m}}} < 1,
    \]
    and the following pair of neighboring datasets:
    \begin{align*}
        \begin{split}
            S = \{D_1, \ldots, D_n \}, 
            & \quad S' = S \setminus \{D_1\} \cup \{D_1 \setminus \{z^{(1,1)}\} \cup \{z^{\prime(1,1)}\} \} \\
            \text{ with } \forall i \in \{1, \ldots, n \}, 
            & \quad D_i = \{ z^{(i,1)}, \ldots, z^{(i,m)} \} \\
            \text{ and } \forall j \in \{1, \ldots, m \},
            & \quad i=1\!\!: z^{(1,j)} = 0,\quad z^{(1,1)} = -C, \text{ and } z^{\prime(1,1)} = 0. \\
            & \quad i \in N = \{2, \ldots, n-2f\}, |N| = n-2f-1\!\!: z^{(i,j)} = 0. \\ 
            & \quad i \in E = \{n-2f+1, \ldots, n-f\}, |E| = f\!\!: z^{(i,j)} = \frac{1+\psi}{2}C. \\ 
            & \quad i \in F = \{n-f+1, \ldots, n\}, |F| = f\!\!: z^{(i,j)} = -C. 
        \end{split}
    \end{align*}
    Building on this setup, we first characterize the form of the parameters and gradients throughout the training process. We then specify the conditions under which our lower bound holds. 
    Let $\gamma \leq \frac{2}{L}$ the learning rate and assume $\theta_0 = \theta_0' = 0$. 
    For $t \in \{0, \ldots, T-1\}$, $g^{(N)}_t = g^{(N)\prime}_t = g^{(1)\prime}_t =  0$, 
    $g^{(F)}_t = g^{(F)\prime}_t = - C$, $g^{(E)}_t = g^{(E)\prime}_t = \frac{1+\psi}{2}C$, and $g^{(1)}_t = -\frac{C}{m}$.
    
    \paragraph{\textit{(ii)}} In what follows, we denote, for a subset $S \subset \{1, \ldots, n\}$ of size $n-f$, and for all $t \in \{0, \ldots, T-1\}$, 
    \[
        \lambda_{\max}^{S} 
        \vcentcolon= \lambda_{\max}\left(\frac{1}{|S|} \sum_{i \in S} (g^{(i)}_t - \overline{g}_S) {(g^{(i)}_t - \overline{g}_S)}^\intercal\right)
        = \lambda_{\max}\left(\frac{1}{|S|} \sum_{i \in S} (g^{(i)}_0 - \overline{g}_S) {(g^{(i)}_0 - \overline{g}_S)}^\intercal\right)  
    \]
    with $\overline{g}_S = \frac{1}{|S|} \sum_{i \in S} g^{(i)}_0$. Recall that $\mathrm{SMEA}$ performs averaging over the subset of gradients selected according to the following rule:
    \[
        S^*_t \in \underset{S \subseteq \{1, \ldots, n\}, |S|=n-f}{\text{argmin}} \lambda_{\max} \left( \frac{1}{|S|} \sum_{i \in S} \left(g^{(i)}_t-\overline{g}_S \right) {\left(g^{(i)}_t - \overline{g}_S \right)}^\intercal \right).
    \]
    We highlight that the gradients are independent of the parameters and remain constant throughout the optimization process, as they are equal to the mean of their local samples.
    With a slight abuse of notation, we denote, for instance, the set $\{1\} \cup N \cup E$ by $\{1, N, E\}$.
    To uncover the source of instability, we consider the following two key conditions.
    \begin{itemize}
        \item[\textit{(C1)}] By~\Cref{smea-pick-gen}, for all $t \in \{0, \ldots, T-1\}$, $S^{*\prime}_t = \{1, N, E\}$ is equivalent to
        \[
            \lambda_{\max}^{\prime \{1, N, F\}} = \frac{f(n-2f)}{{(n-f)}^2} C^2 >  \frac{f(n-2f)}{{(n-f)}^2} {\left(\frac{1+\psi}{2}\right)}^2 C^2 = \lambda_{\max}^{\prime \{1, N, E\}},
        \] 
        which is ensured by construction. In fact, we again recall that the gradients are independent of the parameters and remain constant throughout the optimization process.
        Next, we demonstrate that any subset composed of a mixture of gradients from ${1, N}$, $F$, and $E$ necessarily yields a larger maximum eigenvalue. 
        This is due to the fact that the vectors $g^{(F)\prime}_0$ and $g^{(E)\prime}_0$ are collinear but oriented in opposite directions, thereby contributing additively to the overall magnitude. 
        To make this precise, we consider the following two cases.
        \begin{itemize}
            \item[(1)] Let denote $S$ the set were we remove $q<f$ gradients of the subgroup $E$ and add $q$ gradients of the subgroup $F$ from the set $\{1, N, E\}$, 
            we end up comparing the previous quantity to the following, by~\Cref{smea-pick-gen},
            \begin{align*}
                \lambda_{\max}^{\prime S} 
                & = \frac{(f-q)(n-2f)}{{(n-f)}^2} C^2 \| g^{(E)\prime}_0 - g^{(N)\prime}_0 \|_2^2  + \frac{q(n-2f)}{{(n-f)}^2} \| g^{(F)\prime}_0 - g^{(N)\prime}_0 \|_2^2 \\
                &\quad + \frac{q(f-q)}{{(n-f)}^2} \| g^{(F)\prime}_0 - g^{(E)\prime}_0 \|_2^2 \\
                & = \frac{(f-q)(n-2f)}{{(n-f)}^2} C^2 {\left(\frac{1+\psi}{2}\right)}^2  + \frac{q(n-2f)}{{(n-f)}^2} C^2 + q(f-q) C^2 {\left(\frac{3+\psi}{2}\right)}^2 \\
                & = \lambda_{\max}^{\prime \{1, N, E\}} + \frac{q(n-2f)}{{(n-f)}^2} C^2 \left(1 - {\left(\frac{1+\psi}{2}\right)}^2\right) + \frac{q(f-q)}{{(n-f)}^2} C^2 {\left(\frac{3+\psi}{2}\right)}^2\\
                & > \lambda_{\max}^{\prime \{1, N, E\}}.
            \end{align*}
            \item[(2)] Similarly, let denote $S$ the set were we remove $q<\min(f,n-2f)$ gradients of the subgroup $\{1, N\}$ and add $q$ gradients of the subgroup $F$ from the set $\{1, N, E\}$, 
            we end up comparing the previous quantity to the following, by~\Cref{smea-pick-gen},
            \begin{align*}
                \lambda_{\max}^{\prime S} 
                & = \frac{f(n-2f-q)}{{(n-f)}^2} C^2 \| g^{(E)\prime}_0 - g^{(N)\prime}_0 \|_2^2  + \frac{q(n-2f-q)}{{(n-f)}^2} \| g^{(F)\prime}_0 - g^{(N)\prime}_0 \|_2^2 \\
                &\quad + \frac{qf}{{(n-f)}^2} \| g^{(F)\prime}_0 - g^{(E)\prime}_0 \|_2^2 \\
                & = \lambda_{\max}^{\prime \{1, N, E\}} + \frac{q(n-2f-q)}{{(n-f)}^2} C^2 + \frac{q f}{{(n-f)}^2} C^2 \left({\left(\frac{3+\psi}{2}\right)}^2 - {\left(\frac{1+\psi}{2}\right)}^2\right) \\
                & > \lambda_{\max}^{\prime \{1, N, E\}}.
            \end{align*}
        \end{itemize}

        \item[\textit{(C2)}] As shown in $(C1)$, the core tension in the choice of the $\mathrm{SMEA}$ aggregation rule ultimately reduces to a comparison between the sets $\{1, N, E\}$ and $\{1, N, F\}$, a consideration that also applies for $(C2)$.
        By~\Cref{smea-pick-gen}, for all $t \in \{0, \ldots, T-1\}$,
        \begin{equation*}
            S^*_{t} = \{1, N, F\} \Longleftrightarrow
            \left(n-2f-\frac{2}{m}\right) | g^{(F)}_t |^2 < (n-2f) | g^{(E)}_t |^2 + \frac{2}{m} | g^{(F)}_t | | g^{(E)}_t |.
        \end{equation*}
        As $| g^{(E)}_t | < | g^{(F)}_t |$, the above inequality holds if
        \[
            \left(n-2f-\frac{2}{m}\right) | g^{(F)}_t |^2 < (n-2f + \frac{2}{m}) | g^{(E)}_t |^2,
        \]
        which holds by construction as 
        \[
            \psi | g^{(F)}_t | = \psi C <  \frac{1+\psi}{2} C = | g^{(E)}_t |.
        \]
    \end{itemize}
    Hence, by construction $\mathrm{SMEA}$ always selects $\{1, N, F\}$ for the unperturbed run, and $\{1, N, E\}$ for the perturbed one.
    Let denote $p = \frac{f+1/m}{n-f}$, as $\theta_0 = \theta'_0 = 0$, we then have by the $(C1)$ and $(C2)$ conditions, 
    \[
        \theta'_T = - \gamma \sum_{t=0}^{T-1} \frac{1}{n-f} \sum_{i \in \{1, N, E\}} g^{(i)\prime}_t = - \frac{f}{n-f} \gamma \frac{1+\psi}{2} \gamma C T,
    \] 
    and 
    \[                
        \theta_T = - \gamma \sum_{t=0}^{T-1} \frac{1}{n-f} \sum_{i \in \{1, N, F\}} g^{(i)}_t = p \gamma C T.
    \]

    \paragraph{\textit{(iii)}} Since $\ell(\theta'_T; z)\leq 0$, this directly yields the following lower bound on uniform stability,
    \[
        \sup_{z \in [-C, C]} | \ell(\theta_T; z) - \ell(\theta'_T; z) |
        \geq \ell(\theta_T; C) 
        = C \theta_T
        = p \gamma C^2 T.\qedhere
    \]
\end{proof}

\subsubsection{Strongly convex case}\label{lb-smea-gd-stonglycvx}

The proof structure with strongly convex objectives closely parallels that of~\Cref{lb-smea-gd-linear}. We refer to it where appropriate to avoid unnecessary repetition, while explicitly highlighting the key differences.

\begin{theorem}
    Consider the setting described in~\Cref{II-problem-formulation} under data poisoning.
    Let $\mc{A} = \mathrm{GD}$, with the $\mathrm{SMEA}$. Suppose $\mc{A}$ is run for $T\in\mb{N}^*$ iterations, $T \geq \frac{\ln\left( 1 - c \right)}{\ln\left( 1 - \gamma \mu \right)}$, $0<c<1$, with $\gamma \leq \frac{1}{L}$.
    Then there exist $\ell \in {\mb{R}}^{\Theta \times \mc{Z}}$ such that $\forall z \in \mc{Z}, \ell(\cdot; z)$ is $C$-Lipschitz, $L$-smooth and $\mu$-stongly convex,
    and a pair of neighboring datasets such that the uniform stability is lower bounded by
    \[
        \Omega \left( \frac{C^2}{\mu} \left( \frac{f}{n-f} + \frac{1}{(n-f)m} \right) \right).
    \]
\end{theorem}
\begin{proof}
In what follows, we provide the proof for the case $d=1$. The result extends naturally to higher dimensions by embedding the one-dimensional construction into $\mb{R}^d$.
We arbitrarily refer to one trajectory as \textit{unperturbed}, with parameter $\theta_t$, and the other as \textit{perturbed}, with $\theta'_t$. 
We adopt a reasoning analogous to that used in the lower bound proof in~\Cref{lb-smea-gd-linear}. As explained in~\Cref{lb-smea-gd-linear}, we structure our proof into three paragraphs \textit{(i)}, \textit{(ii)} and \textit{(iii)} below.

\paragraph{\textit{(i)}}
Let $C, L, \mu \in \mb{R}^*_+$, with $\mu \leq L$, we define the following loss function
\[
    \forall \theta \in \Theta = B(0, \frac{C}{2\mu}), z \in \mc{Z} = B(0, \frac{C}{2\mu}),
\]
\begin{align*}
    \ell(\theta; z) = \frac{\mu}{2} {(\theta - z)}^2,\quad
    \ell'(\theta; z) = \mu (\theta-z),\quad
    \ell''(\theta; z) = \mu.
\end{align*}
By construction, the loss function is $\mu$-convex, $C$-Lipschitz and $L$-smooth. 
Let $m, n, f \in \mb{N}^*$ such that $f < \frac{n}{2}$. We define,
\[
    \max\{ \sqrt{\frac{n-2(f+\frac{1}{m})}{n-2(f-\frac{1}{m})}}, 1 - \frac{4}{m(n-2f)} \} < \psi < 1,
\]
and the following pair of neighboring datasets:
\begin{align*}
    \begin{split}
        S = \{D_1, \ldots, D_n \}, 
        & \quad S' = S \setminus \{D_1\} \cup \{D_1 \setminus \{(x^{(1,1)}, y^{(1,1)})\} \cup \{(x^{\prime(1,1)}, y^{\prime(1,1)})\} \} \\
        \text{ with } \forall i \in \{1, \ldots, n \}, 
        & \quad D_i = \{ (x^{(i,1)}, y^{(i,1)}), \ldots, (x^{(i,m)}, y^{(i,m)}) \} \\
        \text{ and } \forall j \in \{1, \ldots, m \},
        & \quad i=1\!\!: z^{(i,j)} = 0,\quad z^{(1,1)} = \frac{C}{2\mu},\quad z^{\prime(1,1)} = 0. \\
        & \quad i \in N = \{2, \ldots, n-2f\}, |N| = n-2f-1\!\!: z^{(i,j)} = 0. \\ 
        & \quad i \in E = \{n-2f+1, \ldots, n-f\}, |E| = f\!\!: z^{(i,j)} = - \frac{1+\psi}{2} \frac{C}{2\mu}. \\ 
        & \quad i \in F = \{n-f+1, \ldots, n\}, |F| = f\!\!: z^{(i,j)} = \frac{C}{2\mu}. 
    \end{split}
\end{align*}
Let $\gamma \leq \frac{1}{L}$ the learning rate. Following this setting, we first state the form that take the parameters and the gradients during the training process, 
then the conditions under which we claim our instability:
for $t \in \{0, \ldots, T-1\}, \theta_0 = \theta_0' = 0$, 
$g^{(N)}_t = \mu \theta'_t$
$g^{\prime(N)}_t = g^{\prime(1)}_t = \mu \theta'_t$, 
$g^{(1)}_t = (1-\frac{1}{m})g^{(N)}_t + g^{(F)}_t / m$, 
$g^{(E)}_t = \mu \left( \theta_t + \frac{1+\psi}{2} \frac{C}{2\mu} \right)$, 
$g^{(E)\prime}_t = \mu \left( \theta'_t + \frac{1+\psi}{2} \frac{C}{2\mu} \right)$,
$g^{(F)}_t = \mu \left( \theta_t - \frac{C}{2\mu} \right)$ and 
$g^{(F)\prime}_t = \mu \left( \theta'_t - \frac{C}{2\mu} \right)$.

\paragraph{\textit{(ii)}} In what follows, we denote, for a subset $S \subset \{1, \ldots, n\}$ of size $n-f$.
Recall that $\mathrm{SMEA}$ performs averaging over the subset of gradients selected according to the following rule, for all $t \in \{0, \ldots, T-1\}$,
\[
    S^*_t \in \underset{S \subseteq \{1, \ldots, n\}, |S|=n-f}{\text{argmin}} \lambda_{\max} \left( \frac{1}{|S|} \sum_{i \in S} \left(g^{(i)}_t-\overline{g}_S \right) {\left(g^{(i)}_t - \overline{g}_S \right)}^\intercal \right).
\]
With a slight abuse of notation, we denote, for instance, the set $\{1\} \cup N \cup E$ by $\{1, N, E\}$.
To uncover the source of instability, we consider the following two key conditions.
\begin{itemize}
    \item[\textit{(C1)}] As shown in the $(C1)$ condition of~\Cref{lb-smea-gd-linear}, the core tension in the choice of the $\mathrm{SMEA}$ aggregation rule ultimately reduces to a comparison between the sets $\{1, N, E\}$ and $\{1, N, F\}$, a consideration that also applies here.
    By~\Cref{smea-pick-gen}, 
    \[
        S^{*\prime}_0 = \{1, N, E\} \underset{\mathrm{SMEA}}{\Longleftrightarrow} |g^{(F)\prime}_0|_2 = \mu \frac{C}{2\mu} >  \mu \frac{1+\psi}{2} \frac{C}{2\mu} = |g^{(E)\prime}_0|_2.
    \]
    Let denote 
    \[
        \lambda_{\max, t}^{ \{1, N, E\}\prime} \vcentcolon= \lambda_{\max} \left( \frac{1}{n-f} \sum_{i \in \{1, N, E\}} (g^{(i)\prime}_t-\overline{g}'_S) {(g^{(i)\prime}_t-\overline{g}'_S)}^{\intercal}\right),
    \]
    and
    \[
        \lambda_{\max, t}^{\{1, N, F\}\prime} \vcentcolon= \lambda_{\max}\left(\frac{1}{n-f} \sum_{i \in \{1, N, F\}} (g^{(i)\prime}_t-\overline{g}'_S) {(g^{(i)\prime}_t-\overline{g}'_S)}^{\intercal}\right).
    \] 
    We then prove that for all $t \geq 0$, $S^{*\prime}_t = \{1, N, E\}$, and the unperturbed run converges to the minimizer of the subgroup $E$. For this property to be true, we require
    \[
        \lambda_{\max, t}^{ \{1, N, E \}\prime} < \lambda_{\max, t}^{ \{1, N, F \}\prime}.
    \] 
    Again, by~\Cref{smea-pick-gen}, it is equivalent to
    \[
        {\left(\frac{1+\psi}{2}\right)}^2 \frac{C^2}{4}
        < \frac{C^2}{4},
    \]
    which holds by construction. As the condition is independent of $t$, hence true for all $t$, this conclude the first condition.
    \item[\textit{(C2)}] By~\Cref{smea-pick-gen}, $S^*_0 = \{1, N, F\} \underset{\mathrm{SMEA} \text{ and } (C1)}{\Longleftarrow} \psi | g^{(F)}_0 | = \psi \mu \frac{C}{2\mu}  < \mu \frac{1+\psi}{2} \frac{C}{2\mu} = | g^{(E)}_0 |$.
    We then prove that for all $t \geq 0$, $S^{*}_t = \{1, N, F\}$, and the perturbed run converges to the minimizer of the subgroup $F$. For this property to be true, we require:
    \[
        \lambda_{\max, t}^{ \{1, N, F \}} < \lambda_{\max, t}^{ \{1, N, E \}}.
    \] 
    Again, by~\Cref{smea-pick-gen}, it is equivalent to
    \begin{multline*}
        f(n-2f-1) \frac{C^2}{4} + f \frac{C^2}{4}{(1-\frac{1}{m})}^2 + (n-2f-1) \frac{C^2}{4m^2} \\
        < f(n-2f-1) {\left(\frac{1+\psi}{2}\right)}^2 \frac{C^2}{4} + f \frac{C^2}{4}{\left(\frac{1+\psi}{2}+\frac{1}{m}\right)}^2 + (n-2f-1) \frac{C^2}{4m^2}.
    \end{multline*}
    That is,
    \[
        0
        < - (n-2f-1) \left(\frac{1-\psi}{2}\right) \left(\frac{3+\psi}{2}\right) + \left(\frac{3+\psi}{2}\right) \left(\frac{2}{m} - \frac{1-\psi}{2}\right),
    \]
    which is equivalent to
    \[
        \psi > \frac{4}{(n-f)m}.
    \]
    As the condition is independent of $t$, hence true for all $t$, this conclude the second condition.
\end{itemize}
Let denote $p = \frac{f+1/m}{n-f}$. 
We have proven that,for $t\geq 0$,
\[
    \mathrm{SMEA}(g^{(1)}_t, \ldots, g^{(n)}_t) = \frac{1}{n-f} \sum_{i\in\{1, N, F\}} g^{(i)}_t = \mu \theta_t - p \frac{C}{2},
\]
\[
    \mathrm{SMEA}(g^{(1)\prime}_t, \ldots, g^{(n)\prime}_t) = \frac{1}{n-f} \sum_{i\in\{1, N, E\}} g^{(i)\prime}_t = \mu \theta'_t + \frac{f}{n-f} \frac{1+\psi}{2} \frac{C}{2}.
\]
For $t\geq 0$, $\theta_{t+1} = \left( 1 - \gamma \mu \right) \theta_t + \frac{\gamma p C}{2}$ and $\theta'_{t+1} = \left( 1 - \gamma \right) \theta'_t - \frac{\gamma f (1+\psi)C}{4(n-f)}$. 
Hence,
$\theta'_T \leq 0$ and,
\[
    \theta_T =  \frac{pC}{2\mu} \left( 1 - {\left(1 - \gamma \mu \right)}^T \right)
\]

\paragraph{\textit{(iii)}} Provided that there exist $0<c<1$ such that $T \geq \frac{\ln\left( 1 - c \right)}{\ln\left( 1 - \gamma \mu \right)}$, 
this implies $| \theta_T - \theta'_T | \geq | \theta_T | \in \Omega(\frac{pC}{\mu})$.
Finally, let define $\chi = - \text{sign}(\theta_T+\theta'_T)$, we have
\begin{align*}
    | \ell(\theta_T; \chi \frac{C}{2\mu}) - \ell(\theta'_T; \chi \frac{C}{2\mu}) | 
    & = \frac{\mu}{2} | {\left(\theta_T - \chi \frac{C}{2\mu}\right)}^2 - {\left(\theta'_T-\chi \frac{C}{2\mu}\right)}^2| \\
    & = \frac{\mu}{2} |\theta_T - \theta'_T| |\theta_T + \theta'_T + \text{sign}(\theta_T+\theta'_T) \frac{C}{\mu}| \\
    & \geq \frac{\mu}{2} |\theta_T| \frac{C}{\mu} \in \Omega(\frac{pC^2}{\mu}). \qedhere
\end{align*}

\end{proof}

\subsection{\texorpdfstring{Proof of~\Cref{lower-bound-gd-poisonous-merged} for SGD}{Proof of Theorems 3.3 for projected SGD}}\label{lb-smea-sgd}
In this section, we derive a lower bound on the uniform stability of distributed $\mathrm{projected-GD}$ with $\mathrm{SMEA}$ under data poisoning.
The proof structure closely parallels that of~\Cref{lb-smea-gd-linear}. We refer to it where appropriate to avoid unnecessary repetition, while explicitly highlighting the key differences.

\begin{theorem}\label{lower-bound-sgd-convex}
    Consider the setting described in~\Cref{II-problem-formulation} under data poisoning.
    Let $\mc{A} = \mathrm{projected-SGD}$ with $\mathrm{SMEA}$. Suppose $\mc{A}$ is run for $T\in\mb{N}^*$ iterations with $\gamma \leq \frac{1}{L}$.
    Assume there exist a constant $\tau \geq c > 0$ for an arbitrary c, such that $T \geq \tau m$, 
    then there exist $\ell \in {\mb{R}}^{\Theta \times \mc{Z}}$ such that $\forall z \in \mc{Z}, \ell(\cdot; z)$ is $C$-Lipschitz, $L$-smooth and convex,
    and a pair of neighboring datasets such that the uniform stability is lower bounded by
    \[
        \Omega \left( \gamma C^2 T \left( \frac{f}{n-f} + \frac{1}{(n-f)m} \right) \right).
    \]
\end{theorem}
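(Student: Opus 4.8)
The plan is to transfer the deterministic lower-bound construction of \Cref{lb-smea-gd-linear} to the stochastic, projected setting, keeping the same three-block structure of workers and the same role for the tuning parameter $\psi \in (0,1)$. I would work with a loss from $\mc{F}_\mc{Z}$ whose dynamics drift at rate $\Theta(\gamma C T)$ in the convex (not strongly convex) regime, carried out explicitly for $d = 2$ and lifted to general $d$ by embedding, with a pivot worker $1$ holding $m-1$ copies of a neutral sample together with the differing sample, an honest block $N$ of Diracs at $0$, and two competing blocks $E, F$ of size $f$ each whose gradients are collinear but oppositely oriented. Since every worker outside the pivot carries a Dirac, the only randomness is whether worker $1$ draws its differing sample at a given step, which happens with probability $1/m$; hence, in expectation over the $\mathrm{SGD}$ draws, worker $1$ contributes its empirical mean, which differs between $\mc{S}$ and $\mc{S}'$ by exactly the $\tfrac{1}{m}$ term that already appeared in the $\mathrm{GD}$ analysis.

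The core of the argument is to show that the $\projSGD$ trajectories track, in expectation, the $\mathrm{GD}$ trajectories of \Cref{lb-smea-gd-linear}, so that the same $\mathrm{SMEA}$ dichotomy persists: the perturbed run selects the $E$-side, is driven toward the negative half-space, and is therefore pinned at $0$ by the projection at every iteration, while the unperturbed run selects the $F$-side and accumulates a displacement of order $\gamma C \big( \tfrac{f}{n-f} + \tfrac{1}{(n-f)m} \big)$ per effective step, yielding $\mb{E}_{\mc{A}}[\theta_T] = \Omega\big( \gamma C T ( \tfrac{f}{n-f} + \tfrac{1}{(n-f)m} ) \big)$ and $\theta'_T = 0$. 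I would establish this by (i) coupling the two runs on a common sequence of sample draws; (ii) verifying, via \Cref{smea-pick-gen} and the analogues of conditions (C1) and (C2) in \Cref{lb-smea-gd-linear}, that the selections do not flip to the wrong block despite the oscillation of worker $1$'s sampled gradient; and (iii) using the hypothesis $T \geq \tau m$ to ensure that the differing sample is drawn enough times for this empirical-mean behaviour to manifest and for the burn-in phase, during which the two runs have not yet diverged, to be a negligible fraction of the horizon. The projection onto the positive half-space is precisely what prevents the accumulated displacement from being washed out and lets it be read off directly from $\theta_T$; conversely, since the projection is non-expansive, the upper bound of \Cref{th-poisoning-convex-smea-sgd} (convex case) remains valid for $\projSGD$, so the two bounds match.

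With $\theta'_T = 0$ and the lower bound on $\mb{E}_{\mc{A}}[\theta_T]$ in hand, evaluating the loss at $z = C$ gives $\mb{E}_{\mc{A}}[\ell(\theta_T; C) - \ell(\theta'_T; C)] = C\, \mb{E}_{\mc{A}}[\theta_T] = \Omega\big( \gamma C^2 T ( \tfrac{f}{n-f} + \tfrac{1}{(n-f)m} ) \big)$, which is the claimed bound on uniform stability. The hard part is steps (ii) and (iii): controlling how the random draws perturb both the $\mathrm{SMEA}$ selection and the projection events, so that the neighboring datasets and $\psi$ can be calibrated, as in \cref{lower-bound-dataset-linear}, to be robust to the sampling noise throughout the run, and showing that a constant number of draws already shifts the selection dynamics enough that the remaining $\Theta(T)$ iterations produce the stated drift. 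This robustness of the construction to stochastic sampling is the main new difficulty relative to the $\mathrm{GD}$ case.
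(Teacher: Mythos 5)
There is a genuine gap, and it is precisely the point the paper's proof flags as the reason the $\mathrm{GD}$ construction cannot be transferred verbatim. Your plan is to reuse the construction of \Cref{lb-smea-gd-linear} (the linear loss $\ell(\theta;z)=z\theta$ with the datasets of \cref{lower-bound-dataset-linear}) and argue that ``in expectation worker $1$ contributes its empirical mean,'' so the $\projSGD$ trajectories track the $\mathrm{GD}$ ones. This fails because the $\mathrm{SMEA}$ selection is a nonlinear function of the \emph{realized} gradients, not of their expectations, and for a linear loss all gradients are independent of $\theta$. Consequently, at every step where worker $1$ does not draw the differing sample (probability $1-\tfrac1m$), the unperturbed run presents $\mathrm{SMEA}$ with \emph{exactly} the same gradient configuration as the perturbed run, so the same subset $\{1,N,E\}$ is selected and no divergence accrues; the runs can only separate at the $\approx T/m$ steps where the differing sample is drawn. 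This yields at best a divergence of order $\gamma C \tfrac{T}{m}\cdot\tfrac{f}{n-f}$, losing a factor of $m$ on the dominant term, and in fact the unperturbed run has a net negative drift between draws and is itself pinned near $0$ by the projection. Your own key sentence---``a constant number of draws already shifts the selection dynamics enough that the remaining $\Theta(T)$ iterations produce the stated drift''---is the right mechanism, but it is unrealizable with parameter-independent gradients: nothing about the selection dynamics can be ``shifted'' by a draw if the gradients do not depend on $\theta$.

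The missing idea, which is the actual content of the paper's proof in \Cref{lb-smea-sgd}, is to replace the linear loss by a (Huberized) least-squares loss on a ray $\Theta=\{\lambda v:\lambda\ge 0\}\subset\mb R^2$, calibrated so that block $F$ and the differing sample share the datapoint $(bv,\beta/b)$ with $b=1/\sqrt T$. Then the first draw of the differing sample moves $\theta$ strictly into the positive ray, which \emph{changes} the gradients of blocks $E$ and $F$ (increasing $\|g^{(E)}\|$, slowly decreasing $\|g^{(F)}\|$) and thereby locks the $\mathrm{SMEA}$ selection to $\{1,N,F\}$ for all subsequent iterations, whether or not the differing sample is drawn again; the scaling $b^2=1/T$ keeps the per-step drift essentially constant so the displacement is linear in $T-T_0$, and conditioning on $T_0$ together with $T\ge\tau m$ gives the $\Omega(1)$ prefactor. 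The parts of your sketch that are correct and match the paper---the three-block design, the projection pinning the perturbed run at $0$, reasoning about the first occurrence time, and evaluating the loss at a fixed test point at the end---do not compensate for the absence of this change of loss function, without which steps (ii) and (iii) of your outline cannot be carried out.
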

\begin{proof}
    We adopt a reasoning analogous to that used in the lower bound proof in~\Cref{lb-smea-gd-linear}. However, we can no longer rely on a linear loss function. 
    The reason is that, in the linear case, the gradients are independent of the model parameters, so when the differing samples are not selected, the perturbed 
    and unperturbed executions remain indistinguishable—thus precluding any divergence.
    We also have to account for the first occurrence of the differing samples, denoted as $T_0 = \inf\{t; J^{(1)}_{t-1} = 1\}$. 
    In fact, we require to ensure the conditions $(C1)$ (that is $S^{*\prime}_0 = \{1, N, E\}$) at initialization 
    but we now would like to ensure $(C2)$ when the differing samples are drawn (that is $S^{*}_{t} = \{1, N, F\}$ for $t \geq t_0 - 1$, knowing $T_0 = t_0$ for any time $t_0 < T$). 
    To do so, we modify the possible parameters we are searching through our optimization process, 
    making the condition at initialization sufficient to ensure $(C1)$ and $(C2)$ for any $t_0 \in \{0, \ldots, T-1\}$. 
    By defining $\Theta = \{\lambda v; 0 \leq \lambda < \infty \}$ and applying the projection 
    $p_{\Theta}$ at each step to constrain the parameter within the set of admissible values, we ensure,
    under condition $(C1)$ that $\theta'_t = 0$ for all $t>0$ and $\theta_t = 0$ for $t<t_0$. 
    Consequently, the condition $(C2)$ is ensured by construction for any time the differing sample is drawn.
    While using a projection simplifies the lower bound proof (\Cref{lb-smea-sgd}), it does not affect the tightness of our result since the upper bound on uniform stability remains valid for $\projSGD$ as the projection does not increase the distance between projected points.
    
    As explained in~\Cref{lb-smea-gd-linear}, we structure our proof into three paragraphs \textit{(i)}, \textit{(ii)} and \textit{(iii)} below.
    \paragraph{\textit{(i)}}
    Let $C, L \in \mb{R}^*_+$, we define the following dataset and loss.
    \[
        \forall \theta \in \Theta = \{\lambda v; \lambda \in \mb{R}\} \subset \mb{R}^2, (x,y) \in B(0, \sqrt{L}) \times \mb{R},
    \]
    \begin{align*}
        \ell(\theta; (x,y)) = &
        \begin{cases} 
        \frac{1}{2} {\left( \theta^T x - y \right)}^2 & \text{if } | \theta^T x - y | \|x\|_2 \leq C \\
        C \left( \frac{| \theta^T x - y |}{\|x\|_2} - \frac{C}{2\|x\|^2_2} \right) & \text{otherwise}.
    \end{cases}
    \end{align*} 
    \begin{align*}
        \nabla \ell(\theta; (x,y)) = &
        \begin{cases} 
        (\theta^T x - y) x & \text{if } | \theta^T x - y | \|x\|_2 \leq C \\
        C \text{ sign} (\theta^T x - y) \frac{x}{\|x\|_2} & \text{otherwise}.
    \end{cases}
    \end{align*}
    \begin{align*}
        \nabla^2 \ell(\theta; (x,y)) = &
        \begin{cases} 
        0 \preceq  xx^T \preceq L I_d & \text{if } | \theta^T x - y | \|x\|_2 \leq C \\
        0 & \text{otherwise}.
    \end{cases}
    \end{align*}
    By construction, the loss function is convex, $C$-Lipschitz and $L$-smooth. 
    Let $m, n, f \in \mb{N}^*$ such that $f < \frac{n}{2}$. We define,
    \[
        0 < \psi = \frac{n - 2f -2}{n-2f} < 1,
    \]
    \[
        \min\{1-\psi, \gamma L \frac{f}{n-f}\} > \epsilon > 0  
    \]
    and the following pair of neighboring datasets:
    \begin{align*}
        \begin{split}
            &\quad v \in B(0, \sqrt{L}),\quad \|v\|^2_2 = L,\quad  \beta = \frac{C}{\sqrt{L}},\quad \alpha = (1-\epsilon) \beta, \quad b = \frac{1}{\sqrt{T}}  \\
            S = \{D_1, \ldots, D_n \}, 
            & \quad S' = S \setminus \{D_1\} \cup \{D_1 \setminus \{(x^{(1,1)}, y^{(1,1)})\} \cup \{(x^{\prime(1,1)}, y^{\prime(1,1)})\} \} \\
            \text{ with } \forall i \in \{1, \ldots, n \}, 
            & \quad D_i = \{ (x^{(i,1)}, y^{(i,1)}), \ldots, (x^{(i,m)}, y^{(i,m)}) \} \\
            \text{ and } \forall j \in \{1, \ldots, m \},
            & \quad i=1\!\!: (x^{(1,j)}, y^{(1,j)}) = (0, 0),\quad (x^{(1,1)}, y^{(1,1)}) = (bv, \frac{\beta}{b}), \\
            & \qquad\qquad\qquad \text{ and } (x^{\prime(1,1)}, y^{\prime(1,1)}) = (0, 0). \\
            & \quad i \in N = \{2, \ldots, n-2f\}, |N| = n-2f-1\!\!: (x^{(i,j)}, y^{(i,j)}) = (0, 0). \\ 
            & \quad i \in E = \{n-2f+1, \ldots, n-f\}, |E| = f\!\!: (x^{(i,j)}, y^{(i,j)}) = (v, -\alpha). \\ 
            & \quad i \in F = \{n-f+1, \ldots, n\}, |F| = f\!\!: (x^{(i,j)}, y^{(i,j)}) = (bv, \frac{\beta}{b}). 
        \end{split}
    \end{align*}
    Let $\gamma \leq \frac{1}{L}$ the learning rate. Following this setup, we first describe the form of the parameters and gradients throughout the training process, 
    and then state the conditions under which we establish our instability result, assuming $\theta_0 = \theta_0' = 0$.
    For $t \in \{0, \ldots, T-1\}$, and $\theta_t = \lambda_t v$, $\theta'_t = \lambda'_t v$, $\lambda_t, \lambda'_t \in [\lambda^*_E \vcentcolon= -\frac{\alpha}{L}, \lambda^*_F \vcentcolon= \frac{\beta}{b^2L}]$,
    \begin{align*}
        g^{(N)}_t = 0, \quad\quad\quad
        & \| g^{(E)}_t \|_2 > \psi \| g^{(F)}_t \|_2,
        & g^{(F)}_t = - \underbrace{\left( \beta - \lambda_t L b^2 \right)}_{>0 \text{ and } < \frac{C}{\sqrt{L}}} v \\
        g^{(N)\prime}_t = 0, \quad\quad\quad
        & g^{(E)\prime}_t = \underbrace{\left( \lambda'_t L + \alpha \right)}_{>0 \text{ and } < \frac{C}{\sqrt{L}}} v,
        & \| g^{(F)\prime}_t \|_2 > \| g^{(E)\prime}_t \|_2.
    \end{align*}

    \paragraph{\textit{(ii)}} In what follows, we denote, for a subset $S \subset \{1, \ldots, n\}$ of size $n-f$.
    Recall that $\mathrm{SMEA}$ performs averaging over the subset of gradients selected according to the following rule, for all $t \in \{0, \ldots, T-1\}$,
    \[
        S^*_t \in \underset{S \subseteq \{1, \ldots, n\}, |S|=n-f}{\text{argmin}} \lambda_{\max} \left( \frac{1}{|S|} \sum_{i \in S} \left(g^{(i)}_t-\overline{g}_S \right) {\left(g^{(i)}_t - \overline{g}_S \right)}^\intercal \right).
    \]
    With a slight abuse of notation, we denote, for instance, the set $\{1\} \cup N \cup E$ by $\{1, N, E\}$.
    To uncover the source of instability, we consider the following two key conditions:
    \begin{itemize}
        \item[\textit{(C1)}] As shown in the $(C1)$ condition of~\Cref{lb-smea-gd-linear}, the core tension in the choice of the $\mathrm{SMEA}$ aggregation rule ultimately reduces to a comparison between the sets $\{1, N, E\}$ and $\{1, N, F\}$, a consideration that also applies here.
        $S^{*\prime}_0 = \{1, N, E\} \underset{\mathrm{SMEA}}{\Longleftrightarrow} \|g^{(F)\prime}_0\|_2 = \beta \sqrt{L} >  \alpha \sqrt{L} = \|g^{(E)\prime}_0\|_2$.        
        Hence, $\theta'_1 =  p_{\Theta}(- \gamma \frac{f}{n-f} g^{(E)}_{0}) = p_{\Theta}(- \gamma \frac{f}{n-f} \alpha v) = 0$.
        We then prove by induction, that for all $t \geq 0, S^{*\prime}_t = \{1, N, E\}$ and the unperturbed run remains at 0.
        The perturbed run remains at 0 until the differing samples are drawn.

        \item[\textit{(C2)}] Let denote 
        \[
            \lambda_{\max, t}^{ \{1, N, E\}} \vcentcolon= \lambda_{\max} \left( \frac{1}{n-f} \sum_{i \in \{1, N, E\}} (g^{(i)}_t-\overline{g}_S) {(g^{(i)}_t-\overline{g}_S)}^{\intercal}\right),
        \]
        and
        \[
            \lambda_{\max, t}^{\{1, N, F\}} \vcentcolon= \lambda_{\max}\left(\frac{1}{n-f} \sum_{i \in \{1, N, F\}} (g^{(i)}_t-\overline{g}_S) {(g^{(i)}_t-\overline{g}_S)}^{\intercal}\right).
        \]
        Assuming $T_0 = t_0$ known, $S^*_{t_0-1} = \{1, N, F\} \underset{\mathrm{SMEA} \text{ and } (C1)}{\Longleftrightarrow} \frac{n-2(f+1)}{n-2} \| g^{(F)}_{t_0-1} \|_2 < \| g^{(E)}_{t_0-1} \|_2$. 
        In fact, by~\Cref{smea-pick-gen} with $g_A = g^{(E)}_{t_0-1}$, $g_B = g^{(F)}_{t_0-1}$ and $g_C = 0$,
        \begin{multline*}
            S^{*}_{t_0-1} 
            \in \underset{}{\text{argmin}} \bigg\{
                \lambda_{\max, t_0-1}^{\{1, N, F\}} = \frac{(f+1)(n-2f-1)}{{(n-f)}^2} \| g^{(F)}_{t_0-1} \|_2^2,\\
                \qquad\qquad\quad \lambda_{\max, t_0-1}^{\{1, N, E\}} = \frac{f(n-2f-1)}{{(n-f)}^2} \| g^{(E)}_{t_0-1} \|_2^2 
                    + \frac{(n-2f-1)}{{(n-f)}^2} \| g^{(F)}_{t_0-1} \|_2^2 
                    + \frac{f}{{(n-f)}^2}  {\left( \| g^{(E)}_{t_0-1} \|_2 + \| g^{(F)}_{t_0-1} \|_2 \right)}^2
            \bigg\}.
        \end{multline*}
        Hence, 
        \begin{align*} 
            S^*_{t_0-1} = \{1, N, F\} 
            & \Leftrightarrow \left(n-2f-1\right) (\| g^{(F)}_{t_0-1} \|_2^2 - \| g^{(E)}_{t_0-1} \|_2^2 ) < {(\| g^{(F)}_{t_0-1} \|_2 + \| g^{(E)}_{t_0-1} \|_2)}^2 \\
            & \Leftrightarrow \left(n-2f-1\right) (\| g^{(F)}_{t_0-1} \|_2 - \| g^{(E)}_{t_0-1} \|_2 ) < \| g^{(F)}_{t_0-1} \|_2 + \| g^{(E)}_{t_0-1} \|_2 \\
            & \Leftrightarrow \psi \| g^{(F)}_{t_0-1} \|_2  < \| g^{(E)}_{t_0-1} \|_2.
        \end{align*}
        which holds by construction of our initialization, 
        $
            \psi \| g^{(F)}_{t_0-1} \|_2 < \| g^{(E)}_{t_0-1} \|_2 < \| g^{(F)}_{t_0-1} \|_2
        $.

        We then prove that for the next step, $S^*_{t_0} = \{1, N, F\}$ in any case. This is sufficient to prove by induction that for all $t \geq t_0, S^{*}_t = \{1, N, F\}$ and the perturbed run converges to the minimizer of the subgroup $F$. 
        In fact, for $t \geq t_0$, $\|g^{(F)}_t\|_2$ decreases and $\|g^{(E)}_t\|_2$ increases.
            \begin{itemize}
                \item[(1)] with probability $\frac{1}{m}$, the differing samples are again drawn. 
                As we already ensured the conditions in the previous step, we have
                $
                    \psi \| g^{(F)}_{t_0} \|_2
                    \leq \| g^{(F)}_{t_0-1} \|_2
                    < g^{(E)}_{t_0-1} \|_2
                    \leq \| g^{(E)}_{t_0} \|_2
                $,
                then $S^*_{t_0} = \{1, N, F\}$.
                \item[(2)] with probability $1-\frac{1}{m}$ we have to ensure 
                $
                    S^*_{t_0} = \{1, N, F\} \underset{\mathrm{SMEA}}{\Longleftrightarrow} \| g^{(F)}_{t_0} \|_2 \leq \| g^{(E)}_{t_0} \|_2
                $
                with $\theta_{t_0} = - \gamma \frac{f+1}{n-f} g^{(F)}_{t_0-1} = \gamma \beta \frac{f+1}{n-f} v$,
                hence
                \[
                    \| g^{(F)}_{t_0} \|_2 = - \beta \left( 1 - \gamma \frac{f+1}{n-f} L b^2 \right)v,
                \]
                \[
                    \| g^{(E)}_{t_0} \|_2 = \max\{ \beta, \beta \left( 1 - \epsilon + \gamma L \frac{f+1}{n-f} \right) \} v = \beta v
                \]
            \end{itemize}
        We have proven that, for $t \geq t_0 - 1$, $S^*_{t_0-1} = \{1, N, F\}$. 
        Thus,
        \begin{align*}
            \mb{E}[\theta_t | T_0 = t_0] = &
            \begin{cases} 
              0 & \text{if } t \leq t_0 - 1 \\
              \gamma \beta \frac{f+1}{n-f} v & \text{for } t = t_0 \\
              \mb{E}[\theta_{t-1} | T_0 = t_0] - \gamma p \mb{E}[g^{(F)}_{t-1} | T_0 = t_0]  & \text{otherwise}.
           \end{cases}
        \end{align*}
        with $\theta_t = \lambda_t v$, $p = \frac{f+\frac{1}{m}}{n-f}$, the recurrence for $t > t_0$ is given by
        \begin{align*}
            \mb{E}[\lambda_{t} | T_0 = t_0] & 
            = \mb{E}[\lambda_{t-1} | T_0 = t_0] + p \gamma ( \beta - \mb{E}[\lambda_{t-1} | T_0 = t_0] L b^2) \\
            & = \left( 1 - p \gamma L b^2 \right) \mb{E}[\lambda_{t-1} | T_0 = t_0] + p \gamma \beta.
        \end{align*}
        Hence
        \begin{align*}
            \mb{E}[\theta_t | T_0 = t_0] = &
            \begin{cases} 
              0 & \text{if } t \leq t_0 - 1 \\
              \frac{f+1}{f+\frac{1}{m}} \frac{\beta}{b^2L} \left( 1 - {(1 - p \gamma L b^2)}^{t-t_0+1} \right) v & \text{otherwise}.
           \end{cases}
        \end{align*}
    \end{itemize}
    Taking the total expectation, we obtain
    \begin{align*}
        \mb{E}[ \theta_T ] &
        = \sum_{t_0 = 1}^{T} \mb{E}[ \theta_T | T_0 = t_0] \mb{P}(T_0 = t_0) \\
        & = \frac{CT \frac{f+1}{f+\frac{1}{m}}}{L m} \sum_{t_0 = 1}^{T} \left( 1 - {\left(1 - \frac{p \gamma L}{T}\right)}^{T-t_0+1} \right) {\left(1-\frac{1}{m}\right)}^{t_0-1} \frac{v}{\sqrt{L}}
    \end{align*}

    \paragraph{\textit{(iii)}}
    In the remainder of the proof, we bound $\mb{E}[ \theta_T ]$. Specifically, we leverage its analytical expression to establish a concise lower bound on uniform argument stability.
    \begin{align*}
        \sum_{t_0 = 1}^{T} \left( 1 - {\left(1 - p \gamma L b^2\right)}^{T-t_0+1} \right) {\left(1-\frac{1}{m}\right)}^{t_0-1} 
        & \geq \sum_{t_0 = 1}^{T} \left( 1 - \frac{1}{1 + p \gamma L b^2 (T-t_0+1)} \right) {\left(1-\frac{1}{m}\right)}^{t_0-1} \\
        & \geq \sum_{t_0 = 1}^{T} \frac{p \gamma L \frac{T-t_0+1}{T}}{1 + p \gamma L \frac{T-t_0+1}{T}} {\left(1-\frac{1}{m}\right)}^{t_0-1}
    \end{align*}
    Where we used that ${\left(1-p\gamma L b^2\right)}^{T-t_0+1} \leq \frac{1}{1+p\gamma L b^2(T-t_0 +1)}$ for $T-t_0+1 \geq 0$ and $-1 < -p\gamma L b^2 < 0$.
    In fact using $\ln(1+x) \leq x$ for $x > -1$, 
    \begin{align*}
        \ln\left[{\left(1-p\gamma L b^2\right)}^{T-t_0+1}(1+p\gamma L b^2(T-t_0+1))\right] 
        & = (T-t_0+1)\ln(1-p\gamma L b^2)
        + \ln(1+p\gamma L b^2(T-t_0+1)) \\
        & \leq -p\gamma L b^2(T-t_0+1) + p\gamma L b^2(T-t_0+1)
        = 0.
    \end{align*}
    Also, as $\frac{1}{1+p\gamma L \frac{T-t_0+1}{T}} \geq \frac{1}{1+\gamma L p} \geq \frac{1}{1+p} \geq \frac{1}{2}$, we have
    \begin{multline*}
        \sum_{t_0 = 1}^{T} \left( 1 - {\left(1 - p \gamma L b^2\right)}^{T-t_0+1} \right) {\left(1-\frac{1}{m}\right)}^{t_0-1} \\
        \qquad\qquad \geq \frac{p \gamma L}{2} \sum_{t_0 = 1}^{T} \left(1 - \frac{t_0-1}{T}\right) {\left(1-\frac{1}{m}\right)}^{t_0-1} \hfill \\
        \qquad\qquad = \frac{p \gamma L m}{2} \left(1 - {\left(1-\frac{1}{m}\right)}^T \right) - \frac{p \gamma L }{2 T} \sum_{t_0 = 1}^{T} (t_0-1) {\left(1-\frac{1}{m}\right)}^{t_0-1} \hfill  \\
        \qquad\qquad = \frac{p \gamma L m}{2} \left(1 - {\left(1-\frac{1}{m}\right)}^T\right) - \frac{p \gamma L}{2 T} \left[ m(m-1) -m^2 {\left(1-\frac{1}{m}\right)}^T \left(1+\frac{T-1}{m}\right) \right] \hfill \\
        \qquad\qquad = \frac{p \gamma L m}{2} \left[ 1 - \frac{m-1}{T} \left( 1 - {\left(1-\frac{1}{m}\right)}^T \right) \right], \hfill
    \end{multline*}
    where we used the finite polylogarithmic sum formula in the second equality above.

    If we additionally assume there exist a constant $\tau \geq c > 0$ for an arbitrary c, such that $T \geq \tau m$, we then can prove that
    $K(T,m) = \left[ 1 - \frac{m-1}{T} \left( 1 - {\left(1-\frac{1}{m}\right)}^T \right) \right] \in \Omega(1)$. 
    In fact this is trivial for $m=1$, then we prove that the quantity $K(T, m)$ increases with $T$, for $T \geq 1$ and $m \geq 2$.
    We prove by induction that
    $
        K(T+1) - K(T) = \frac{m-1}{T(T+1)} \left[1-{\left(1-\frac{1}{m}\right)}^T(1+\frac{T}{m})\right] > 0
    $ which is equivalent to ${\left(1-\frac{1}{m}\right)}^T(1+\frac{T}{m}) < 1$.
    For $T=1$, $(1-\frac{1}{m})(1+\frac{1}{m}) = 1 - \frac{1}{m^2} < 1$. For $T+1>2$, $\frac{{\left(1-\frac{1}{m}\right)}^{T+1}(1+\frac{T+1}{m})}{{\left(1-\frac{1}{m}\right)}^T(1+\frac{T}{m})} = (1-\frac{1}{m})\frac{1+\frac{T+1}{m}}{1+\frac{T}{m}} < 1$.
    Hence
    \begin{align*}
        1 - \frac{m-1}{T} \left( 1 - {\left(1-\frac{1}{m}\right)}^T \right)
        & \geq 1 - \frac{m-1}{\tau m} \left( 1 - {\left(1-\frac{1}{m}\right)}^{\tau m} \right) \\
        & \underset{(1)}{\geq} 1 - \frac{1}{\tau} (1-\frac{1}{m}) \left( 1 - e^{-\tau}(1-\frac{\tau}{m}) \right) \\
        & \underset{(2)}{\geq} 1 - \frac{1-e^{-\tau}}{\tau} \\
        & \geq 1 - \frac{1-e^{-c}}{c} > 0,
    \end{align*}
    where we used in the second inequality $(1)$ that ${\left(1 - \frac{1}{m}\right)}^{\tau m} \geq e^{-\tau}(1-\frac{\tau}{m})$ (we can prove it by taking the logarithm and compare their Taylor expansion).
    We also used in the third inequality $(2)$ that $1 - \frac{1}{\tau} (1-\frac{1}{m}) \left( 1 - e^{-\tau}(1-\frac{\tau}{m}) \right) - 1 - \frac{1-e^{-\tau}}{\tau} = \frac{1}{\tau m} \left( 1 - (1+\tau)e^{-\tau} + \frac{\tau e^{-\tau}}{m}  \right) \geq \frac{e^{-\tau}}{m^2} \geq 0$.
    Wrapping-up yields
    \begin{align*}
        \mb{E}[ \| \theta_T - \theta'_T \|_2 ] 
        \geq \| \mb{E}[ \theta_T ] \|_2
        \geq \frac{1}{2} (1 - \frac{1-e^{-c}}{c}) \frac{f+1}{f+\frac{1}{m}} p \gamma C T 
        \in \Omega( p \gamma C T ).
    \end{align*}
    Building on this lower bound for uniform argument stability, we then establish a connection to uniform stability, allowing us to complete the comparison with the upper bound from~\Cref{th-poisoning-convex-smea-main-text}.
    Indeed,
    \[
        \mb{E} \left[ \ell(\theta_T; (v, -\frac{C}{\sqrt{L}})) - \ell(\theta'_T; (v, -\frac{C}{\sqrt{L}})) \right]
        = C \sqrt{L} \mb{E}\lambda_T + \frac{C^2}{2L} - \frac{C^2}{2L}
        \in \Omega\left( p C^2 \gamma T \right). \qedhere
    \]
\end{proof}

\section{Stability Discrepancy Between the Threat Models in Nonconvex Learning}\label{app:nonconvex}

This section presents additional comparisons in nonconvex settings that were omitted from the main text. 
Specifically, we compare the stability bounds obtained in~\Cref{th:byz-sgd-ub-nonconvex}, using the $\mathrm{SMEA}$ aggregation rule under data poisoning---$\varepsilon^{\text{poisoning}}_{\mathrm{SMEA}}$ from~\eqref{eq:up-ncvx-pois-smea}---against those under Byzantine failures---$\varepsilon^{\text{Byzantine}}$ from~\eqref{eq:nonconvex-byz}.
This comparison yields the following ratio
\begin{equation}\label{eq:NONCVX-comparisons}
    \varepsilon^{\text{Byzantine}} / \varepsilon^{\text{poisoning}}_{\mathrm{SMEA}}
    = {\left( \frac{\frac{1}{(n-f)m} + \sqrt{\kappa}}{\frac{1}{(n-f)m} + \frac{f}{n-f}} \right)}^{\frac{1}{c+1}}.
\end{equation}
Here, we recover an analogous discrepancy to the convex learning case: we improve the dependency from $\sqrt{\kappa}$ to $\frac{f}{n-f}$.
A rigorous derivation of this comparison is provided below.

Unlike $\mathrm{GD}$—where the differing samples affect every iteration—in $\mathrm{SGD}$, it may not be sampled immediately. 
This distinction enables our analysis to yield better stability upper bound with $\mathrm{SGD}$, mimicking the proof technique of~\citet{hardt2016train}.
Below, we state a lemma reasoning about the first occurrence of the differing samples.
\begin{lemma}\label{lemmagen2}
    Consider the setting described in~\Cref{II-problem-formulation}, under either Byzantine failures or data poisoning.
    We assume for all $z \in \mc{Z}$, $\ell(\cdot; z)$ nonnegative and uniformly bounded by $\ell_\infty \in \mb{R}_+$.  
    Let $\mc{S}$ and $\mc{S}'$ being two neighboring dataset, we denote $\theta_T = \mc{A}(\mc{S})$, $\theta_T' = \mc{A}(\mc{S}')$ and intermediate state of the algorithm $\theta_t$, $\theta_t'$ and $\delta_t = \| \theta_t - \theta_t' \|_2$ for any $t \in \{0, \ldots, T \}$. 
    Then for every $z \in \mc{Z}$ and every $t_0 \in \{0, \ldots, \min(m,T)\}$, the following holds for $\mathrm{SGD}$ with sampling with replacement
    \[
        \mb{E}_{\mc{A}}[ | \ell(\theta_T;z) - \ell(\theta_T';z) | ] 
        \leq \min\{ \ell_\infty,
        \frac{t_0}{m} \ell_\infty
        + \mb{E}_{\mc{A}}[ | \ell(\theta_T;z) - \ell(\theta_T';z) | | \delta_{t_0}=0]\}
    \]
\end{lemma}
\begin{proof}
    Let $z \in \mc{Z}$ and $t_0 \in \{0, \ldots, \min(m, T)\}$, we have
    \begin{align*}
        \mb{E}_{\mc{A}}[ | \ell(\theta_T;z) - \ell(\theta_T';z) | ] = \quad &  \mb{E}_{\mc{A}}[| \ell(\theta_T;z) - \ell(\theta_T';z) ||\delta_{t_0}=0] \mb{P}(\delta_{t_0} = 0) \\ 
              & +  \mb{E}_{\mc{A}}[| \ell(\theta_T;z) - \ell(\theta_T';z) ||\delta_{t_0} \neq 0] \mb{P}(\delta_{t_0} \neq 0) \\ 
         \leq \quad & \mb{E}_{\mc{A}}[| \ell(\theta_T;z) - \ell(\theta_T';z) ||\delta_{t_0}=0] + \mb{P}(\delta_{t_0} \neq 0) \ell_\infty.
    \end{align*}
    To bound $\mb{P}(\delta_{t_0} \neq 0)$, we consider $T_0$, the random variable of the first step the differing samples are drawn. 
    We have
    \[
        \mb{P}(\delta_{t_0} \neq 0) \leq \mb{P}(T_0 \leq t_0) \leq \min \left(1, \frac{t_0}{m} \right).
    \]
    For the first inequality, we used the fact that under identical conditions (i.e., the same seed and initial state), the algorithm operating on two neighboring datasets will produce the same intermediate states until the differing samples are drawn.
    For the second inequality, we use the union bound to obtain $\mb{P}(T_0 \leq t_0) \leq \sum_{t=1}^{t_0} \mb{P}(T_0 = t) = \min(1, \frac{t_0}{m})$.
\end{proof}

The following supporting result will be used to derive readable bounds.

\begin{lemma}\label{sum-product-exponential}
    Let $c > 0$, $T \in \mathbb{N}^*$ and $t_0 \in \{0, \ldots, T-1\}$. Then
    \begin{equation*}
        \sum_{t=t_0}^{T-1} \frac{1}{t}  \prod_{s=t+1}^{T-1} e^{\frac{c}{s}} \leq \frac{1}{c} {\left( \frac{T}{t_0} \right)}^{c}.
    \end{equation*}
\end{lemma}
\begin{proof}
    \begin{equation*}
        \sum_{t=t_0}^{T-1} \frac{1}{t}  \prod_{s=t+1}^{T-1} e^{\frac{c}{s}}
        = \sum_{t=t_0}^{T-1} \frac{1}{t} e^{c \sum_{s=t+1}^{T-1} \frac{1}{s}} 
        \leq \sum_{t=t_0}^{T-1} \frac{ e^{c \log(\frac{T}{t})}}{t}  
        = T^{c} \sum_{t=t_0}^{T-1} t^{-c - 1}
        \leq \frac{1}{c} {\left( \frac{T}{t_0} \right)}^{c}.
    \end{equation*}
    Where we used the following sum-integral inequality: for any non-increasing and integrable function $f$, and for $a, b \in \mb{N}$, it holds that
    $\sum_{t=a}^{b} f(t) \leq \sum_{t=a}^{b} \int_{t-1}^{t} f(s) ds = \int_{a-1}^{b} f(s) ds$.
    Hence, with $\alpha \in \mb{R}_+^*$,
    \[
        \sum_{s=t+1}^{T-1} \frac{1}{s} \leq \int_{t}^{T-1}\frac{1}{s}ds = \ln(\frac{T-1}{t}) \leq \ln(\frac{T}{t}),
    \]
    and
    \[
        \sum_{t=t_0}^{T-1} t^{-\alpha - 1} \leq \int_{t_0}^{T} t^{-\alpha - 1}dt \leq -\frac{1}{\alpha} (T^{-\alpha} - t_0^{-\alpha}) \leq \frac{1}{\alpha} t_0^{-\alpha}.\qedhere
    \]
\end{proof}

Building on the above lemmas, we now state uniform stability bounds for $\mathrm{SGD}$ with nonconvex objective.
Our focus is $\mathrm{SGD}$, since $\mathrm{GD}$ lacks meaningful uniform stability guarantees in the nonconvex regime issue illustrated in~\citet[Figure 10]{hardt2016train} and demonstrated in~\citet[Section 6]{pmlr-v80-charles18a}. 
This instability arises because a single differing sample affects \emph{every} step of $\mathrm{GD}$. 
In contrast, $\mathrm{SGD}$ only incorporates the differing sample when it is sampled, allowing us to reason about the first such occurrence. 
This distinction enables the following bounds.

\begin{theorem}\label{th:byz-sgd-ub-nonconvex}
    Consider the setting described in~\Cref{II-problem-formulation} under Byzantine failures.
    Let $\mc{A}=\mathrm{SGD}$, with a $(f, \kappa)$-robust aggregation rule $F$.
    Suppose $\mc{A}$ is run for $T\in\mb{N}^*$ iterations.
    Assume $\forall z \in \mc{Z}$, $\ell(\cdot;z)$ bounded by $\ell_\infty$, nonconvex, $C$-Lipschitz and $L$-smooth. 
    Then, with a monotonically non-increasing learning rate $\gamma_t \leq \frac{c}{Lt}$ with $t \in \{0, \ldots, T-1\}$ and $c>0$,
    we have for any neighboring datasets $\mc{S}, \mc{S'}$
    \begin{equation}\label{eq:nonconvex-byz}
        \sup_{z \in \mathcal{Z}} \mb{E}_{\mc{A}}[ | \ell(\mc{A(S)};z) - \ell(\mc{A(S')};z) | ] 
        \leq 2 {\left( \frac{2 C^2}{L} \right)}^{\frac{1}{c+1}} {\left( \sqrt{\kappa} + \frac{1}{(n-f)m} \right)}^{\frac{1}{c+1}} {\left( \frac{\ell_\infty T}{m} \right)}^{\frac{c}{c+1}}.
    \end{equation}
    Alternatively, if we analyze the same setting under data poisoning with the $\mathrm{SMEA}$ aggregation rule, it holds that for any neighboring datasets $\mc{S}, \mc{S}'$
    \begin{equation}\label{eq:up-ncvx-pois-smea}
        \sup_{z \in \mathcal{Z}} \mb{E}_{\mc{A}}[ | \ell(\mc{A(S)};z) - \ell(\mc{A(S')};z) | ] 
        \leq 2 {\left( \frac{2 C^2}{L} \right)}^{\frac{1}{c+1}} {\left( \frac{f}{n-f} + \frac{1}{(n-f)m} \right)}^{\frac{1}{c+1}} {\left( \frac{\ell_\infty T}{m} \right)}^{\frac{c}{c+1}}.
    \end{equation}
\end{theorem}
\begin{proof}
    The reasoning behind~\eqref{recursion-byzantine}, as detailed in the proof of~\Cref{byz-sgd-ub-convex} in~\Cref{stability-analysis-classic-byzantine}, applies equally here. We recall the recursion, for $t \in \{0, \ldots, T-1\}$,
    \begin{equation*}
        \mb{E}_{\mc{A}}[ \| \theta_{t+1} - \theta'_{t+1}  \|_2 ] 
        \leq \eta_{G^{\mathrm{SGD}}_{\gamma_t}} \mb{E}_{\mc{A}}[\| \theta_t - \theta'_t  \|_2] + \gamma_t \sigma^F_{\mathrm{SGD}},
    \end{equation*}
    where $\eta^{\mathrm{SGD}}_{G{\gamma_t}} \leq 1 + \gamma_t L$ for nonconvex and $L-$smooth functions (cf.~\Cref{lemmaexpansivity}), and $\sigma^F_{\mathrm{SGD}} = 2 C \left( \frac{1}{(n-f)m} + \sqrt{\kappa} \right)$. 
    Invoking~\Cref{lemmaexpansivity} and considering $t_0 \in \{0, \ldots, T-1\}$, we have, for all $t \in \{t_0, \ldots, T-1\}$,
    \begin{align}\label{eq:rec-noncvx}
        \mb{E}_{\mc{A}}[ \| \theta_{t+1} - \theta'_{t+1}  \|_2 | \| \theta_{t_0} - \theta'_{t_0}  \|_2 = 0] 
        & \leq (1 + \gamma_t L) \mb{E}_{\mc{A}}[ \| \theta_{t} - \theta'_{t}  \|_2 | \| \theta_{t_0} - \theta'_{t_0}  \|_2 = 0] + \gamma_t \sigma^F_{\mathrm{SGD}} \nonumber \\
        & \leq e^{\gamma_t L} \mb{E}_{\mc{A}}[ \| \theta_{t} - \theta'_{t}  \|_2 | \| \theta_{t_0} - \theta'_{t_0}  \|_2 = 0] + \gamma_t \sigma^F_{\mathrm{SGD}},
    \end{align}
    where we used $1 + \gamma_t L \leq e^{\gamma_t L} = e^{c/t}$.
    By summing~\eqref{eq:rec-noncvx}, and then invoking~\Cref{sum-product-exponential}, we obtain
    \[
        \mb{E}_{\mc{A}}[ \| \theta_{T} - \theta'_{T}  \|_2 | \| \theta_{t_0} - \theta'_{t_0}  \|_2 = 0] 
        \leq \frac{c \sigma^F_{\mathrm{SGD}}}{L} \sum_{t=t_0}^{T-1} \frac{1}{t} \prod_{s=t+1}^{T-1} e^{\frac{c}{s}} 
        \leq \frac{\sigma^F_{\mathrm{SGD}}}{L} {\left( \frac{T}{t_0} \right)}^{c}.
    \]
    Substituting our derivation into the bound from~\Cref{lemmagen2} yields
    \[
        \mb{E}_{\mc{A}}[ | \ell(\theta_T;z) - \ell(\theta_T';z) | ] 
        \leq \frac{t_0\ell_\infty}{m} + \frac{\sigma^F_{\mathrm{SGD}} C}{L} {\left( \frac{T}{t_0} \right)}^{c}.
    \]
    We approximately minimize the expression with respect to $t_0$ by balancing the two terms (i.e., by equating them),
    \[
        0 \leq \widetilde{t_0} = {\left( \frac{m\sigma^F_{\mathrm{SGD}} C}{L \ell_\infty} \right)}^{\frac{1}{c+1}} T^{\frac{c}{c+1}} 
        \underset{\text{for sufficiently large } T \geq \frac{m\sigma^F_{\mathrm{SGD}} C}{L \ell_\infty}}{\leq} T.
    \]
    Finally, we obtain the result by directly substituting $\widetilde{t_0}$ into the bound
    \[
        \mb{E}_{\mc{A}}[ | \ell(\theta_T;z) - \ell(\theta_T';z) | ] \leq 2 {\left( \frac{\sigma^F_{\mathrm{SGD}} C}{L} \right)}^{\frac{1}{c+1}} {\left( \frac{\ell_\infty T}{m} \right)}^{\frac{c}{c+1}}.
    \]

    Alternatively, we analyze the same setting under data poisoning with the $\mathrm{SMEA}$ aggregation rule
    The proof under data poisoning is analogous.
    Indeed, the reasoning behind~\eqref{recursion-poisoning}, as detailed in the proof of~\Cref{th-poisoning-convex-smea-main-text} in~\Cref{stability-poisonous}, applies equally here. 
    We recall the recursion, with $\eta^{\mathrm{SGD}}_{G{\gamma_t}_{|S_t^* \cap S_t^{*\prime}}} \leq (1 + \gamma_t L)$ for nonconvex and smooth functions~\Cref{lemmaexpansivity}, considering $t_0\in\{0, \ldots, T-1\}$, for $t \in \{t_0, \ldots, T-1\}$,
    \begin{equation*}
        \mb{E}_{\mc{A}}[ \delta_{t+1} | \delta_{t_0} = 0] 
        \leq (1 + \gamma_t L) \mb{E}_{\mc{A}}[ \| \theta_{t} - \theta'_{t}  \|_2 | \| \theta_{t_0} - \theta'_{t_0}  \|_2  = 0] + 2 \gamma_t C \frac{f+\frac{1}{m}}{n-f}.
    \end{equation*}
    Following the same derivation as in the proof of~\Cref{th:byz-sgd-ub-nonconvex} above yields~\eqref{eq:up-ncvx-pois-smea}.
\end{proof}

\section{\texorpdfstring{Deferred Proofs and Additional Details from Section~\ref{V-experimental}}{Deferred Proofs and Additional Details from Section 4}}\label{deferred-section-4}

\subsection{\texorpdfstring{Proof of~\Cref{th:stability-to-gen,th:generalization-gap}}{Proofs of Theorems 4.1 and Corollary 4.1.1}}\label{gen-error-linear}

Below we compute the generalization error of a learning algorithm in the setting explicited in~\Cref{V-experimental}. 

\begin{replemma}{th:stability-to-gen}
    Consider the setting described in~\Cref{II-problem-formulation}, with $m=1$, regardless of the assumed threat model.
    There exist $\ell \in {\mb{R}}^{\Theta \times \mc{Z}}$ such that $\forall z \in \mc{Z}, \ell(\cdot; z)$ is $C$-Lipschitz, $L$-smooth and convex,
    data distributions ${\{p_i\}}_{i\in\mc{H}}$ over $\mc{Z}$, 
    such that for any distributed algorithm $\mc{A}$, we have
    \begin{equation}\label{stability-to-gen-bis-app}
        \left| \mb{E}_{\mc{A}, \mc{S} \sim \otimes_{i\in\mc{H}} \left(p_i^{\otimes m}\right) } \big[ R_{\mc{H}}(\mc{A(\mc{S})}) - \widehat{R}_{\mc{H}}(\mc{A(\mc{S})}) \big] \right|
        = \frac{1}{4(n-f)}\sup_{z \in \mc{Z}} \mb{E}_{\mc{A}} \left[ \ell(\mc{A}(S); z) - \ell(\mc{A}(S'); z) \right],
    \end{equation}
    where the dependence on the threat model arises solely through the stability term.
\end{replemma}
\begin{proof}
    We consider a linear loss function $z \in [-C, C], \theta \in \mb{R} \mapsto z \theta $, and we assume every honest local distribution is a Dirac (regardless of the value where the singularity is), 
    except for a ``pivot'' honest worker (indexed $1$ without loss of generality) whose distribution is a mixture of Dirac
    $p_1 = \frac{1}{2} \delta_0 + \frac{1}{2} \delta_{-C}$. 
    In this case, when the number of local samples is fixed to one ($m=1$), the two possible datasets are always neighboring.
    Denote them $S^{(-C)}$ and $S^{(0)}$.
    We denote the parameters resulting from $\mc{A}$ on the two different datasets by $\mc{A}(S^{(0)}) = \theta_T^{(0)}$ and $\mc{A}(S^{(-C)}) = \theta_T^{(-C)}$. 

    In this setup, we can compute the generalization error exactly as follows
    \begin{align}\label{stab-to-gen-maintexteq}
        \mb{E}_\mc{S} \big[ R_{\mc{H}}(\mc{A(\mc{S})}) - \widehat{R}_{\mc{H}}(\mc{A(\mc{S})}) \big]
        & = \frac{1}{|\mc{H}|} \mb{E}_\mc{S} \big[ \mb{E}_{z \sim p_1}[ \theta_T^{(z_1)} z ] - \theta_T^{(z_1)} z_1 \big] \nonumber \\
        & = \frac{1}{|\mc{H}|} \mb{E}_\mc{S} \big[ \frac{1}{2} \theta_T^{(z_1)} \times 0 + \frac{1}{2} \theta_T^{(z_1)} \times (-C) - \theta_T^{(z_1)} z_1 \big] \nonumber \\
        & = \frac{1}{|\mc{H}|} \left( - \frac{C}{2} \mb{E}_\mc{S} \theta^{(z_1)}_T - \mb{E}_\mc{S} \big[ \theta_T^{(z_1)} z_1 \big] \right) \nonumber \\
        & = \frac{1}{|\mc{H}|} \left( - \frac{C}{2} \left( \frac{1}{2} \theta^{(0)}_T + \frac{1}{2} \theta^{(-C)}_T \right) - \frac{1}{2} \theta^{(0)}_T \times 0 - \frac{1}{2} \theta^{(-C)}_T \times (-C) \right) \nonumber \\
        & = \frac{C}{|\mc{H}|} \left( - \frac{1}{4} \theta^{(0)}_T - \frac{1}{4} \theta^{(-C)}_T + \frac{1}{2} \theta^{(-C)}_T \right) \nonumber \\
        & = \frac{C\left(\theta^{(-C)}_T - \theta^{(0)}_T\right)}{4(n-f)}.
    \end{align}

    As the above computation is independent of the assumed threat model, we have
    \[ 
        \left| \mb{E}_{\mc{A}, \mc{S}} \big[ R_{\mc{H}}(\mc{A(\mc{S})}) - \widehat{R}_{\mc{H}}(\mc{A(\mc{S})}) \big] \right|
        = \frac{1}{4(n-f)} \sup_{z \in [-C, C]} \mb{E}_{\mc{A}} \left| \ell(\mc{A}(S^{(-C)}); z) - \ell(\mc{A}(S^{(0)}); z) \right|.\qedhere
    \] 
\end{proof}

Building upon the theorem above, we leverage our stability analysis to demonstrate that the Byzantine failure and data poisoning threat models differ fundamentally in their capacity to affect the generalization error of distributed learning algorithms.

\begin{reptheorem}{th:generalization-gap} 
    Consider the setting of~\Cref{th:stability-to-gen}, $\frac{n}{3} \leq f < \frac{n}{2}$, $\mc{A} = \mathrm{GD}$ with $\mathrm{SMEA}$. 
    There exist $\ell \in {\mb{R}}^{\Theta \times \mc{Z}}$ such that $\forall z \in \mc{Z}, \ell(\cdot; z)$ is $C$-Lipschitz, $L$-smooth and convex,
    data distributions ${\{p_i\}}_{i\in\mc{H}}$ over $\mc{Z}$ and a Byzantine attack such that for any data poisoning attacks, we have
    \begin{equation}\label{quotient-gap-app}
        \frac{\mathcal{E}^{\mathrm{byz}}_{gen}}{\mathcal{E}^{\mathrm{pois}}_{gen}} \in \Omega\left( \frac{n-f}{\sqrt{f(n-2f)}} \right),
    \end{equation}
    where $\mathcal{E}^{\mathrm{byz}}_{gen}$ and $\mathcal{E}^{\mathrm{pois}}_{gen}$ denote generalization errors under the Byzantine and data poisoning attacks, respectively.
\end{reptheorem}
\begin{proof}    
    We specialize~\Cref{th:stability-to-gen} to $\mc{A} = \mathrm{GD}$ with $\mathrm{SMEA}$.
    If we assume every honest local distribution is a Dirac with singularity at the value defined in the paragraph \textit{(i)} of the proof of~\Cref{lb-smea-gd-linear}---except for a ``pivot'' honest worker, indexed $1$, whose distribution is a mixture $p_1 = \frac{1}{2} \delta_0 + \frac{1}{2} \delta_{-C}$, 
    then the only two possible datasets for honest workers align with the worst-case Byzantine and data-poisoning scenarios considered in our lower bounds (see~\Cref{lb-smea-gd-convex,lowerbound-byzantine} for details of the datasets used).
    Hence, using the attacks described therein demonstrates that the generalization error exhibits the same intrinsic gap between the two threat models, consistent with the uniform stability analysis, as illustrated in~\Cref{fig:numerical}.
    That is $\mathcal{E}^{\mathrm{byz}}_{gen}$ scales according to~\eqref{smea-byz-lb-eq} and $\mathcal{E}^{\mathrm{pois}}_{gen}$ scales according to~\eqref{smea-poisoning-cvx-lb-eq} (both scaled by $\frac{1}{4\left(n-f\right)}$), as revealed by~\eqref{stab-to-gen-maintexteq}.
\end{proof}

\subsection{Numerical Analysis Additional Details}\label{appD-numerical-experiment}
We describe here the details of our numerical experiment presented in~\Cref{V-experimental} into two points: \textit{(i)} the problem instantiation; 
\textit{(ii)} our tailored Byzantine attack.

\begin{itemize}
    \item[\textit{(i)}] We instantiate the setting from the proof of~\Cref{lb-smea-gd-linear} (see full details therein).
    That is, we consider $\mc{A} = \mathrm{GD}$ under a linear loss function, using the parameters $C=1$, $\gamma=1$, $T=5$, $n=15$, $m=1$.
    We make vary $f$ from $1$ to $7$, and for each value instantiate the dataset presented in the proof of~\Cref{lb-smea-gd-linear}.

    \item[\textit{(ii)}] In our tailored Byzantine failures, the identities of the Byzantine workers remain fixed throughout the optimization process, but vary depending on the number of Byzantine workers $f$. 
    The set of Byzantine identities is defined as $\mathcal{B} \vcentcolon= \{1, \ldots, n\} \setminus \mathcal{H}$, and is summarized in the table below. 
    As we will see, we optimize the values sent by Byzantine workers so that they are selected while also biasing the gradient 
    statistics---either toward positive or negative values---depending on an event that triggers them to switch their communicated value. 
    To construct a maximally impactful scenario, we choose the identities of the Byzantine workers to maximize the minimal 
    variance over all subsets of size $n-f$. This increases their biasing power, as it hinges on the smallest variance among 
    these subsets. For instance, when $f=2$, the Byzantine workers can be hidden in the dominant subset $N$; 
    however, when $f=6$, they must be distributed across both competing subgroups $E$ and $F$ to maintain influence.
    
    \begin{table}[h!]
        \centering
        \begin{tabular}{lll}
            $f=1: \mathcal{B} = \{2\}$ 
            & $f=4: \mathcal{B} = \{2, 3, 4, 5\}$ \\
            $f=2: \mathcal{B} = \{2, 3\}$ 
            & $f=5: \mathcal{B} = \{2, 3, 4, 5, 6\}$ \\
            $f=3: \mathcal{B} = \{2, 3, 4\}$ 
            & $f=6: \mathcal{B} = \{7, 8, 9, 10, 11, 12\}$ \\
            & $f=7: \mathcal{B} = \{6, 7, 8, 9, 10, 11, 12\}$
        \end{tabular}
    \end{table}
    For instance, if a single sample from worker $1$ (without loss of generality) is changed from $0$ to $-C$ (defining a neighboring dataset), the server receives a different average $-\frac{1}{m} C < 0$ that triggers the Byzantine workers to amplify the positive or negative parameter direction by adaptively crafting updates that remain within the aggregation rule’s selection range.
    Specifically, their algorithm is as follows.
    \begin{itemize}
        \item If $g_0^{(1)} \geq 0$, all Byzantine workers send the value $\alpha_f$, where $\alpha_f$ is numerically chosen to be as small as possible under the constraint that the $\mathrm{SMEA}$ aggregation rule continues to select the values sent by Byzantine workers in subsequent steps.
        \item If $g_0^{(1)} < 0$, all Byzantine workers send the value $\beta_f$, where $\beta_f$ is numerically chosen to be as large as possible under the constraint that the $\mathrm{SMEA}$ aggregation rule continues to select the values sent by Byzantine workers in subsequent steps. 
    \end{itemize}
    The following pseudocode summarizes the behavior of the Byzantine workers. Let $\alpha \in \mb{R}$ and
    \[
        S^*_{\alpha} \in \underset{\substack{|S|=n-f}}{\text{argmin}} \text{Var} \left( {\{g_k\}}_{k \in S\cap\mc{H}} \cup {\{\alpha\}}_{k \in S \cap \mc{B}}  \right).
    \]
    {\centering
    \begin{minipage}{.82\linewidth}
        \begin{algorithm}[H]
        \caption{Byzantine worker $i \notin \mc{H}$ behavior, $\epsilon = 10^{-3}$}\label{alg:algo2}
        \begin{algorithmic}
            \IF{$g_0^{(1)} \geq 0$}
            \STATE \textbf{return} 
                    $ 
                        \alpha_f 
                        = \inf\bigg\{ \alpha; i \in S^*_{\alpha} \bigg\}
                        + \epsilon
                    $
                    \hspace{0.1cm}\textcolor{gray}{\# amplifies the initial direction}
            \ELSE
            \STATE \textbf{return} 
                    $
                            \beta_f 
                            = \sup\bigg\{ \beta; i \in S^*_{\beta} \bigg\}
                            - \epsilon
                    $
                    \hspace{0.1cm}\textcolor{gray}{\# amplifies the initial direction}
            \ENDIF
        \end{algorithmic}
        \end{algorithm}
    \end{minipage}
    \par
    }
\end{itemize}

\section{Refined Bounds Under Bounded Heterogeneity and Bounded Variance Assumptions}\label{refined-heterogeneity}

For clarity and simplicity, we chose to emphasize our results under the broad bounded gradient assumption (i.e., Lipschitz-continuity of the loss function) in the main text. In this section, we present results derived under refined assumptions—namely, bounded heterogeneity and bounded variance—in place of the more general bounded gradient condition.
These assumptions are more commonly used in the optimization literature (as opposed to the generalization literature).

We formally define these assumptions below.
\begin{assumption}[Bounded heterogeneity]~\label{bounded_heterogeneity}
    There exists $G < \infty$ such that $\forall\theta \in \Theta$,
    $
        \frac{1}{|\mc{H}|} \sum\limits_{i \in \mc{H}} \| \nabla \widehat{R}_{i}(\theta) - \nabla \widehat{R}_{\mc{H}}(\theta)  \|_2^2 \leq G^2
    $.
\end{assumption}
\begin{assumption}[Bounded variance]~\label{bounded_variance}
    There exists $\sigma < \infty$ such that $\forall i \in \mc{H}$, $\forall\theta \in \Theta$,
    $
        \frac{1}{m} \! \sum\limits_{x \in \mc{D}_i} \! \| \nabla \ell (\theta; x) - \nabla \widehat{R}_{i}(\theta)  \|_2^2 \leq \sigma^2
    $.
\end{assumption}

As shown in the following, our analysis techniques can yield tighter bounds when assuming bounded heterogeneity and bounded variance. However, we note that these bounds do not provide additional conceptual insights within the scope of our discussion.
Indeed, within the uniform stability framework, if we assume only bounded gradients and nothing more, there exist distributed learning settings where $G$ is a constant multiple of the Lipschitz constant $C$.
For instance, consider a scenario where half of the honest workers produce gradients with norm $C$, and the other half produce gradients with norm $-C$; in this case, $G$ would be $C$. 

As an example, we derive bounds on the expected spectral norm of the empirical covariance matrix of honest workers' gradients that are expressed in terms of $G$ and $\sigma$.
These estimates can be directly used in~\Cref{byz-sgd-ub-convex} 
to provide tighter bounds under additional assumptions of bounded heterogeneity and bounded variance.

\begin{lemma}\label{refined-lemmaconcentration}
    Consider the setting described in~\Cref{II-problem-formulation} under Byzantine attacks with Assumptions~\labelcref{bounded_heterogeneity} and~\labelcref{bounded_variance}. 
    Let $\mc{A} \in \{ \mathrm{GD}, \mathrm{SGD} \}$, with a $(f, \kappa)$-robust aggregation rule, for $T \in \mb{N}^*$ iterations. We have, for every $t \in \{1, \ldots, T-1\}$,
    \[ 
        \mb{E}_{\mc{A}} \| \Sigma_{\mc{H}, t} \|_{\operatorname{sp}}
        = \mb{E}_{\mc{A}}[ \lambda_{\max}(\frac{1}{|\mc{H}|} \sum_{i \in \mc{H}} (g^{(i)}_t-\overline{g}_t) {(g^{(i)}_t-\overline{g}_t)}^\intercal) ] 
        \leq \sigma^2 + G^2.
    \]
\end{lemma}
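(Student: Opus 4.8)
The plan is to mirror the proof of Lemma~\ref{lemmaconcentration}, but to centre the honest gradients at the full-batch honest gradient $\nabla\widehat R_{\mc H}(\theta_t)$ rather than at the empirical mean $\overline g_t$. This centring choice is what makes the noise term and the heterogeneity term decouple under the conditional expectation, and it is precisely what avoids the lossy three-way split $\|x+y+z\|_2^2\le 3(\|x\|_2^2+\|y\|_2^2+\|z\|_2^2)$ used for the trace bound in Lemma~\ref{refined-lemmaconcentration-trace}, yielding the clean constant $\sigma^2+G^2$ instead of $3(\sigma^2+G^2)$.

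First I would fix $t$ and condition on all the randomness of $\mc A$ up to and including the production of $\theta_t$, writing $\mb E_t[\cdot]$ for the resulting conditional expectation; under this conditioning the full-batch gradients $\nabla\widehat R_i(\theta_t)$, hence $h_i\vcentcolon=\nabla\widehat R_i(\theta_t)-\nabla\widehat R_{\mc H}(\theta_t)$, are deterministic. For $\mathrm{GD}$ one has $g_t^{(i)}=\nabla\widehat R_i(\theta_t)$; for $\mathrm{SGD}$, $g_t^{(i)}=\nabla\ell(\theta_t;z_t^{(i)})$ with $z_t^{(i)}$ uniform on $\mc D_i$ and independent across $i\in\mc H$, so that $\xi_i\vcentcolon=g_t^{(i)}-\nabla\widehat R_i(\theta_t)$ is unbiased, $\mb E_t[\xi_i]=0$, and $\mb E_t\|\xi_i\|_2^2=\frac1m\sum_{x\in\mc D_i}\|\nabla\ell(\theta_t;x)-\nabla\widehat R_i(\theta_t)\|_2^2\le\sigma^2$ by Assumption~\ref{bounded_variance}. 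Next I would use the elementary identity $\frac1{|\mc H|}\sum_{i\in\mc H}(g_t^{(i)}-a)(g_t^{(i)}-a)^\intercal=\Sigma_{\mc H,t}+(\overline g_t-a)(\overline g_t-a)^\intercal$, valid for any $a\in\mb R^d$ (the cross terms vanish because $\sum_{i\in\mc H}(g_t^{(i)}-\overline g_t)=0$), to get, with $a=\nabla\widehat R_{\mc H}(\theta_t)$, the PSD domination $\Sigma_{\mc H,t}\preceq\frac1{|\mc H|}\sum_{i\in\mc H}(\xi_i+h_i)(\xi_i+h_i)^\intercal$. Therefore
\[
    \|\Sigma_{\mc H,t}\|_{sp}\le\lambda_{\max}\Big(\tfrac1{|\mc H|}\textstyle\sum_{i\in\mc H}(\xi_i+h_i)(\xi_i+h_i)^\intercal\Big)=\sup_{\|v\|_2\le1}\tfrac1{|\mc H|}\textstyle\sum_{i\in\mc H}\langle v,\xi_i+h_i\rangle^2\le\tfrac1{|\mc H|}\textstyle\sum_{i\in\mc H}\|\xi_i+h_i\|_2^2,
\]
the last step being Cauchy--Schwarz. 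Applying $\mb E_t$, expanding the square, and noting the cross term is $\tfrac2{|\mc H|}\sum_{i\in\mc H}\langle\mb E_t[\xi_i],h_i\rangle=0$, one is left with $\mb E_t\|\Sigma_{\mc H,t}\|_{sp}\le\frac1{|\mc H|}\sum_{i\in\mc H}\mb E_t\|\xi_i\|_2^2+\frac1{|\mc H|}\sum_{i\in\mc H}\|h_i\|_2^2\le\sigma^2+G^2$ by Assumptions~\ref{bounded_variance} and~\ref{bounded_heterogeneity} (for $\mathrm{GD}$ the first sum is simply $0$). The tower rule over the conditioning then removes $\mb E_t$ and gives the stated bound.

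I do not expect a serious obstacle: the only point that requires attention is to centre at $\nabla\widehat R_{\mc H}(\theta_t)$ so that the noise--heterogeneity cross term cancels once $\mb E_t$ is taken, using the unbiasedness of the stochastic gradients, rather than invoking a generic Young/triangle split that would degrade the constant to $3$; after that the argument is the same one-line spectral estimate ($\lambda_{\max}$ of a mean of rank-one PSD matrices bounded via Cauchy--Schwarz by the average squared norm) already used in Lemma~\ref{lemmaconcentration}.
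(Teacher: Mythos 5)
Your proof is correct and rests on the same core decomposition as the paper's: split each honest gradient into the stochastic-noise part $\xi_i=g_t^{(i)}-\nabla\widehat R_i(\theta_t)$ and the heterogeneity part $h_i=\nabla\widehat R_i(\theta_t)-\nabla\widehat R_{\mc H}(\theta_t)$, kill the cross term by conditional unbiasedness, and control the two remaining terms by Assumptions~\ref{bounded_variance} and~\ref{bounded_heterogeneity}. Where you genuinely diverge is in the order of operations. The paper keeps the supremum over unit vectors $v$ inside the expectation, writes $\langle v,g_t^{(i)}-\overline g_t\rangle^2$ as heterogeneity plus noise plus a cross term, and then needs an $\mb E[\sup_v\langle v,Mv\rangle]\le 9^d\sup_v\mb E[\langle v,Mv\rangle]$ covering-type inequality (Lemma D.4 of~\cite{pmlr-v202-allouah23a}) to argue that the cross term vanishes in expectation; it also separately uses the variance-around-the-mean identity to drop the $\nabla\widehat R_{\mc H}(\theta_t)-\overline g_t$ recentering. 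You instead establish everything deterministically first: the PSD domination $\Sigma_{\mc H,t}\preceq\frac1{|\mc H|}\sum_i(\xi_i+h_i)(\xi_i+h_i)^\intercal$ obtained by recentering at $\nabla\widehat R_{\mc H}(\theta_t)$, followed by the spectral-to-trace bound $\lambda_{\max}\le\frac1{|\mc H|}\sum_i\|\xi_i+h_i\|_2^2$, and only then take the conditional expectation, at which point the cross term is a plain scalar $\langle\mb E_t[\xi_i],h_i\rangle=0$. This sidesteps the sup--expectation interchange entirely, avoids the dimension-dependent $9^d$ factor (harmless in the paper only because the quantity it multiplies is exactly zero), and yields the same constant $\sigma^2+G^2$; note that the independence of the $\xi_i$ across workers that you mention is never actually needed, only their conditional unbiasedness.
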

\begin{proof}
    In the following, we derive a bound for the stochastic case $\mc{A} = \mathrm{SGD}$; the result naturally extends to the deterministic setting $\mc{A} = \mathrm{GD}$ as well.
    The proof of the refined upper bound leverages the ability to compute the expectation of the controllable noise—specifically, the noise resulting from random sample selection $J^{(i)}_t$, $i \in \mc{H}$, $t \in \{0,\ldots,T-1\}$—while conditioning on the uncontrollable noise introduced by Byzantine workers.
    Since the bound is independent of the latter, we establish the result by applying the law of total expectation.
    Let $t \in \{0, \ldots, T-1\}$,
    \[ 
        \Delta_t 
        = \lambda_{\max}(\frac{1}{|\mc{H}|} \sum_{i \in \mc{H}} (g^{(i)}_t-\overline{g}_t) {(g^{(i)}_t-\overline{g}_t)}^\intercal) 
        = \sup_{\|v\|_2 \leq 1} \frac{1}{|\mc{H}|} \sum_{i \in \mc{H}} \langle v , g^{(i)}_t-\overline{g}_t \rangle ^2.
    \]
    We decompose,
    \[ 
        g^{(i)}_t-\overline{g}_t  
        = g^{(i)}_t - \nabla \widehat{R}_i(\theta_t) 
        + \nabla \widehat{R}_i(\theta_t) - \nabla \widehat{R}_{\mc{H}}(\theta_t) 
        + \nabla \widehat{R}_{\mc{H}}(\theta_t) - \overline{g}_t,
    \]
    so that we have
    \begin{multline*}
        \langle v , g^{(i)}_t - \overline{g}_t \rangle^2 
        = \langle v , \nabla \widehat{R}_i(\theta_t) - \nabla \widehat{R}_{\mc{H}}(\theta_t) \rangle^2 
        + \langle v , g^{(i)}_t - \nabla \widehat{R}_i(\theta_t) 
        + \nabla \widehat{R}_{\mc{H}}(\theta_t) - \overline{g}_t \rangle^2 \\ 
        + 2 \langle v , g^{(i)}_t - \nabla \widehat{R}_i(\theta_t) 
        + \nabla \widehat{R}_{\mc{H}}(\theta_t) - \overline{g}_t \rangle \langle v , \nabla \widehat{R}_i(\theta_t) - \nabla \widehat{R}_{\mc{H}}(\theta_t) \rangle.
    \end{multline*}
    Upon averaging, taking the supremum over the unit ball, and then taking total expectations, we get
    \begin{multline}\label{refined-concentrationintermediate}
    \mb{E} \left[ \sup_{\|v\|_2 \leq 1} \frac{1}{|\mc{H}|} \sum_{i \in \mc{H}} \langle v , g^{(i)}_t - \overline{g}_t \rangle^2 \right]
    \leq \mb{E} \left[ \sup_{\|v\|_2 \leq 1} \frac{1}{|\mc{H}|} \sum_{i \in \mc{H}}\langle v , \nabla \widehat{R}_i(\theta_t) - \nabla \widehat{R}_{\mc{H}}(\theta_t) \rangle^2 \right] \\
    + \mb{E} \left[ \sup_{\|v\|_2 \leq 1} \frac{1}{|\mc{H}|} \sum_{i \in \mc{H}}\langle v , g^{(i)}_t - \nabla \widehat{R}_i(\theta_t) + \nabla \widehat{R}_{\mc{H}}(\theta_t) - \overline{g}_t \rangle^2 \right]  \\ 
    + 2 \mb{E} \left[ \sup_{\|v\|_2 \leq 1} \frac{1}{|\mc{H}|} \sum_{i \in \mc{H}} \langle v , g^{(i)}_t - \nabla \widehat{R}_i(\theta_t) + \nabla \widehat{R}_{\mc{H}}(\theta_t) - \overline{g}_t \rangle \langle v , \nabla \widehat{R}_i(\theta_t) - \nabla \widehat{R}_{\mc{H}}(\theta_t) \rangle \right].
    \end{multline}
    We show that the last term in~\eqref{refined-concentrationintermediate} is non-positive. In fact, with $M = N + {N}^\intercal$,
    \[
        N = \sum_{i \in \mc{H}} (g^{(i)}_t - \nabla \widehat{R}_i(\theta_t) + \nabla \widehat{R}_{\mc{H}}(\theta_t) - \overline{g}_t) {(\nabla \widehat{R}_i(\theta_t) - \nabla \widehat{R}_{\mc{H}}(\theta_t))}^\intercal,
    \]
    and using Lemma D.4 from~\citet{pmlr-v202-allouah23a},
    \begin{multline*}
        \mb{E} \left[ \sup_{\|v\|_2 \leq 1} 2 \sum_{i \in \mc{H}} \langle v , g^{(i)}_t - \nabla \widehat{R}_i(\theta_t) + \nabla \widehat{R}_{\mc{H}}(\theta_t) - \overline{g}_t \rangle \langle v , \nabla \widehat{R}_i(\theta_t) - \nabla \widehat{R}_{\mc{H}}(\theta_t) \rangle \right] \\
        = \mb{E} \left[ \sup_{\|v\|_2 \leq 1} \langle v , Mv \rangle \right] 
        \leq 9^d \sup_{\|v\|_2 \leq 1} \mb{E} \left[ \langle  v , Mv \rangle \right] \\
        = 9^d \sup_{\|v\|_2 \leq 1} 2 \mb{E} \sum_{i \in \mc{H}} \langle v , \underbrace{\mb{E}[ g^{(i)}_t - \nabla \widehat{R}_i(\theta_t) + \nabla \widehat{R}_{\mc{H}}(\theta_t) - \overline{g}_t | \theta_t ]}_{=0} \rangle \langle v , \nabla \widehat{R}_i(\theta_t) - \nabla \widehat{R}_{\mc{H}}(\theta_t) \rangle.
    \end{multline*}
    We now bound the second term in~\eqref{refined-concentrationintermediate},
    \begin{align*}
        \mb{E} \left[ \sup_{\|v\|_2 \leq 1} \frac{1}{|\mc{H}|} \sum_{i \in \mc{H}}\langle v , g^{(i)}_t - \nabla \widehat{R}_i(\theta_t) + \nabla \widehat{R}_{\mc{H}}(\theta_t) - \overline{g}_t \rangle^2 \right]
        & \leq \mb{E} \left[ \sup_{\|v\|_2 \leq 1} \frac{1}{|\mc{H}|} \sum_{i \in \mc{H}} \langle v , g^{(i)}_t - \nabla \widehat{R}_i(\theta_t) \rangle^2 \right] \\
        & \leq \mb{E} \left[ \frac{1}{|\mc{H}|} \sum_{i \in \mc{H}} \underbrace{\mb{E}_{J^{(i)}_t}[\| g^{(i)}_t - \nabla \widehat{R}_i(\theta_t) \|^2_2] }_{\leq \sigma^2} \right].
    \end{align*}
    Finally, we bound the first term in~\eqref{refined-concentrationintermediate} with~\Cref{bounded_heterogeneity},
    \begin{align*}
        \mb{E} \sup_{\|v\|_2 \leq 1} \frac{1}{|\mc{H}|} \sum_{i \in \mc{H}}\langle v , \nabla \widehat{R}_i(\theta_t) - \nabla \widehat{R}_{\mc{H}}(\theta_t) \rangle^2  
        & \leq \sup_{\theta\in\Theta}\sup_{\|v\|_2 \leq 1} \frac{1}{|\mc{H}|} \sum_{i \in \mc{H}}\langle v , \nabla \widehat{R}_i(\theta) - \nabla \widehat{R}_{\mc{H}}(\theta) \rangle^2 \\
        & = \sup_{\theta\in\Theta}\frac{1}{|\mc{H}|} \sum_{i \in \mc{H}} \| \nabla \widehat{R}_i(\theta) - \nabla \widehat{R}_{\mc{H}}(\theta) \|^2_2
          \leq G^2. \qedhere
    \end{align*}
\end{proof}

\section{High-Probability Excess Risk Bounds for Robust Distributed Learning}\label{IV-excessrisk}

In this section, we establish high-probability excess risk bounds (cf.\ inequality~\eqref{gen-error-ineq}) under smooth and strongly convex assumptions for robust distributed learning algorithms, providing insights into their generalization performance.
Specifically, we combine a deterministic optimization error bound—adapted directly from prior work from~\citet{farhadkhani2024relevance}—with our high-probability stability bound.

High probability bound under uniform stability framework are derived as follows. The change of a single training sample has a small effect on a uniformly stable algorithm.
This broadly implies low variance in the difference between empirical and population risk.
If this difference also has low expectation, the empirical risk should accurately approximate the population risk~\citep{Bousquet2002StabilityAG}.
In fact, we directly adapt the results from~\citet{pmlr-v125-bousquet20b} to our framework and state the following high-probability bound.
\begin{lemma}\label{unifrom-stability-high-probability}
    Consider the setting described in~\Cref{II-problem-formulation}, under either Byzantine failures or data poisoning.
    Let $\mc{A}$ an $\varepsilon$-uniformly stable. Assume $\forall z \in \mc{Z}, \ell(\cdot, z) \leq \ell_\infty$, 
    then we have that for any \(0 < \delta < 1\), with probability at least \(1-\delta\),
    \begin{multline*} 
        | R_{\mathcal{H}}(\mc{A}(\mc{S})) -  \widehat{R}_{\mathcal{H}}(\mc{A}(\mc{S})) | 
        \in \mc{O} \Bigg( 
            \min \bigg\{
            \varepsilon \ln(|\mc{H}|m)\ln(\frac{1}{\delta}) + \frac{\ell_\infty}{\sqrt{|\mc{H}|m}}\sqrt{\ln(\frac{1}{\delta})},
            \left( \sqrt{\varepsilon \ell_\infty} + \frac{\ell_\infty}{\sqrt{|\mc{H}|m}} \right) \sqrt{\ln(\frac{1}{\delta})}
            \bigg\}
        \Bigg)
    \end{multline*}
\end{lemma}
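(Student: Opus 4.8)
The plan is to deduce the bound from the high-probability generalization theory for $\varepsilon$-uniformly stable randomized algorithms of~\cite{lugosi2022generalization} (in the tradition of~\cite{Bousquet2002StabilityAG}), after recasting the honest training set $\mc{S} = \cup_{i\in\mc{H}}\mc{D}_i$ as a vector of $N \vcentcolon= |\mc{H}|m = (n-f)m$ independent coordinates: within each honest dataset the $m$ points are i.i.d.\ from $p_i$, and distinct honest workers sample independently. The object to concentrate is $g(\mc{S}) \vcentcolon= R_{\mc{H}}(\mc{A}(\mc{S})) - \widehat{R}_{\mc{H}}(\mc{A}(\mc{S}))$, a function of these $N$ coordinates and of the internal randomness of $\mc{A}$.

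The first step supplies the two standard inputs. The bias is controlled by~\Cref{lemmagen1}, which gives $|\mb{E}_{\mc{S},\mc{A}}[g(\mc{S})]| \leq \varepsilon$. For the coordinatewise sensitivity, fix neighboring honest datasets $\mc{S},\mc{S}'$ differing in a single sample: the $\varepsilon$-uniform stability of $\mc{A}$ (\Cref{uniform-stability-def}), used pointwise in $z$ and then integrated against the honest distributions, bounds $|\mb{E}_{\mc{A}}[R_{\mc{H}}(\mc{A}(\mc{S})) - R_{\mc{H}}(\mc{A}(\mc{S}'))]|$ by $\varepsilon$, while the empirical terms differ by at most $\ell_\infty/N$ because each sample enters $\widehat{R}_{\mc{H}}$ with weight $1/N$ and $0 \leq \ell \leq \ell_\infty$. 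Hence the deterministic map $\mc{S}\mapsto \bar g(\mc{S}) \vcentcolon= \mb{E}_{\mc{A}}[g(\mc{S})]$ has bounded differences of order $\varepsilon + \ell_\infty/N$; and, more importantly, $\varepsilon$-uniform stability together with the range bound also controls the relevant second moments (sum of squared perturbations, pointwise products) entering the sharp tail estimate, with the randomness of $\mc{A}$ contributing only its own bounded range $\ell_\infty$.

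With these ingredients in place I would invoke~\cite{lugosi2022generalization} as a black box. Its refined moment-recursion bound (of the Feldman--Vondr\'ak / Bousquet--Klochkov--Zhivotovskiy type) yields, with probability at least $1-\delta$, a deviation of order $\varepsilon\ln(N)\ln(1/\delta) + (\ell_\infty/\sqrt{N})\sqrt{\ln(1/\delta)}$, i.e.\ the first branch of the stated minimum, and its coarser second-moment route yields the weaker branch $(\sqrt{\varepsilon\ell_\infty} + \ell_\infty/\sqrt{N})\sqrt{\ln(1/\delta)}$; combining either deviation bound with $|\mb{E}[g]|\le\varepsilon$ and taking the smaller of the two ranges gives the claim. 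The main obstacle is that the honest samples are independent but \emph{not} identically distributed, whereas these high-probability stability bounds are customarily stated for i.i.d.\ data; I expect this to be immaterial, since McDiarmid, Efron--Stein and the underlying moment recursions use only independence of the $N$ coordinates and the bounded-difference property — never identical distribution — but the careful step is to check that the precise inequalities borrowed from~\cite{lugosi2022generalization} remain valid verbatim in this per-coordinate, non-identically-distributed setting, and to carry the constants cleanly through the $\min$ as well as through the averaging over the internal randomness of $\mc{A}$.
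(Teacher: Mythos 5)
Your proposal matches the paper's approach: the paper offers no proof of this lemma beyond stating that it ``directly adapts the results from''~\cite{lugosi2022generalization}, which is exactly the black-box invocation you describe (bias controlled by \Cref{lemmagen1}, per-coordinate sensitivity of order $\varepsilon + \ell_\infty/((n-f)m)$ over the $(n-f)m$ independent honest samples, and the two branches of the minimum coming from the moment-recursion and second-moment routes respectively). Your explicit flagging of the independent-but-not-identically-distributed nature of the honest samples is a point the paper does not even record, and you correctly identify it as the only step requiring verification when transplanting the cited inequalities.
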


\paragraph{Data poisoning.} 
By adapting the optimization bound technique from~\citet[Theorem 4]{farhadkhani2024relevance} to $\mathrm{GD}$ algorithm with $\mathrm{SMEA}$ under data poisoning, we show that for a constant number of steps---specifically,
when the step count satisfies $T \geq | \ln(4\frac{f+1/m}{n-f} + \kappa) / \ln(1-\frac{\mu}{L}) | \in \mc{O}(1)$---the expected excess risk is of the order of
\[
    \frac{C^2}{\mu} \mc{O}\left( \frac{f}{n-2f} + \frac{1}{m(n-2f)} \right).
\]

Additionally, we provide a high-probability bound on the excess risk using~\Cref{unifrom-stability-high-probability}. 
Specifically, for any \(0 < \delta < 1\), with probability at least \(1-\delta\) the excess risk lies within
\begin{multline*} 
    \mc{O} \Bigg( 
        \min \bigg\{
        \frac{f}{n-2f}C^2 + \left( \frac{f}{n-2f} + \frac{1}{m(n-2f)} \right) C^2 \ln((n-f)m)\ln(\frac{1}{\delta}) + \frac{\ell_\infty}{\sqrt{(n-f)m}}\sqrt{\ln(\frac{1}{\delta})}, \\
        \frac{f}{n-2f}C^2 + \left( C \sqrt{ \ell_\infty \left( \frac{f}{n-2f} + \frac{1}{m(n-2f)} \right)} + \frac{\ell_\infty}{\sqrt{(n-f)m}} \right) \sqrt{\ln(\frac{1}{\delta})}
        \bigg\}
    \Bigg)
\end{multline*}

In the absence of a minimax statistical lower bound for the heterogeneous case—and acknowledging that the comparison is imperfect due to differing assumptions—it remains informative to compare our resulting bound with those from~\citet{yin2018byzantine, zhu2023byzantinerobustfederatedlearningoptimal}, as well as~\citet[Proposition 1]{pmlr-v206-allouah23a}.

\paragraph{Byzantine failures.} Under Byzantine failures, the derivation proceeds similarly and yields an analogous bound, with a dependence on $\sqrt{\kappa}$ replacing the $\frac{f}{n - 2f}$ term in the uniform stability bound.

\section{Additional Discussions}\label{additional-discussions}

\paragraph{On the relevance of worst-case bounds to practical learning scenarios.} 

Although variations in attack strength and implementation may contribute to the observed gap in generalization performance between the two threat models, our results indicate that this gap cannot be explained solely by the fact that data poisoning attacks used in practice are often empirically weak (suboptimal within their threat model), whereas Byzantine attacks used in practice tend to be empirically strong (near-optimal). Rather, the gap also reflects an inherent difference in harm capacity (\Cref{V-experimental}). 

We recall that a gap in test accuracy has been consistently reported in prior work on realistic datasets, further motivating our investigation. 
We list below examples of such work.
\begin{enumerate}
    \item[\textit{(i)}] Table 2 from~\citet{pmlr-v206-allouah23a} and Figure 2, 4 and 6 from~\citet{pmlr-v202-allouah23a}, where $\mathrm{ALIE}$, $\mathrm{FOE}$, and $\mathrm{SF}$ simulate Byzantine failures, and $\mathrm{LF}$ or $\mathrm{Mimic}$ simulate data poisoning. 
    \item[\textit{(ii)}] Figure 1 from~\citet{karimireddy2022byzantinerobust}, where $\mathrm{LF}$ simulates data poisoning attacks and $\mathrm{IPM}$, $\mathrm{ALIE}$, and $\mathrm{BF}$ simulate Byzantine attacks.
    \item[\textit{(iii)}] Table 2 and 5 from~\citet{247652-fang-pois}, where $\mathrm{LabelFlip}$, $\mathrm{SingleWorker}$ and $\mathrm{Uniform}$ simulate data poisoning attacks and $\mathrm{Partial}$ and $\mathrm{Full}$ simulate Byzantine attacks. 
    \item[\textit{(vi)}] Figure 2 from~\citet{allouah2025towards}, where $\mathrm{Label Flipping}$ simulates data poisoning attacks and $\mathrm{Sign Flipping}$ simulates Byzantine attacks.
\end{enumerate}
Before our work, it was unclear whether this empirically observed gap reflects a fundamental difference in adversarial power or merely the suboptimality of practical attacks. 
Prior theory failed to explain this observed gap~\citep{farhadkhani2024relevance}. 
Although designing new benchmarks and stronger attacks is a worthwhile direction, relying solely on empirical validation leads to an endless cycle of scenario-specific attacks without establishing a universal distinction. 
This underscores the need for a formal theoretical framework. 
We address this gap by developing a refined theory that incorporates generalization error, providing a principled explanation for the observed phenomenon.


\paragraph{About the regime $f < \frac{n}{3}$.}

We derived a lower bound of $\Omega(\frac{f}{n-2f})$ for the Byzantine setting when $f < \frac{n}{3}$ (see~\Cref{lb-smea-gd-linear-byzantine} part (ii) as written in~\Cref{lowerbound-byzantine}). 
Although this differs from the $\Omega(\frac{f}{n-f})$ bound obtained under data poisoning, the two bounds are within a constant factor. 
For clarity and brevity, we omitted this result from the main text.

We do not prove the tightness of this bound, but we conjecture that it is tight for $\operatorname{GD}$ with $\operatorname{SMEA}$. 
We provide the following intuition, partially outlined in~\Cref{appD-numerical-experiment} point (ii). 
The $\operatorname{SMEA}$ aggregation rule selects the subset of size $n-f$ with minimal variance. 
Therefore, Byzantine workers' capacity to skew the output is capped by this minimal variance. 
To achieve the largest possible lower bound, we have to maximize the variance of the honest subsets. 
The maximal variance for a set of $n-f$ values bounded by $C > 0$ is achieved when half the values are $C$ and the other half are $-C$ (yielding a variance of $C^2$). 
Yet, when $f < \frac{n}{3}$, the inequality $\frac{n-f}{2} + f < n-f$ dictates that any selected subset containing the $f$ Byzantine updates \textit{must} inevitably include a mix of both positive and negative honest values. 
Because of this forced mixture, Byzantine workers must limit the magnitude of their attacks to avoid being filtered out by $\operatorname{SMEA}$, in contrast to the case $f\geq\frac{n}{3}$, where no such forced mixture exists.

\paragraph{Does our conclusions would fundamentally change for other popular first-order optimization methods (e.g., momentum, variance reduction, multiple local steps)?} 

We believe our proof techniques are broadly applicable to robust first-order optimization, so the fundamental results should remain consistent. 
Quantitatively characterizing this phenomenon for specific algorithms is an interesting direction for future work. 
More concretely, similar recursions can be derived for stochastic heavy-ball momentum, using the proof techniques of \Cref{stability-analysis-classic-byzantine} (under Byzantine failures), and the refined approach in the proof of \Cref{th-poisoning-convex-smea-main-text} (under data poisoning). 
These derivations reveal a similar gap as in robust distributed $\mathrm{GD}$ and $\mathrm{SGD}$:\ a $\sqrt{\kappa}$ dependency appears under Byzantine failures, while the tighter bound $\frac{f}{n-f}$ arises under data poisoning. 
However, fully analyzing these more complex algorithms introduces additional technical challenges. 
To maintain clarity and focus, we chose to highlight the $\mathrm{GD}$/$\mathrm{SGD}$ case, which already illustrates the core insights.

Many practical algorithms perform multiple local $\mathrm{SGD}$ steps to reduce communication overhead (e.g., $\mathrm{FedAvg}$). 
However, our focus in this work is not on practical communication efficiency, but rather on understanding the generalization behavior in the presence of misbehaving agents in the most direct way. 
We thus chose to analyze the setting where each worker performs a single gradient step which provides a clearer and more focused framework to present and understand the fundamental algorithmic stability properties. 
We believe our insights extend to the multi-step case, and see no reason why the general conclusion of our paper would be impacted. 
Exploring this formally is an interesting direction for future work.

\paragraph{Can we relax the convexity, Lipschitz smoothness or Lipschitz continuity assumptions?}\

\textit{Relaxing the convex assumption}: First, the $\mu$-strongly convex assumption can be relaxed to the $\mu$-Polyak–Łojasiewicz~\citep{karimi2016linear-PL} under Byzantine failures using techniques from~\citet{pmlr-v80-charles18a}.
Then, it is possible to derive uniform algorithmic stability upper-bounds for the nonconvex setting under both threat models through our proof techniques (cf.~\Cref{app:nonconvex}).

\textit{Relaxing the $L$-Lipschitz smoothness assumption}: This is indeed an interesting question.~\citet{bassily2020stability} provided a tight analysis of the uniform algorithmic stability of (non robust) GD/SGD under nonsmooth and convex loss functions. 
Combining their proof techniques with ours (cf.~\Cref{stability-analysis-classic-byzantine} under Byzantine failures and the arguments in~\Cref{th-poisoning-convex-smea-main-text} under data poisoning) should be sufficient to extend our results. 

\textit{Relaxing the $C$-Lipschitz continuous assumption}: It can be relaxed by analyzing a data-dependent notion of stability known as \textit{on-average algorithmic stability}~\citep{lei2020fine}. 
Although our focus is on \textit{worst-case} relative harmfulness under the two threat models, on-average algorithmic stability offers a promising direction for future work. 
Our results highlight this path by showing that algorithmic stability is a meaningful tool for analyzing robust distributed algorithms.

\paragraph{Extending our results to other aggregation rules.}

We have focused on $\operatorname{SMEA}$ as it is an optimal high-dimensional robust aggregation rule for the optimization of training errors~\citep{pmlr-v202-allouah23a}. 
Nevertheless, we discuss the possibility of extending our results to other rules.
\begin{itemize}
    \item \textit{Extending upper bounds:}~\Cref{byz-sgd-ub-convex} holds for any $(f,\kappa)$-robust aggregation rule. 
    Under data poisoning, while the specific analytical technique depends on the aggregation rule, our proof strategy remains broadly applicable. 
    As noted in the paragraph after~\Cref{th-poisoning-convex-smea-main-text}, it holds for any rule that averages $n-f$ gradients, independent of the selection mechanism. 
    As an other example, we have derived a similar upper bound using $\operatorname{CWTM}$, but omitted it from the manuscript for brevity.
    \item \textit{Extending lower bounds:} The datasets, loss functions, and attacks that we have constructed for our lower bounds are tailored for $\operatorname{SMEA}$. 
    While extending it to other rules is non-trivial, we believe our proof techniques are directly applicable to the related Covariance bound-Agnostic Filter~\cite{allouah2025towards}, as both rules select vectors according to a variance minimization criterion. 
    Designing novel attacks for fundamentally different rules, such as CWTM, remains a challenging but promising direction for future work motivated by our findings.
\end{itemize}
Together, these results indicate that our theoretical framework is informative and generalizable beyond $\operatorname{SMEA}$. 
Detailed extensions of the constructions to other aggregation rules are left for future work.

\paragraph{Practical implications.}


\textit{Cryptographic tools as a mitigation.} 
As explained in our conclusion, zero-knowledge proofs (ZKPs) mitigate Byzantine failures by verifying that each worker’s input data lies within a valid range and that model updates are correctly computed from their committed dataset. 
In effect, ZKPs reduce Byzantine attacks to data poisoning. 
Since our theoretical analysis shows that algorithmic stability improves under data poisoning, this formally implies better generalization when ZKPs are deployed against Byzantine failures. 
To our knowledge, our work provides the first theoretical justification for using ZKPs in Byzantine-robust learning. 
While implementing ZKPs is beyond the scope of this paper, our theory offers strong evidence supporting their practical use.

\textit{Design of new robust aggregations.} 
By proving that algorithmic stability (and, in turn, generalization) fundamentally differs between the two threat models, we provide the first theoretical justification for tailoring defenses to specific threats.
\begin{itemize}
    \item[\textit{(i)}] \textit{If the system is susceptible to Byzantine failures}: 
    Our proposed Byzantine attack causes the ouptut of $\operatorname{SMEA}$ to exceed the Lipschitz constant of the loss function, a scenario not possible under data poisoning. 
    This highlights the relevance of strategies that cap the output norm, such as Adaptive Gradient Clipping~\citep{allouah2024adaptiveclipping}, to improve algorithmic stability under Byzantine attacks. Analyzing the stability impact of such algorithmic modifications is nontrivial and represents a promising direction for future work. Alternatively, when computationally feasible, zero-knowledge proofs (discussed above) provide an off-the-shelf mitigation.
    \item[\textit{(ii)}] \textit{If the system is only susceptible to data poisoning}: 
    Future work should target robust aggregation rules that preserve input gradient regularity, further widening the stability gap. 
\end{itemize}


\paragraph{Discussion on other threat model formulations.} 

Other threat models could be investigated in future work, building upon the proof techniques developed in our work. To offer some preliminary context, we briefly discuss two relevant threat model formulations below.

\textit{1. Locally-poisonous threat model.} The recent work by~\citet{farhadkhani2024relevance}, which studies the locally-poisonous threat model, indicates that this threat model tends to have a fundamentally lower impact on optimization error compared to the Byzantine failures and data poisoning threat models considered in our paper. Studying stability and generalization under locally-poisonous threat model is an interesting future work.\looseness=-1

\textit{2. Backdoor (or targeted) attacks.} While important, this threat model differs in goals and evaluation from our study of generalization error under untargeted attacks. That said, they can be seen as special cases of Byzantine failures and data poisoning. Extending our theory to handle targeted attacks like backdoors remains an interesting direction for future work.

\end{document}